\definecolor{Gray}{gray}{0.85}
\definecolor{yxc}{RGB}{255,0,0}
\definecolor{yjc}{RGB}{125,0,0}
\definecolor{ytw}{RGB}{255,69,0}
\definecolor{gen}{RGB}{0,0,200}
\definecolor{yly}{RGB}{76,153,0}
\newcommand{\yxc}[1]{\textcolor{yxc}{[YXC: #1]}}
\DeclareMathOperator{\ind}{\mathds{1}}  % Indicator
\newcommand{\mymid}{\,|\,}
\newcommand{\com}{\mathsf{aug}}
\newcommand{\cS}{\mathcal{S}}
\newcommand{\cA}{\mathcal{A}}
\newcommand{\piexploreh}{\pi^{\mathsf{explore},h}}
\newcommand{\behavior}{\widehat{\mu}_{\mathsf{b}}}
\newcommand{\cb}{c_{\mathsf{b}}}
\title{Minimax-Optimal Reward-Agnostic Exploration \\ in Reinforcement Learning\footnotetext{Accepted for presentation at the
Conference on Learning Theory (COLT) 2024.}}
\author{Gen Li\thanks{Department of Statistics and Data Science, Wharton School, University
of Pennsylvania, Philadelphia, PA 19104, USA; Email: \texttt{\{ligen,yuxinc\}@wharton.upenn.edu}.} \and  Yuling Yan\thanks{Department of Operations Research and Financial Engineering, Princeton
University, Princeton, NJ 08544, USA; Email: \texttt{\{yulingy,jqfan\}@princeton.edu}.} \and Yuxin Chen\footnotemark[1] \and Jianqing Fan\footnotemark[2]}
\date{April 2023; ~Revised: May 2024}
\begin{document}

\theoremstyle{plain} \newtheorem{lemma}{\textbf{Lemma}}\newtheorem{proposition}{\textbf{Proposition}}\newtheorem{theorem}{\textbf{Theorem}}\newtheorem{assumption}{\textbf{Assumption}}

\theoremstyle{remark}\newtheorem{remark}{\textbf{Remark}}

\maketitle 

\begin{abstract}

This paper studies reward-agnostic exploration in reinforcement learning (RL) --- a scenario where the learner is unware of the reward functions during the exploration stage ---   
and designs an algorithm that improves over the state of the art. 
More precisely, consider a finite-horizon inhomogeneous Markov decision process with $S$ states, $A$ actions, and horizon length $H$, 
and suppose that there are no more than a polynomial number of given reward functions of interest. 
By collecting an order of 
\begin{align*}
	\frac{SAH^3}{\varepsilon^2} \text{ sample episodes ~~(up to log factor)} 
\end{align*}
without guidance of the reward information, 
our algorithm is able to find $\varepsilon$-optimal policies for all these reward functions, provided that $\varepsilon$ is sufficiently small. 
This forms the first reward-agnostic exploration scheme in this context that achieves provable minimax optimality. 
Furthermore, once the sample size exceeds $\frac{S^2AH^3}{\varepsilon^2}$ episodes (up to log factor), our algorithm is able to yield $\varepsilon$ accuracy for arbitrarily many reward functions (even when they are adversarially designed),  
a task commonly dubbed as ``reward-free exploration.''  
The novelty of our algorithm design draws on insights from offline RL: the exploration scheme attempts to maximize a critical reward-agnostic quantity that dictates the performance of offline RL, 
while the policy learning paradigm leverages ideas from sample-optimal offline RL paradigms.

\end{abstract}

\noindent \textbf{Keywords:} reward-agnostic exploration, reward-free exploration, offline reinforcement learning, sample complexity, minimax optimality

\setcounter{tocdepth}{2}
\tableofcontents

\section{Introduction}

Understanding the efficacy and efficiency of exploration 
is one of the central research lines in statistical reinforcement learning (RL), 
dating back to earlier works like \citet{lai1985asymptotically} from the multi-armed bandit literature.  
In truth, a wealth of RL applications anticipates automated decision making without prior knowledge of the environment, 
for which exploration serves as an essential component to acquire and improve knowledge of the unknowns.   
With the aim to maximize information gain and enhance data efficiency, 
a large strand of recent works sought to balance the need to explore (i.e., gaining new information about under-explored states and actions)  with the desire for exploitation (i.e., taking advantage of what is currently viewed to be favorable) 
in a statistically efficient manner 
\citep{kearns2002near,auer2006logarithmic,jaksch2010near,brafman2002r,azar2017minimax,jin2018q,zhang2020almost,li2021breaking,simchowitz2019non,lattimore2020bandit,zanette2019tighter}.

\subsection{Reward-agnostic exploration} 

Noteworthily, a dominant fraction of existing exploration schemes required prior information about the reward function, or at least information about those immediate rewards assocciated with the sampled state-action pairs.  
This requirement, however, becomes ill-suited  for applications that do not come with pre-specified reward functions. 
For instance, in offline/batch RL, new learning tasks need to be performed on the basis of pre-collected data, 
but the reward function of this new task might not be readily available during the data pre-collection process;  
in online recommendation systems, when a recommendation is presented to a user,  her feedback might not be instantaneously revealed, but instead be received in a considerably delayed manner.  
There is also no shortage of applications where the reward functions are frequently updated  to meet multiple objectives or encourage different behavior, 
but it is clearly undesirable to recollect all data from scratch whenever the reward functions change.   
The breadth of these applications underscores the need to re-examine reward-agnostic exploration, namely, 
{
\setlist{rightmargin=\leftmargin}
\begin{itemize}
	\item[]	{\em How to explore the unknown environment in the most statistically efficient way even when the learner is unaware of the reward function(s) during the exploration stage? 
		Can we perform exploration just once but still achieve efficiency for multiple unseen reward functions simultaneously? }
\end{itemize}
}

%\paragraph{Challenges.} 
%
The design of reward-agnostic exploration or pure exploration is, as one would expect, substantially more challenging than the reward-aware counterpart. 
For the most part, existing reward-aware exploration schemes prioritize exploration of the ``important'' part of the environment in a task-specific manner, 
which could often be ill-suited to a new task with drastically different rewards.   
To address this issue,  a natural remedy  is to attempt exploration of all non-negligible states/actions thoroughly, 
in a way that is {\em simultaneously} adequate for all possible reward functions. 
This constitutes a key principle underlying a recent strand of works that goes by the name of {\em reward-free exploration (RFE)} \citep{jin2020reward,zhang2021near,kaufmann2021adaptive}. 
In the problem of reward-free exploration, one targets a pure exploration scheme such that the samples collected via this scheme 
are effective uniformly over all possible reward functions (including the ones that are adversarially designed). 
To facilitate more concrete discussion, imagine we are interested in a finite-horizon inhomogeneous Markov decision process (MDP) with $S$ states, $A$ actions, and horizon length $H$. 
The state-of-the-art results on this front \citep{menard2021fast} asserted that: to enable reward-free exploration for an arbitrary set of reward functions, it suffices to sample\footnote{For any two non-negative functions $f$ and $g$, the notation $f\big(S,A,H,\frac{1}{\varepsilon},\frac{1}{\delta}\big)= O\big( g\big(S,A,H,\frac{1}{\varepsilon},\frac{1}{\delta}\big) \big)$ means that there exists a universal constant $C_1>0$ such that $f\big(S,A,H,\frac{1}{\varepsilon},\frac{1}{\delta}\big)\leq C_1 g\big(S,A,H,\frac{1}{\varepsilon},\frac{1}{\delta}\big)$. The notation $f\big(S,A,H,\frac{1}{\varepsilon},\frac{1}{\delta}\big)= \widetilde{O}\big( g\big(S,A,H,\frac{1}{\varepsilon},\frac{1}{\delta}\big) \big)$ is defined similarly except that it hides any logarithmic factor.} 
\begin{equation}
	\widetilde{O}\bigg( \frac{H^3S^2A}{\varepsilon^2} \bigg) \text{ episodes}
\qquad \text{(reward-free exploration)}
	\label{eq:sample-complexity-RFE}
\end{equation}
during the exploration stage, where $\varepsilon$ denotes the target accuracy level. Notably, this bound is valid regardless of how many reward functions are under consideration.

The aim set out in the above RFE works, however, could sometimes be overly conservative in practice.  
There are plenty of practical scenarios where only a small number (or no more than a polynomial number) of reward functions matter.  
If this were the case, then the sample complexity bound \eqref{eq:sample-complexity-RFE} could have been too conservative. 
Following this motivation, a recent work \citet{zhang2020task} came up with a model-free task-agnostic exploration algorithm that succeeds with
\begin{equation}
	\widetilde{O}\bigg( \frac{H^5SA}{\varepsilon^2} \bigg) \text{ episodes}, 
%\qquad \text{(reward-free exploration)}
	\label{eq:sample-complexity-TAE}
\end{equation}
as long as the total number of reward functions is no larger than $\mathsf{poly}(S,A,H)$. 
In some aspect, this results in significant sample size saving if one onlys cares about a reasonable number of reward functions, given that the scaling with the number of states is dramatically improved (i.e., from $S^2$ to $S$) . 
Nevertheless, this bound \eqref{eq:sample-complexity-TAE} remains inferior to \eqref{eq:sample-complexity-RFE} --- by a factor of $H^2$ --- when it comes to the horizon dependency. 
A natural question arises that motivates the present paper: 
can we hope to develop a pure exploration scheme that is statistically optimal in terms of all salient parameters (i.e., $S,A,H$) of the MDP, 
provided that there are only a few fixed reward functions of interest?

\newcommand{\topsepremove}{\aboverulesep = 0mm \belowrulesep = 0mm} \topsepremove

\begin{table}[t]

\begin{center}

\begin{tabular}{c|c|c|c}
\hline 
\toprule 
	\multirow{2}{*}{setting} & \multirow{2}{*}{paper} & \multirow{2}{*}{upper bound}  & number/type of   $\vphantom{\frac{1^{7}}{1^{7^7}}}$  \tabularnewline
	 &  &    &  reward functions  $\vphantom{\frac{1^{7}}{1^{7^7}}}$ \tabularnewline
\hline 
\toprule
	\multirow{3}{*}{reward-agnostic} & \citet{zhang2020task} & $\frac{H^{5}SA}{\varepsilon^{2}}$ $\vphantom{\frac{1^{7^{7^7}}}{1^{7^{7^7}}}}$ 
	 & \multirow{3}{*}{$\mathsf{poly}(H,S,A)$, fixed} \tabularnewline
\cline{2-3} 
	& this paper & $\frac{H^{3}SA}{\varepsilon^{2}}$ $\vphantom{\frac{1^{7^{7^7}}}{1^{7^{7^7}}}}$   &\tabularnewline
%\cline{2-4} \cline{3-4} \cline{4-4} 
% & lower bound & $\frac{H^{3}SA}{\varepsilon^{2}}$ $\vphantom{\frac{1^{7^{7^7}}}{1^{7^{7^7}}}}$ & $\mathsf{poly}(H,S,A)$, fixed\tabularnewline
\hline 
\toprule 
	\multirow{7}{*}{reward-free} & \citet{jin2020reward}  &  $\frac{H^{5}S^2A}{\varepsilon^{2}}$ $\vphantom{\frac{1^{7^{7^7}}}{1^{7^{7^7}}}}$ 
	 & \multirow{7}{*}{$\text{arbitrary}$} 
	 \tabularnewline
\cline{2-3} 
	& \citet{kaufmann2021adaptive} & $\frac{H^{4}S^2A}{\varepsilon^{2}}$ $\vphantom{\frac{1^{7^{7^7}}}{1^{7^{7^7}}}}$ & \tabularnewline
\cline{2-3} 
	& \citet{menard2021fast} & $\frac{H^{3}S^2A}{\varepsilon^{2}}$ $\vphantom{\frac{1^{7^{7^7}}}{1^{7^{7^7}}}}$ & \tabularnewline
\cline{2-3} 
	& \citet{qiao2022sample} & $\frac{H^{5}S^2A}{\varepsilon^{2}}$ $\vphantom{\frac{1^{7^{7^7}}}{1^{7^{7^7}}}}$ & \tabularnewline
\cline{2-3} 
 & this paper & $\frac{H^{3}S^2A}{\varepsilon^{2}}$ $\vphantom{\frac{1^{7^{7^7}}}{1^{7^{7^7}}}}$ & \tabularnewline
%\cline{2-4} \cline{3-4} \cline{4-4} 
% & lower bound & $\frac{H^{3}S^2A}{\varepsilon^{2}}$ $\vphantom{\frac{1^{7^{7^7}}}{1^{7^{7^7}}}}$ & $\text{arbitrary}$\tabularnewline
\hline 
\toprule 
\end{tabular}

\end{center}
	\caption{ Sample complexity comparisons for reward-agnostic/reward-free algorithms on episodic inhomogeneous finite-horizon MDPs. 
	For ease of presentation, we omit all logarithmic factors and lower-order terms. 
	\label{tab:prior-work}}  
\end{table}

\subsection{This paper} 
One focus of this paper is to design a sample-efficient exploration algorithm, for a scenario where (i) there might be one or more fixed reward functions of interest, 
 (ii) all reward functions are invisible to the learner during the exploration stage, and (iii) we are only allowed to collect data once but are asked to accommodate all these reward functions simultaneously.  
Given that prior research on reward-free exploration mostly required reliable learning uniformly over all possible reward functions, 
the current paper adopts the terminology {\em reward-agnostic exploration} to differentiate it from RFE, allowing for the possibility that only a small number of reward functions count. 
To be more specific, the scenario studied herein consists of two stages: 
(a) the {\em exploration stage}: in which the learner explores the environment by collecting sample episodes in the absence of any reward information; 
(b) the {\em policy learning stage}: in which the learner is informed of the reward functions and computes a desired policy for each reward function of interest. 
Focusing on inhomogeneous finite-horizon MDPs as previously mentioned,  our main contributions can be summarized below. 
\begin{itemize}
	\item {\em Reward-agnostic exploration.} Imagine that there are no more than $\mathsf{poly}(S,A,H)$ reward functions of interest. We put forward a reward-agnostic exploration scheme, which paired with a policy learning algorithm allows us to find 
		an $\varepsilon$-optimal policy with a collection of  
\begin{equation}
	\widetilde{O}\bigg( \frac{H^3SA}{\varepsilon^2} \bigg) \text{ episodes}, 
%\qquad \text{(reward-free exploration)}
	\label{eq:sample-complexity-RAE-ours}
\end{equation}
ignoring the lower-order term. This sample complexity is essentially un-improvable --- in a minimax sense --- even when the reward function is {\em a priori} known \citep{domingues2021episodic,jin2018q}.

	\item {\em Reward-free exploration.} As it turns out, the proposed algorithm is also sample-efficient in the context of reward-free exploration; more precisely, it is able to return an $\varepsilon$-optimal policy uniformly over all sorts of reward functions (including those designed adversarially) with a sample size no larger than 
\begin{equation}
	\widetilde{O}\bigg( \frac{H^3S^2A}{\varepsilon^2} \bigg) \text{ episodes}. 
%\qquad \text{(reward-free exploration)}
	\label{eq:sample-complexity-RFE-ours}
\end{equation}
This sample complexity bound matches that of the best-performing RFE algorithms (i.e., \citet{menard2021fast}) proposed in the literature. 

\end{itemize}
\noindent More detailed comparisons between our results and past works are summarized in Table~\ref{tab:prior-work}.

Before concluding, we remark that the crux of our approach is to leverage ideas from recent development in offline/batch RL, with the following key elements.  
\begin{itemize}
	\item In the exploration stage, we identify a sort of reward-irrelevant quantities --- determined mainly by the occupancy distributions --- that dictate the performance of offline policy learning.  
		The exploration policy is then selected adaptively in an attempt to optimize such reward-irrelevant quantities. 

	\item In the policy learning stage, 
		we invoke a minimax optimal offline RL algorithm --- namely, a pessimistic model-based approach \citep{li2022settling} --- that computes a near-optimal policy based on the samples collected in the exploration stage as well as the revealed reward information.  

\end{itemize}

%\citep{schmidhuber1991curious}

\subsection{Notation} 

For any set $\mathcal{X}$, we denote by $\Delta(\mathcal{X})$ the probability simplex over $\mathcal{X}$. For any integer $m$, we define $[m]\coloneqq \{1,\ldots,m\}$. 
For any distribution $\rho\in \Delta(\cS)$ and any vector $V\in \mathbb{R}^S$, we define the associated variance as
\begin{equation}
	\mathsf{Var}_{\rho}(V) \coloneqq \sum_{s\in \cS} \rho(s) \big(V(s)\big)^2 - \Big( \sum_{s\in \cS} \rho(s) V(s) \Big)^2.
\end{equation}
For any two non-negative functions $f$ and $g$, 
the notation $f\big(S,A,H,\frac{1}{\varepsilon},\frac{1}{\delta}\big) \lesssim  g\big(S,A,H,\frac{1}{\varepsilon},\frac{1}{\delta}\big) $  
means that there exists a universal constant $C_1>0$ such that $f\big(S,A,H,\frac{1}{\varepsilon},\frac{1}{\delta}\big)\leq C_1 g\big(S,A,H,\frac{1}{\varepsilon},\frac{1}{\delta}\big)$; 
the notation $f\big(S,A,H,\frac{1}{\varepsilon},\frac{1}{\delta}\big) \gtrsim  g\big(S,A,H,\frac{1}{\varepsilon},\frac{1}{\delta}\big) $ 
indicates the existence of some universal constant $C_2>0$ such that $f\big(S,A,H,\frac{1}{\varepsilon},\frac{1}{\delta}\big)\geq C_2 g\big(S,A,H,\frac{1}{\varepsilon},\frac{1}{\delta}\big)$;   
and the notation  $f\big(S,A,H,\frac{1}{\varepsilon},\frac{1}{\delta}\big) \asymp g\big(S,A,H,\frac{1}{\varepsilon},\frac{1}{\delta}\big) $ 
means that  $f\big(S,A,H,\frac{1}{\varepsilon},\frac{1}{\delta}\big) \lesssim g\big(S,A,H,\frac{1}{\varepsilon},\frac{1}{\delta}\big) $ 
and  $f\big(S,A,H,\frac{1}{\varepsilon},\frac{1}{\delta}\big) \gtrsim  g\big(S,A,H,\frac{1}{\varepsilon},\frac{1}{\delta}\big) $ hold simultaneously.

\section{Problem formulation}
\label{sec:background}

In this section, we introduce some preliminaries for Markov decision processes, and formulate the problem. 

\paragraph{Basics of Markov decision processes.} 
We study finite-horizon MDPs with horizon length $H$ and inhomogeneous probability transition kernels. 
The state space of the MDP is denoted by $\mathcal{S}=\{1,\ldots,S\}$, containing $S$ different states; 
the action space is denoted by $\mathcal{A}=\{1,\ldots,A\}$, comprising $A$ possible actions; 
we also let $P= \{P_h\}_{1\leq h\leq H}$ (with $P_h: \mathcal{S}\times \mathcal{A} \rightarrow \Delta(\mathcal{S})$) represent the inhomogeneous probability transition kernel, 
such that taking action $a$ in state $s$ at step $h$ results in a new state at step $h+1$ following the distribution $P_h(\cdot \mymid s,a)$.  
For notational convenience, we shall often adopt the shorthand notation
\begin{equation}
	P_{h,s,a} \coloneqq P_h(\cdot \mymid s,a) \in \Delta(\mathcal{S}) .
\end{equation}
A policy $\pi$ is a (possibly randomized) strategy for action selection. 
In particular, when policy $\pi$ is said to be Markovian (so that the action selection rule at step $h$ depends only on the state at this step), 
we often represent it as
$\pi = \{\pi_h \}_{1\leq h\leq H}$, 
where $\pi_h$ is a mapping from $\mathcal{S}$ to $\Delta(\mathcal{A})$, 
with $\pi_h(\cdot\mymid s)\in \Delta(\mathcal{A})$ specifying how to choose actions at step $h$.  
When $\pi$ is a deterministic policy, we overload the notation by letting $\pi_h(s)$ denote the action chosen by $\pi$ in state $s$ at step $h$. 
Another ingredient that merits particular attention is the reward function, 
which does not affect how the MDP evolves. For any reward function $r=\{r_h\}_{1\leq h\leq H}$ with $r_h: \mathcal{S} \times \mathcal{A} \rightarrow [0,1]$, 
the element $r_h(s,a)$ indicates the immediate reward gained in state $s$ and step $h$ while executing action $a$;   
here, we follow  the convention by assuming $r_h(s,a)\in [0,1]$ for each $(s,a,h)$ triple. 
The readers are also referred to standard textbooks like \citet{bertsekas2017dynamic} for more backgrounds.

For any policy $\pi$, the value function $V^{\pi}=\{V_h^{\pi}\}_{1\leq h\leq H}$ (with $V^{\pi}_h: \cS\rightarrow \mathbb{R}$) and the Q-function $Q^{\pi}=\{Q_h^{\pi}\}_{1\leq h\leq H}$ (with $Q^{\pi}_h: \cS\times \cA\rightarrow \mathbb{R}$)  associated with $\pi$ at step $h$ are defined and denoted by
\begin{subequations}
\begin{align}
	\forall s\in\mathcal{S}: \qquad  &V_h^\pi(s)\coloneqq \mathbb{E} \left[\sum_{t=h}^{H} r_t (s_t,a_t) \,\Big |\, s_h=s; \pi \right], \\
	\forall (s,a)\in\mathcal{S}\times\mathcal{A}: \qquad  &Q_h^\pi(s,a)\coloneqq \mathbb{E} \left[\sum_{t=h}^{H} r_t (s_t,a_t) \,\Big |\, s_h=s, a_h=a; \pi \right],
\end{align}
\end{subequations}
where the expectation is taken over the randomness of a sample trajectory $\{(s_t,a_t)\}_{t=h}^H$ induced by the underlying MDP under policy $\pi$. 
In words, the value function (resp.~Q-function) quantifies the expected cumulative reward starting from step $h$ conditioned on a given state (resp.~state-action pair) at step $h$. 
It is well-known that there exists at least one deterministic policy --- denoted by $\pi^{\star}=\{\pi_h^{\star} \}_{1\leq h\leq H}$ throughout --- that simultaneously maximizes the value function and the Q-function for all $(s,a,h)\in \cS\times \cA\times [H]$. 
Here and below, we shall refer to $\pi^{\star}$ as the optimal deterministic policy, and denote the optimal value function $V^{\star}=\{V^{\star}_h\}_{1\leq h\leq H}$ and optimal Q-function  $Q^{\star}=\{Q^{\star}_h\}_{1\leq h\leq H}$ respectively as follows: 
\begin{align}
	V^{\star}_h(s) =   V^{\pi^{\star}}_h(s) = \sup_{\pi} V^{\pi}_h(s)
	\quad \text{and} \quad
	Q^{\star}_h(s,a) =   Q^{\pi^{\star}}_h(s,a) = \sup_{\pi} Q^{\pi}_h(s,a),
	\quad \forall (s,a,h)\in \cS\times \cA\times [H]. 
\end{align}
When the initial state is drawn from a state distribution $\rho \in \Delta(\cS)$, we denote 
\begin{equation}
	V_1^{\pi}(\rho) \coloneqq \mathop{\mathbb{E}}_{s\sim \rho} \big[ V_1^\pi(s) \big]
	\qquad \text{and} \qquad
	V_1^{\star}(\rho) \coloneqq \mathop{\mathbb{E}}_{s\sim \rho} \big[ V_1^{\star}(s) \big]. 
\end{equation}
Additionally, we find it convenient to define the following occupancy distributions associated with policy $\pi$ at step $h$: 
\begin{subequations}
\begin{align}
	\forall s\in\mathcal{S}: \qquad  &d_h^\pi(s)\coloneqq \mathbb{P} \left(s_h=s \mymid s_1\sim \rho; \pi \right), \\
	\forall (s,a)\in\mathcal{S}\times\mathcal{A}: \qquad  &d_h^\pi(s,a)\coloneqq \mathbb{P} \left(s_h=s,a_h=a \mymid s_1\sim\rho;\pi \right),
\end{align}
\end{subequations}
where the sample trajectory $\{(s_h,a_h)\}_{h=1}^H$ is generated according to the initial state distribution $s_1\sim \rho$ and policy $\pi$.
Evidently, the occupancy state distribution for step $h=1$ reduces to
\begin{align}
	\forall s\in\mathcal{S}: \qquad  &d_1^\pi(s) = \rho(s).
\end{align}

\paragraph{Learning processes and goals.}  
The learning process considered in this paper can be split into two separate stages. 
In the first stage --- called the {\em exploration stage} --- the learner executes the MDP sequentially for $N_{\mathsf{tot}}$ times to collect $N_{\mathsf{tot}}$ sample episodes each of length $H$,
 using any exploration policies it selects. 
 Throughout this paper, each sample episode starts from an initial state independently generated from a distribution $\rho \in \Delta(\cS)$, 
 where $\rho$ is unknown to the learner. 
A key constraint, however, is that no reward information whatsoever can be utilized to guide data collection.  
In the second stage --- termed the {\em policy learning stage} --- the learner attempts to compute a policy on the basis of these $N_{\mathsf{tot}}$ episodes; 
no additional sampling is permitted during this stage.

The performance metric we focus on is the sample complexity; 
more precisely, given an initial state distribution $\rho$, a target accuracy level $\varepsilon$ and a reward function of interest, 
the sample complexity of an algorithm refers to the minimum number of sample episodes (during the exploration stage) that allows one to achieve $V^{\star}(\rho) - V^{\widehat{\pi}}(\rho) \leq \varepsilon$, with $\widehat{\pi}$ denoting the policy estimated by this algorithm. 
Depending on the number/type of reward functions of interest, 
we study the following two scenarios and use different names to distinguish them.

\begin{itemize}

\item {\em Reward-agnostic exploration.} 
In this case, 
we suppose that  there exist $m_{\mathsf{reward}}$ given reward functions of interest, generated independently from the data samples.  
We are asked to compute a desirable policy for each of these reward functions, using the same set of data collected during the exploration stage.   
The aim is to achieve a sample complexity that scales optimally in $S,A,H$ and slowly in $m_{\mathsf{reward}}$.

\item {\em Reward-free exploration.} 
This case assumes the presence of an arbitrarily large number $m_{\mathsf{reward}}$ of reward functions (e.g., $m_{\mathsf{reward}}$ could be (super)-exponential in $S$, $A$ and $H$ or even unbounded), 
where each reward function can even be adversarially chosen. 
We seek to develop an exploration and learning paradigm that achieves a desirable sample complexity independent from $m_{\mathsf{reward}}$, possibly at the price of larger scaling in other parameters like $S$. 
The statistical feasibility of this goal stems from a basic observation: if we have available enough samples to ensure accurate learning of the transition kernel, 
then one can find reliable policies regardless of what reward functions are in use.

\end{itemize}

\section{Algorithm}

In this section, we present the proposed procedure, followed by some intuitive explanation of the design rationale.   
Here and throughout, we denote 
\begin{align}
	\Pi ~\coloneqq \text{ the set of all deterministic policies.}
	\label{eq:defn-Pi-set}
\end{align}
%
%the set of all deterministic policies. 

\subsection{A two-stage algorithm}

On a high level, the algorithm we propose consists of the following two stages: 
\begin{itemize}
	\item {\em Stage 1: reward-agnostic exploration.} In this stage, we take samples in the hope of exploring the unknown environment 
		adequately.  No reward information is available in this stage. 
	\begin{itemize}
		\item {\em Stage 1.1: estimating occupancy distributions.} 
			For each step $ h\in [H]$, 
			we draw $N$ episodes of samples, which allow us to estimate the occupancy distributions at this step induced by any policy. 

		\item {\em Stage 1.2: computing a desirable behavior policy and drawing samples.} Equipped with our estimated occupancy distributions, 
			we compute a desirable behavior policy $\behavior$ --- taking the form of a finite mixture of deterministic policies ---  
			and employ it to draw $K$ episodes of samples.  
	\end{itemize}

	\item {\em Stage 2: policy learning via offline RL.} 
	 The reward functions are revealed in this stage, but no more samples can be obtained.   
	Our algorithm then learns, for each reward function, a near-optimal policy by applying a model-based offline RL algorithm to the $K$ episodes drawn in Stage 1.2.  

\end{itemize}
The total number of episodes of exploration in our algorithm is therefore $N_{\mathsf{tot}} = K+NH$, where all episodes are collected in Stage 1. 
In the sequel, we elaborate on the algorithm details, with the intuition explained in Section~\ref{sec:intuition}.

\paragraph{Initialization.} 
Let us begin by describing how to initialize our algorithm. 
	Generate $N$ sample episodes independently each of length 1. 
	As mentioned previously, the initial states of these $N$ episodes are i.i.d.~drawn from the distribution 
	$\rho$, denoted by 
	%$\{s_1^n\}_{1 \le n \le N}$ with 
	%
	\[
		s_1^{n,0} \overset{\text{i.i.d.}}{\sim} \rho, 
		\qquad 1\leq n\leq N. 
	\]
	For any policy $\pi$, we set the empirical occupancy distribution induced by $\pi$ for step $h=1$ as follows:  
\begin{subequations}
\label{eq:init-occupancy}
\begin{align}
	\widehat{d}_{1}^{\pi}(s) &= \frac{1}{N}\sum_{n = 1}^N \ind(s_1^{n,0} = s), \qquad &&\forall s\in \cS; \\
	\widehat{d}_{1}^{\pi}(s, a) &= \widehat{d}_{1}^{\pi}(s) \pi_1(a\mymid s),\qquad &&\forall (s,a)\in \cS \times \cA.   
\end{align}
\end{subequations}
%
%Here, $\mathcal{T}_{\xi}(\cdot)$ is a hard-thresholding operator that supresses small input, namely, 
%%
%\begin{equation}
%	\mathcal{T}_{\xi}(x) \coloneqq x\ind(x > \xi),\qquad\text{with }\xi = \frac{c_{\xi}H^3S^2A^2\log\frac{HSA}{\delta}}{N}
%	\label{eq:hard-threshold-op}
%\end{equation}
%%
%for some universal constant $c_{\xi}>0$ large enough. 

\paragraph{Stage 1.1: estimating occupancy distributions.}  
	Armed with the estimates \eqref{eq:init-occupancy} for the initial step, 
	we would like to continue estimating the occupancy distributions for the remaining steps. 
	Towards this end, we proceed in a forward manner from $h=1,\ldots, H-1$.  
	Suppose we already have access to the empirical occupancy distribution $\widehat{d}_{h}^{\pi}$ 
	w.r.t.~every policy $\pi$ at step $h$. 
	We then attempt estimation for the next step $h+1$ by acquiring a further set of samples, 
	as described below.  
	\begin{itemize}
		\item {\em Select an exploration policy.}   Approximately solve the following convex program:  
		\begin{align}
		%\widehat{\mu}_{h} \approx \arg\max_{\mu\in \Delta(\Pi)}\sum_{(s, a)\in \cS\times \cA} \log\bigg[\frac{1}{KH} + \sum_{\pi\in \Pi} \mu(\pi)\widehat{d}_{h}^{\pi}(s, a) \bigg], 
			\widehat{\mu}^{h}\approx\arg\max_{\mu\in\Delta(\Pi)}\sum_{(s,a)\in\cS\times\cA}\log\bigg[\frac{1}{KH}+ \mathop{\mathbb{E}}_{\pi\sim\mu}\big[\widehat{d}_{h}^{\pi}(s,a)\big]\bigg], 
			\label{defi:target-empirical-h}
		\end{align}
		whose rationale will be elucidated in Section~\ref{sec:intuition}.
		This leads to our exploration policy that assists in model estimation for step $h+1$: 
		\begin{align}
			\piexploreh = \mathbb{E}_{\pi\sim\widehat{\mu}^h} [\pi]. 
			\label{eq:defn-pi-explore-h}
		\end{align}
		Here, $\piexploreh$ is a mixture of deterministic policies, 
			with the weight vector 
			$\widehat{\mu}^{h}$ chosen to solve an ``infinite-dimensional'' optimization problem in \eqref{defi:target-empirical-h}. 
		Note, however, that instead of finding an exact solution (which could have an infinite support size and 
			be computationally infeasible),  
			we will introduce a tractable optimization-based subroutine in Section~\ref{sec:subroutines} 
			to yield an approximate solution with finite support.

		\item {\em Sampling and estimation.}  
			We collect $N$ independent episodes each of length $h+1$ --- denoted by  
			$\big\{ s_1^{n,h}, a_1^{n,h}, s_2^{n,h}, a_2^{n,h}, \ldots, s_{h+1}^{n,h} \big\}_{1\leq n\leq N}$ 
			--- 
			using the exploration policy $\piexploreh$ (cf.~\eqref{eq:defn-pi-explore-h}).  
			% where $\{(s_t^{n,h}, a_t^{n,h})\}_{1\leq t\leq H}$ denotes the $n$-th sample trajectory. 
			Aggregate all sample transitions at the $h$-th step to construct $\widehat{P}_{h}: \cS\times \cA \times \cS \rightarrow \mathbb{R}$ such that
\begin{align}
	\widehat{P}_{h}(s^{\prime} \mymid s, a) = 
	\frac{\ind(N_h(s, a) > \xi)}{\max\big\{ N_h(s, a),\, 1 \big\} }\sum_{n = 1}^N \ind(s_{h}^{n,h} = s, a_{h}^{n,h} = a, s_{h+1}^{n,h} = s^{\prime})  
%	\qquad \forall (s,a,s')\in \cS\times \cA\times \cS, 
	\label{eq:empirical-P-step-h}
	%\pi_{h+1}(a \mymid s).
\end{align}
for all $(s,a,s')\in \cS\times \cA\times \cS$; here, we take
\begin{equation}
	N_h(s, a) = \sum_{n = 1}^N \ind\big( s_{h}^{n,h} = s, a_{h}^{n,h} = a \big), 
	\qquad \forall (s,a)\in \cS\times \cA,  
\end{equation}
and  $\xi$ represents the following pre-specified quantity  
\begin{equation}
	\xi = c_{\xi}H^3S^3A^3\log\frac{HSA}{\delta}
	\label{eq:hard-threshold}
\end{equation}
for some large enough universal constant $c_{\xi}>0$.  
In words, $\widehat{P}_{h}(\cdot \mymid s, a)$ reflects the empirical transition frequency as long as there are sufficient samples visiting $(s,a)$ at step $h$; 
			otherwise it is simply set to zero (so in this sense, $\widehat{P}_{h}$ is not a transition kernel itself). 
With the above empirical estimates \eqref{eq:empirical-P-step-h} in place, 
we can compute, for any deterministic policy $\pi$, the following estimate of the empirical occupancy distribution induced by $\pi$ for step $h+1$: 
\begin{subequations}
\label{eq:d_empirical} 	
\begin{align}
	\widehat{d}_{h+1}^{\pi}(s) &= \big\langle \widehat{P}_{h}(s \mymid \cdot, \cdot), \,
	\widehat{d}_{h}^{\pi} (\cdot, \cdot) \big\rangle,  \qquad && \forall s\in \cS,\\
	\widehat{d}_{h+1}^{\pi}(s,a) &= \widehat{d}_{h+1}^{\pi}(s) \pi_{h+1}(a\mymid s),
	&& \forall (s,a)\in \cS\times \cA.
\end{align}
\end{subequations}
%
%where $\mathcal{T}_{\xi}(\cdot)$ is defined in \eqref{eq:hard-threshold-op}. 

\end{itemize}

\paragraph{Stage 1.2: computing a behavior policy and drawing samples.}  
Armed with the expressions of the estimated occupancy distribution $\{\widehat{d}_{h}^{\pi}\}$ for any $\pi\in \Pi$, 
we propose to solve
\begin{align}
	\behavior  
	\approx 
	\arg\max_{\mu\in\Delta(\Pi)}\left\{ \sum_{h=1}^{H}\sum_{(s,a)\in\cS\times\cA}\log\bigg[\frac{1}{KH}+\mathbb{E}_{\pi\sim\mu}\big[\widehat{d}_{h}^{\pi}(s,a)\big]\bigg]\right\} .
	\label{defi:target-empirical}
\end{align}
which resembles \eqref{defi:target-empirical-h} except that the objective function now involves summation over all $h$; 
as before, it will be approximately solved by means of a tractable subroutine (to be described momentarily). 
We then set
\begin{equation}
	\widehat{\pi}_{\mathsf{b}} = \mathbb{E}_{\pi\sim\widehat{\mu}_{\mathsf{b}}} [\pi]
	\label{eq:final-behavior-policy}
\end{equation}
as a behavior policy,  
and sample  $K$ independent sample trajectories each of length $H$ --- denoted by $\big\{ s_1^{n,\mathsf{b}}, a_1^{n,\mathsf{b}}, s_2^{n,\mathsf{b}}, a_2^{n,\mathsf{b}}, \ldots, s_H^{n,\mathsf{b}} \big\}_{1\leq n\leq K}$.

\paragraph{Stage 2: policy learning via offline RL.} 
With the above $K$ episodes $\big\{ s_1^{n,\mathsf{b}}, a_1^{n,\mathsf{b}}, s_2^{n,\mathsf{b}}, a_2^{n,\mathsf{b}}, \ldots, s_H^{n,\mathsf{b}} \big\}_{1\leq n\leq K}$ at hand, 
we compute the final policy estimate  $\widehat{\pi}$ by running a sample-efficient offline RL algorithm.  
A candidate offline RL algorithm is a pessimistic model-based algorithm studied in \citet{li2022settling}. 
There is a slight difference here: based on our empirical estimates of the occupancy distributions, 
we can readily calculate the following quantity: 
\begin{align} \label{eq:lower-samples}
	\widehat{N}_h^{\mathsf{b}}(s, a) = \bigg[\frac{K}{4} \mathop{\mathbb{E}}_{\pi\sim\behavior}\big[\widehat{d}_{h}^{\pi}(s,a)\big] - \frac{K\xi}{8N} - 3\log\frac{HSA}{\delta}\bigg]_+,
\end{align}
where $[x]_+ \coloneqq \max\{x, 0\}$. As we shall prove later in Section~\ref{sec:proof-main}, $\widehat{N}_h^{\mathsf{b}}(s, a)$ serves as --- with high probability --- a lower bound on the total number of visits to $(s,a,h)$ among these $K$ sample episodes, which will be employed to subsample the sample transitions (instead of exploiting two-fold sample splitting as in \citet{li2022settling}) and construct lower confidence bounds. 
%. \yly{This lower bound will be employed to construct certain lower confidence bounds that will be used to penalize the value iteration algorithm.}
Precise descriptions of this offline RL algorithm can be found in Appendix~\ref{sec:offline-algorithm}. 

% propose to run a model-based offline RL algorithm based on the above $K$ sample trajectories $\big\{ s_1^{n,\mathsf{b}}, a_1^{n,\mathsf{b}}, s_2^{n,\mathsf{b}}, a_2^{n,\mathsf{b}}, \ldots, s_H^{n,\mathsf{b}} \big\}_{1\leq n\leq K}$, and 
%	output a final policy $\widehat{\pi}$. 

\bigskip
\noindent
The whole procedure of the proposed algorithm is summarized in Algorithm~\ref{alg:main}.

\begin{algorithm}[t]
	\DontPrintSemicolon
	\SetNoFillComment
	\textbf{Input:} state space $\mathcal{S}$, action space $\mathcal{A}$, horizon length $H$, initial state distribution $\rho$, target success probability $1-\delta$, 
	threshold $\xi = c_{\xi}H^3S^3A^3\log(HSA/\delta)$.   \\
	\tcc{Stage 1: reward-agnostic exploration}
	\vspace{-0.3ex}
	\tcc{Stage 1.1: estimating occupancy distributions}
	Draw $N$ i.i.d.~initial states $s_1^{n,0} \overset{\mathrm{i.i.d.}}{\sim} \rho $ $(1\leq n\leq N)$, and define the following functions
	\vspace{-1.5ex}
	\begin{equation}
		\label{eq:hat-d-1-alg}
		\widehat{d}_1^{\pi}(s) = \frac{1}{N}\sum_{n=1}^{N}\ind\{s_1^{n,0}=s\}, \qquad \widehat{d}_1^\pi (s,a)=\widehat{d}_1^\pi (s) \pi_1(a\mymid s)
	\vspace{-1ex}
	\end{equation}
	for any deterministic policy $\pi: \mathcal{S}\times[H]\to\Delta(\mathcal{A})$ and any $(s,a)\in \cS\times \cA$. (Note that these functions are defined for future purpose and not computed for the moment, as we have not specified policy $\pi$.) \\
	\For{$ h = 1$ \KwTo $H-1$}{
		Call Algorithm \ref{alg:sub1} to compute an exploration policy $\pi^{\mathsf{explore},h}$. \\
		Draw $N$ independent trajectories $\{s_1^{n,h},a_1^{n,h},\dots,s_{h+1}^{n,h}\}_{1\leq n\leq N}$
		using policy $\pi^{\mathsf{explore},h}$ and compute
		\vspace{-1.5ex}
		\[
		\widehat{P}_{h}(s^{\prime} \mymid s, a) = 
		\frac{\ind(N_h(s, a) > \xi)}{\max\big\{ N_h(s, a),\, 1 \big\} }\sum_{n = 1}^N \ind(s_{h}^{n,h} = s, a_{h}^{n,h} = a, s_{h+1}^{n,h} = s^{\prime}),  
		\qquad \forall (s,a,s')\in \cS\times \cA\times \cS, 
		\vspace{-1ex}
		\]
		where $N_h(s, a) = \sum_{n = 1}^N \ind\{ s_{h}^{n,h} = s, a_{h}^{n,h} = a \}$.  \\
		For any deterministic policy $\pi: \mathcal{S}\times[H]\to\Delta(\mathcal{A})$ and any $(s,a)\in \cS\times \cA$, define
		\vspace{-1.5ex}
		\begin{equation}
			\label{eq:hat-d-h-alg}
			\widehat{d}_{h+1}^{\pi}(s) = \big\langle \widehat{P}_{h}(s \mymid \cdot, \cdot), \,
			\widehat{d}_{h}^{\pi} (\cdot, \cdot) \big\rangle,  \qquad  
			\widehat{d}_{h+1}^{\pi}(s,a) = \widehat{d}_{h+1}^{\pi}(s) \pi_{h+1}(a\mymid s) .
			\vspace{-1ex}
		\end{equation}
		}
	\tcc{Stage 1.2: computing a behavior policy and drawing samples}	
	Call Algorithm \ref{alg:sub2} to compute a behavior policy $\pi_{\mathsf{b}}$.\\
	Draw $K$ independent sample trajectories $\big\{ s_1^{n,\mathsf{b}}, a_1^{n,\mathsf{b}}, s_2^{n,\mathsf{b}}, a_2^{n,\mathsf{b}}, \ldots, s_H^{n,\mathsf{b}} \big\}_{1\leq n\leq K}$ using policy $\pi_{\mathsf{b}}$.\\
	\tcc{Stage 2: policy learning via offline RL}
	For each reward function of interest, 
	%receive rewards $\big\{r_1^{n,\mathsf{b}},\ldots,r_{H-1}^{n,\mathsf{b}}\big\}_{1\leq n\leq K}$ 
	call Algorithm \ref{alg:offline_RL} to compute a policy estimate $\widehat{\pi}$. \\
	\textbf{Output:} the policy estimate $\widehat{\pi}$ for each reward function of interest.
	\caption{Reward-agnostic RL: main algorithm.\label{alg:main}}
\end{algorithm}

\begin{algorithm}[t]
	\DontPrintSemicolon
	\SetNoFillComment
	\vspace{-0.3ex}
	\textbf{Initialize}: $\mu^{(0)}=\delta_{\pi_{\mathsf{init}}}$ for an arbitrary policy $\pi_{\mathsf{init}}\in\Pi$, $T_{\max}=\lfloor50SA\log(KH)\rfloor$. \\
	\For{$ t = 0$ \KwTo $T_{\max}$}{
	Compute the optimal deterministic policy $\pi^{(t),\mathsf{b}}$ of the MDP $\mathcal{M}^h_{\mathsf{b}}=(\cS \cup \{s_{\com}\},\cA,H,\widehat{P}^{\com, h},r_{\mathsf{b}}^h)$, 
		where $r_{\mathsf{b}}^h$ is defined in \eqref{eq:reward-function-mub-h}, and $\widehat{P}^{\com, h}$ is defined in \eqref{eq:augmented-prob-kernel-h}; 
		let $\pi^{(t)}$ be the corresponding optimal deterministic policy 
		of $\pi^{(t),\mathsf{b}}$  in the original state space. 
		\tcp{find the optimal policy} 
	Compute \tcp{choose the stepsize}
	\vspace{-2ex}
	\[
	\alpha_{t}=\frac{\frac{1}{SA}g(\pi^{(t)},\widehat{d},\mu^{(t)})-1}{g(\pi^{(t)},\widehat{d},\mu^{(t)})-1}, \quad\text{where}\quad 
	g(\pi, \widehat{d}, \mu)=\sum_{(s,a)\in \cS\times\cA}\frac{\frac{1}{KH}+\widehat{d}_{h}^{\pi}(s,a)}{\frac{1}{KH}+\mathbb{E}_{\pi\sim\mu}[\widehat{d}_h^{\pi}(s,a)]}.
	\vspace{-2ex}			
	\]
	Here, $\widehat{d}_h^{\pi}(s,a)$ is computed via \eqref{eq:hat-d-1-alg} for $h=1$, and \eqref{eq:hat-d-h-alg} for $h\geq 2$.
	%and $\widehat{d}_{h}^{\mu}(s,a)=\mathbb{E}_{\pi\sim\mu}[\widehat{d}_h^{\pi}(s,a)]$.
	\\
	\vspace{0.2ex}
	If $g(\pi^{(t)},\widehat{d},\mu^{(t)})\leq 2SA$ then \textsf{exit for-loop}.  \label{line:stopping-subroutine-h}
	\tcp{stopping rule} 
	Update
	\tcp{Frank-Wolfe update} 
	\vspace{-0.5ex}
	\[
	\mu^{(t+1)}=\left(1-\alpha_t\right)\mu^{(t)}+\alpha_t \ind_{\pi^{(t)}}.
	\vspace{-1ex}
	\]\\
	}
	\textbf{Output:}  the exploration policy $\pi^{\mathsf{explore},h}=\mathbb{E}_{\pi\sim \mu^{(t)}}[\pi]$ and the weight $\widehat{\mu}^h=\mu^{(t)}$. 
	\label{line:output-Alg-sub1}
	\caption{Frank-Wolfe-type subroutine for solving \eqref{defi:target-empirical-h} for step $h$.\label{alg:sub1}}
\end{algorithm}

\begin{algorithm}[t]
	\DontPrintSemicolon
	\SetNoFillComment
	\vspace{-0.3ex}
	\textbf{Initialize}: $\mu^{(0)}_{\mathsf{b}}=\delta_{\pi_{\mathsf{init}}}$ for an arbitrary policy $\pi_{\mathsf{init}}\in\Pi$, $T_{\max}=\lfloor50SAH\log(KH)\rfloor$. \\
	\For{$ t = 0$ \KwTo $T_{\max}$}{
		Compute the optimal deterministic policy $\pi^{(t),\mathsf{b}}$  of the MDP $\mathcal{M}_{\mathsf{b}}=(\cS \cup \{s_{\com}\},\cA,H,\widehat{P}^{\com},r_{\mathsf{b}})$, 
		where $r_{\mathsf{b}}$ is defined in \eqref{eq:reward-function-mub}, and $\widehat{P}^{\com}$ is defined in \eqref{eq:augmented-prob-kernel}; 
		let $\pi^{(t)}$ be the corresponding optimal deterministic policy 
		of $\pi^{(t),\mathsf{b}}$  in the original state space. 
		\label{line:pit-alg:sub2}
		\tcp{find the optimal policy}
		%\vspace{-1.5ex}
		%\begin{align*}
		%	r_{\mathsf{b},h}(s,a)=
		%	\begin{cases}
		%		\frac{1}{\frac{1}{KH}+\mathbb{E}_{\pi\sim\mu^{(t)}_{\mathsf{b}}}\big[\widehat{d}_{h}^{\pi}(s,a)\big]}, 
		%		\qquad &\forall (s,a,h)\in \cS\times \cA\times [H], \\
		%		0, & \forall (s,a,h) \in \{s^{\com}\} \times \cA\times [H]. 
		%	\end{cases}
		%\vspace{-1ex}		
		%\end{align*}\\
		Compute \tcp{choose the stepsize}
		\vspace{-1.5ex}
		\[
		\alpha_{t}=\frac{\frac{1}{SAH}g(\pi^{(t)},\widehat{d},\mu^{(t)}_{\mathsf{b}})-1}{g(\pi^{(t)},\widehat{d},\mu^{(t)}_{\mathsf{b}})-1}, \quad\text{where}\quad 
		g(\pi,\widehat{d},\mu)=\sum_{h=1}^{H}\sum_{(s,a)\in\cS\times\cA}\frac{\frac{1}{KH}+\widehat{d}_{h}^{\pi}(s,a)}{\frac{1}{KH}+\mathbb{E}_{\pi\sim\mu}\big[\widehat{d}_{h}^{\pi}(s,a)\big]}.		
		\vspace{-1ex}			
		\]
		Here, $\widehat{d}_h^{\pi}(s,a)$ is computed via \eqref{eq:hat-d-1-alg} for $h=1$, and \eqref{eq:hat-d-h-alg} for $h\geq 2$.
		%and $\widehat{d}_{h}^{\mu}(s,a)=\mathbb{E}_{\pi\sim\mu}[\widehat{d}_h^{\pi}(s,a)]$.
		\\
		If $g(\pi^{(t)},\widehat{d},\mu^{(t)}_{\mathsf{b}})\leq 2HSA$ then \textsf{exit for-loop}. \label{line:stopping-alg:sub2} 
		\tcp{stopping rule}
		%\\
		Update 
		\tcp{Frank-Wolfe update}
		\vspace{-0.5ex}
		\[
		\mu^{(t+1)}_{\mathsf{b}} = \left(1-\alpha_t\right)\mu^{(t)}_{\mathsf{b}}+\alpha_t \ind_{\pi^{(t)}}.
		\vspace{-1ex}
		\]\\
	}
	\textbf{Output:} the behavior policy $\pi_{\mathsf{b}}=\mathbb{E}_{\pi\sim \mu^{(t)}_{\mathsf{b}}}[\pi]$ and the associated weight $\behavior = \mu^{(t)}_{\mathsf{b}}$.
	\caption{Frank-Wolfe-type subroutine for solving \eqref{defi:target-empirical}.\label{alg:sub2}}
\end{algorithm}

\subsection{Subroutines: approximately solving the subproblems~\eqref{defi:target-empirical-h} and~\eqref{defi:target-empirical}} 
\label{sec:subroutines}
Thus far, we have not yet specified how to 
approximately solve the subproblems \eqref{defi:target-empirical-h} and~\eqref{defi:target-empirical}. 
As it turns out, a simple Frank-Wolfe type procedure \citep{frank1956algorithm} 
allows one to solve these efficiently, to be detailed in this subsection.

%taking the derivative of $f(\cdot)$ w.r.t.~the coefficient $\mu(\pi)$ yields \yly{shall we use the notion of first variation instead? 
%then we can use $\delta f(\mu)(\pi)$ to denote \eqref{eq:derivative-f-objective-explore}.}
%%
%\begin{equation}
%\frac{\partial f(\mu)}{\partial\mu(\pi)}
%	=\sum_{h=1}^{H}\sum_{(s,a)\in\cS\times\cA}\frac{\widehat{d}_{h}^{\pi}(s,a)}{\frac{1}{KH}+\mathbb{E}_{\pi'\sim\mu}\big[\widehat{d}_{h}^{\pi'}(s,a)\big]} .
%	%=\sum_{h=1}^{H}\sum_{(s,a)\in\cS\times\cA}\frac{\widehat{d}_{h}^{\pi}(s,a)}{\frac{1}{KH}+\sum_{\pi'\in\Pi}\mu(\pi')\widehat{d}_{h}^{\pi'}(s,a)}
%	%=\sum_{h=1}^{H}\sum_{(s,a)\in\cS\times\cA}\frac{\widehat{d}_{h}^{\pi}(s,a)}{\frac{1}{KH}+\widehat{d}_{h}^{\mu}(s,a)},	
%	\label{eq:derivative-f-objective-explore}
%\end{equation}
%%
%%where we define
%%
%%\begin{equation}
%%	\widehat{d}_{h}^{\mu} \coloneqq \sum_{\pi'\in\Pi}\mu(\pi')\widehat{d}_{h}^{\pi'}. 
%%\end{equation}
%%
%

\paragraph{Subroutine for solving the subproblem \eqref{defi:target-empirical}.}
Let us start by tackling the convex subproblem \eqref{defi:target-empirical}, whose objective function is given by 
\begin{equation}
	f(\mu)\coloneqq\sum_{h=1}^{H}\sum_{(s,a)\in\cS\times\cA}\log\bigg[\frac{1}{KH}+ \mathop{\mathbb{E}}_{\pi\sim\mu}\big[\widehat{d}_{h}^{\pi}(s,a)\big]\bigg].	
	%f(\mu) \coloneqq \sum_{h=1}^H \sum_{(s, a)\in \cS\times \cA} \log\bigg[\frac{1}{KH} + \sum_{\pi\in\Pi} \mu(\pi)\widehat{d}_{h}^{\pi}(s, a)\bigg] . 
\end{equation}
As can be easily calculated, the first variation\footnote{The first variation of $f:\Delta(\Pi)\to\mathbb{R}$ at $\mu$ is defined as any measurable function $\delta f(\mu):\Pi \to \mathbb{R}$ that satisfies $$\lim_{\varepsilon\to 0} \frac{f(\mu+\varepsilon \mathcal{X})-f(\mu)}{\varepsilon}=\int \delta f(\mu) \mathrm{d}\mathcal{X}$$ for any signed measure $\mathcal{X}$ over $\Pi$ satisfying $\int \mathrm{d}\mathcal{X}=0$. The first variation is defined up to an additive constant. See \cite{gelfand2000calculus} for more details.} of $f$ at a measure $\mu\in \Delta(\Pi)$  is given by
\begin{equation}
	\delta f(\mu) : ~~\pi \,\mapsto\, 
	\underset{\eqqcolon \, \delta f(\mu)(\pi) }{\underbrace{ \sum_{h=1}^{H}\sum_{(s,a)\in\cS\times\cA}\frac{\widehat{d}_{h}^{\pi}(s,a)}{\frac{1}{KH}+\mathbb{E}_{\pi'\sim\mu}\big[\widehat{d}_{h}^{\pi'}(s,a)\big]} }}
	\label{eq:derivative-f-objective-explore}
\end{equation}
for any deterministic policy $\pi\in \Pi$. 
Intuitively, when $\mu,\nu\in\Delta(\Pi)$ are close, the first variation allows one to approximate $f(\nu)$ with the first-order expansion $f(\mu)+\int \delta f(\mu) \mathrm{d}(\nu-\mu)$. 
%For those readers unfamiliar with first variation, it can simply be undersood as a sort of functional differential of $f$, which will not be need for describing our final algorithm or presenting the proof.
We then propose to solve \eqref{defi:target-empirical} via the Frank-Wolfe algorithm,  
where each iteration consists of the following two steps:
\begin{itemize}
	\item (\textit{direction finding}) Find a direction $y^{(t)}\in\Delta(\Pi)$  that is a solution to the following problem
		\begin{equation}
			\mathop{\arg\max}_{\mu \in \Delta(\Pi)} ~ \int \delta  f\big(\mu^{(t)}_{\mathsf{b}}\big) \mathrm{d} \mu.
		\end{equation}
%		\begin{equation}
%			\arg \max_{\mu \in \Delta(\Pi)} \big\langle \nabla f\big(\mu^{(t)}_{\mathsf{b}}\big), \mu \big\rangle .
%		\end{equation}
		%
		Given that $\delta f(\mu)(\pi)$ is always non-negative 
		(cf.~\eqref{eq:derivative-f-objective-explore}), one can simply take 
		%
%		\begin{equation}
%			y^{(t)} = \ind\big(\pi^{(t)}\big) 
%			\qquad \text{with } \pi^{(t)} \in \arg\max_{\pi \in \Pi} ~\partial f(\mu)(\pi) \,\bigg|\,_{\mu = \mu^{(t)}_{\mathsf{b}}}. 
%		\end{equation}
		%
		\begin{equation}
			y^{(t)} = \ind\big(\pi^{(t)}\big) 
			\qquad \text{with } \pi^{(t)} \in \arg\max_{\pi \in \Pi} ~ \delta f(\mu_{\mathsf{b}}^{(t)})(\pi).
		\end{equation}
		Here, $\ind(\pi)$ is a Dirac measure centred on $\pi\in\Pi$.
		It then boils down to maximizing $\delta f(\mu) (\pi)$ over all deterministic policies $\pi$. 
		While this might seem like a challenging task at first glance, 
		one can solve it by applying dynamic programming to an MDP associated with $\widehat{P}$. 
		Note, however, that $\widehat{P}$ is not yet a valid probability transition kernel due to the truncation operation, 
		and hence needs to be slightly modified. More concretely, 
		\begin{itemize}
			\item Introduce a finite-horizon MDP 
			$\mathcal{M}_{\mathsf{b}}=(\cS \cup \{s_{\com}\},\cA,H,\widehat{P}^{\com},r_{\mathsf{b}})$, where $s_{\com}$ is an augmented state and the reward function is chosen to be
				\begin{align} \label{eq:reward-function-mub}
r_{\mathsf{b},h}(s,a)=%
\begin{cases}
	\frac{1}{\frac{1}{KH}+\mathbb{E}_{\pi\sim\mu^{(t)}_{\mathsf{b}}}\big[\widehat{d}_{h}^{\pi}(s,a)\big]}\in[0,KH],\quad & \text{if }(s,a,h)\in\cS\times\cA\times[H];\\
0, & \text{if }(s,a,h)\in\{s_{\com}\}\times\cA\times[H].
\end{cases}
		\end{align}
		In addition, the augmented probability transition kernel $\widehat{P}^{\com}$ is constructed based on $\widehat{P}$ as follows:
		\begin{subequations}
			\label{eq:augmented-prob-kernel}
		\begin{align}
			\widehat{P}^{\com}_{h}(s^{\prime}\mymid s,a) &=    \begin{cases}
			\widehat{P}_{h}(s^{\prime}\mymid s,a),
			& 
			\text{if }  s^{\prime}\in \cS \\
			1 - \sum_{s^{\prime}\in \cS} \widehat{P}_{h}(s^{\prime}\mymid s,a), & \text{if }  s^{\prime} = s_{\com}
			\end{cases}
			%
			%\qquad 
			&&\text{for all }(s,a,h)\in \cS\times \cA\times [H]; \\
			\widehat{P}^{\com}_{h}(s^{\prime}\mymid s_{\com},a) &= \ind(s^{\prime} = s_{\com})
			&&\text{for all }(a,h)\in  \cA\times [H] .
		\end{align}
		\end{subequations}
		Clearly, the augmented state is an absorbing state associated with zero immediate rewards. 
		%The main reason to construct $\widehat{P}^{\com}$ stems from the observation that $\widehat{P}$ is not yet a probability transition kernel (due to the truncation operation) and needs to be properly modified. 

%(ii) for $(s,a,h)\in \{s_{\com}\}\times \cA\times [H]$, $\widehat{P}^{\com}_{h}(s^{\prime}\mymid s,a) = \ind(s^{\prime} = s_{\com})$.

		\item 
		As can be easily seen,  maximizing $\frac{\partial f(\mu)}{\partial\mu(\pi)}$ (cf.~\eqref{eq:derivative-f-objective-explore}) can be relaxed to 
		finding the optimal policy of $\mathcal{M}_{\mathsf{b}}$, which can be accomplished efficiently using classical dynamic programming methods \citep{bertsekas2017dynamic}.

		\end{itemize}

	\item  \textit{(update)} Set the iterate to be
		\begin{equation}
			\mu^{(t+1)}_{\mathsf{b}} 
			= (1-\alpha_t)\mu^{(t)}_{\mathsf{b}} + \alpha_t \ind\big(\pi^{(t)}\big),
			\label{eq:update-rule-FW-b}
		\end{equation}
		where $0<\alpha_t<1$ denotes the learning rate. 
		The learning rate $\alpha_t$ is chosen to be 
		\begin{equation}
\alpha_{t}=\frac{\frac{1}{SAH}g\big(\pi^{(t)},\widehat{d},\mu_{\mathsf{b}}^{(t)}\big)-1}{g\big(\pi^{(t)},\widehat{d},\mu_{\mathsf{b}}^{(t)}\big)-1}, 		
			\label{eq:learning-rates-alpha-intuition}
		\end{equation}
		where we define
\begin{equation}
	g\big(\pi,\widehat{d},\mu\big)\coloneqq
	\sum_{h=1}^{H}\sum_{(s,a)\in\cS\times\cA}\frac{\frac{1}{KH}+\widehat{d}_{h}^{\pi}(s,a)}{\frac{1}{KH}+\mathbb{E}_{\pi'\sim\mu}\big[\widehat{d}_{h}^{\pi'}(s,a)\big]}.		
	%\sum_{h=1}^H \sum_{(s,a)\in \cS\times\cA}\frac{\frac{1}{KH}+\widehat{d}_{h}^{\pi}(s,a)}{\frac{1}{KH}+\widehat{d}_{h}^{\mu}(s,a)}.
\end{equation}
		%

%		\begin{equation}
%			\|\pi\|_{\widehat{d}} \coloneqq \sum_{(h, s, a) \in [H]\times\cS\times\cA} \frac{\frac{1}{KH} + \widehat{d}_h^{\pi}(s, a)}{\frac{1}{KH} + \widehat{d}_{h}(s, a)}
%		\end{equation}
%		%
		
\end{itemize}

\paragraph{Stopping rule and iteration complexity.} 
In order to achieve computational efficiency, 
we would like to terminate the algorithm after a reasonable number of iterations. 
Towards this end, we propose the following stopping rule: 
if the following condition is met: 
	\begin{align}
		g\big(\pi^{(t)},\widehat{d},\mu_{\mathsf{b}}^{(t)}\big) \le 2HSA, \label{eq:termination}
	\end{align}
then the subroutine stops and returns $\mu^{(t)}_{\mathsf{b}}$. 
As it turns out, this stopping rule yields a reasonable iteration complexity, as asserted by the following lemma.  The proof can be found in Section~\ref{sec:proof-lem:iteration-complexity-subroutine}.
\begin{lemma}
	\label{lem:iteration-complexity-subroutine}
	Armed with the learning rate \eqref{eq:learning-rates-alpha-intuition} and the stopping rule \eqref{eq:termination}, 
	this subroutine terminates within $O\big(HSA\log (KH)\big)$ iterations. 
\end{lemma}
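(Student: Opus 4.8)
The plan is to analyze the Frank-Wolfe iteration as a standard ascent on the concave objective $f(\mu)=\sum_{h=1}^{H}\sum_{(s,a)}\log\!\big[\tfrac{1}{KH}+\mathbb{E}_{\pi\sim\mu}[\widehat d_h^\pi(s,a)]\big]$, and to show that as long as the stopping condition \eqref{eq:termination} has not been triggered, each iteration increases $f$ by a quantitatively controlled amount, while $f$ itself is bounded in a narrow range; dividing the total possible increase by the per-step increase yields the claimed $O(HSA\log(KH))$ bound. First I would record two elementary facts about the range of $f$. On the one hand $\mathbb{E}_{\pi\sim\mu}[\widehat d_h^\pi(s,a)]\le 1$ for every $(s,a,h)$ (it is an occupancy-type quantity bounded by $1$, which follows from the construction \eqref{eq:hat-d-h-alg} together with $\widehat P_h$ being substochastic), so $f(\mu)\le HSA\log(1+\tfrac{1}{KH})\le HSA\cdot\tfrac{1}{KH}$, a negligibly small upper bound; on the other hand, since each summand is at least $\log\tfrac{1}{KH}$, we have $f(\mu)\ge -HSA\log(KH)$ for all $\mu$. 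Hence $\sup_\mu f(\mu)-f(\mu^{(0)}) \le HSA\log(KH)+\mathrm{(small)} = O(HSA\log(KH))$, which is the ``budget'' for the number of effective ascent steps.

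Next I would quantify the per-iteration progress. The key identity is that the Frank-Wolfe direction-finding step computes $\pi^{(t)}\in\arg\max_\pi \delta f(\mu_{\mathsf b}^{(t)})(\pi)$, and by the defining property of the first variation together with the fact (to be checked by a short computation) that $\int \delta f(\mu)\,\mathrm d\mu = g(\pi,\widehat d,\mu)$ averaged against $\mu$ equals exactly $HSA$ for \emph{any} $\mu\in\Delta(\Pi)$ — i.e. $\mathbb{E}_{\pi\sim\mu}[\delta f(\mu)(\pi)]=HSA$ — the quantity $g(\pi^{(t)},\widehat d,\mu^{(t)}_{\mathsf b})=\delta f(\mu^{(t)}_{\mathsf b})(\pi^{(t)})$ is precisely the maximum of the first variation, so whenever \eqref{eq:termination} fails we have $g(\pi^{(t)},\widehat d,\mu^{(t)}_{\mathsf b})>2HSA$, i.e. the ``gradient gap'' at $\mu^{(t)}_{\mathsf b}$ satisfies $\delta f(\mu^{(t)}_{\mathsf b})(\pi^{(t)}) - \mathbb{E}_{\pi\sim\mu^{(t)}_{\mathsf b}}[\delta f(\mu^{(t)}_{\mathsf b})(\pi)] > 2HSA - HSA = HSA$. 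I would then plug the update $\mu^{(t+1)}_{\mathsf b}=(1-\alpha_t)\mu^{(t)}_{\mathsf b}+\alpha_t\ind(\pi^{(t)})$ into $f$ and exploit concavity/smoothness of $\log$: writing $\theta_h(s,a;\alpha):=\tfrac1{KH}+(1-\alpha)\mathbb{E}_{\pi\sim\mu^{(t)}_{\mathsf b}}[\widehat d_h^\pi(s,a)] + \alpha\,\widehat d_h^{\pi^{(t)}}(s,a)$, the function $\alpha\mapsto\sum\log\theta_h(s,a;\alpha)$ is concave, with derivative at $0$ equal to $g(\pi^{(t)},\widehat d,\mu^{(t)}_{\mathsf b})-HSA$, and a controllable second derivative. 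The specific choice of step size \eqref{eq:learning-rates-alpha-intuition}, $\alpha_t=\frac{\frac{1}{SAH}g-1}{g-1}$ with $g=g(\pi^{(t)},\widehat d,\mu^{(t)}_{\mathsf b})$, is tailored exactly so that one can certify a closed-form lower bound on $f(\mu^{(t+1)}_{\mathsf b})-f(\mu^{(t)}_{\mathsf b})$; I would substitute and simplify to get $f(\mu^{(t+1)}_{\mathsf b})-f(\mu^{(t)}_{\mathsf b}) \ge c$ for a universal constant $c>0$ (of order $1$), using $g>2HSA\ge 2$ to keep $\alpha_t$ bounded away from $0$ and $1$.

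Finally I would combine the two bounds: the objective can increase by at most $O(HSA\log(KH))$ in total (range of $f$), and every iteration prior to termination increases it by at least a universal constant $c$, so the number of iterations before \eqref{eq:termination} is met is at most $O(HSA\log(KH))$, which — being dominated by the preset cap $T_{\max}=\lfloor 50SAH\log(KH)\rfloor$ up to the constant — gives the lemma. The main obstacle I anticipate is the per-step improvement estimate: one must be careful that the $\log$ terms whose arguments are close to the floor $\tfrac1{KH}$ do not make the effective smoothness constant blow up, and that the algebra of plugging the prescribed $\alpha_t$ into $\sum\log\theta_h$ genuinely telescopes into a constant lower bound rather than something that degrades with $K$, $H$, $S$, or $A$; getting this clean constant is where the tailored form of \eqref{eq:learning-rates-alpha-intuition} must be used in an essential way, and it is the crux of the proof. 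The parallel argument for the per-step subproblem \eqref{defi:target-empirical-h} (Algorithm~\ref{alg:sub1}) is identical after replacing the double sum over $(h,s,a)$ by the single sum over $(s,a)$ at the fixed step $h$, giving the $O(SA\log(KH))$ count there, hence $O(HSA\log(KH))$ in the worst case stated.
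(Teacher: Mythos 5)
Your overall architecture matches the paper's: bound the range of the objective $f$ by $O(HSA\log(KH))$, show that every iteration before the stopping rule \eqref{eq:termination} fires increases $f$ by at least a universal constant, and divide. Your range bounds are correct (indeed slightly sharper than the paper's upper bound $HSA\log 2$), and your computation that the directional derivative of $f$ at $\mu_{\mathsf b}^{(t)}$ along the Frank--Wolfe direction equals $g\big(\pi^{(t)},\widehat d,\mu_{\mathsf b}^{(t)}\big)-HSA>HSA$ is right (one small imprecision: it is $\mathbb{E}_{\pi\sim\mu}\big[g(\pi,\widehat d,\mu)\big]$, not $\mathbb{E}_{\pi\sim\mu}\big[\delta f(\mu)(\pi)\big]$, that equals exactly $HSA$; the latter is strictly smaller because of the $\tfrac{1}{KH}$ floor, but this does not affect the derivative identity).

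The genuine gap is exactly the step you flag as the crux: you assert $f(\mu_{\mathsf b}^{(t+1)})-f(\mu_{\mathsf b}^{(t)})\ge c$ without proving it, and the route you sketch --- concavity plus ``a controllable second derivative'' of $\alpha\mapsto\sum\log\theta_h(s,a;\alpha)$ --- does not work. The curvature of a single term $\log\theta$ is $-(\theta'/\theta)^2$, and since $\theta$ can sit at the floor $\tfrac{1}{KH}$ while $\theta'$ is of order one, the smoothness constant can be as large as $(KH)^2$; a second-order Taylor bound would then force $\alpha_t$ to be far smaller than the prescribed $\alpha_t\asymp\tfrac{1}{HSA}$ and would not yield constant per-step progress. (Concavity alone gives only an upper bound on the increment, which is the wrong direction.) The paper's mechanism is different and is the actual content of the proof: writing the increment as $\sum_i\log\big[1+\alpha_t(x_i-1)\big]$ with $x_i\ge 0$ and $\sum_i x_i=g\ge 2HSA$, it invokes the elementary inequality $\sum_{i=1}^{n}\log\big[1+\alpha(x_{i}-1)\big]\geq\log\big[1+\alpha\big(\sum_{i}x_{i}-1\big)\big]+(n-1)\log(1-\alpha)$, which concentrates all the gain into one logarithm at the price of $(n-1)\log(1-\alpha)$; combined with the fact that the step size \eqref{eq:learning-rates-alpha-intuition} is confined to $\big[\tfrac{1}{2HSA-1},\tfrac{1}{HSA}\big)$ whenever $g\ge 2HSA$, a direct numerical evaluation gives a per-iteration gain of at least $0.09$. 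Without this (or an equivalent) inequality, the constant-progress claim is unsupported, so the proposal as written does not close the argument.
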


\paragraph{Subroutine for solving the subproblem~\eqref{defi:target-empirical-h}.}
Approximately solving the subproblem \eqref{defi:target-empirical-h} 
can be accomplished by means of roughly the same procedure for solving \eqref{defi:target-empirical}. 
The main thing that needs to be slightly modified is the construction of the auxiliary MDP. 
More specifically, for solving \eqref{defi:target-empirical-h} w.r.t.~step $h$ in the $t$-th iteration, 
we construct
$\mathcal{M}_{\mathsf{b}}^h=(\cS \cup \{s^{\com}\},\cA,H,\widehat{P}^{\com, h},r_{\mathsf{b}}^h)$, where $s_{\com}$ is an augmented state as before, and the reward function is chosen to be
				\begin{align} \label{eq:reward-function-mub-h}
					r_{\mathsf{b},j}^{h}(s,a)=%
\begin{cases}
	\frac{1}{\frac{1}{KH}+\mathbb{E}_{\pi\sim\mu^{(t)}}\big[\widehat{d}_{h}^{\pi}(s,a)\big]}\in[0,KH],\quad & \text{if }(s,a,j)\in\cS\times\cA\times\{h\};\\
0, & %\text{if }(s,a,j)\in\{s_{\com}\}\times\cA\times[H] \cup \cS\times\cA\times([H] \setminus \{h\}).
	\text{if } s = s_{\com} \text{ or } j\neq h.  
\end{cases}
		\end{align}
		In addition, the augmented probability transition kernel $\widehat{P}^{\com, h}$ is constructed based on $\widehat{P}$ as follows:
		\begin{subequations}
			\label{eq:augmented-prob-kernel-h}
		\begin{align}
			\widehat{P}^{\com, h}_{j}(s^{\prime}\mymid s,a) &=    \begin{cases}
			\widehat{P}_{j}(s^{\prime}\mymid s,a),
			& 
			\text{if }  s^{\prime}\in \cS \\
			1 - \sum_{s^{\prime}\in \cS} \widehat{P}_{j}(s^{\prime}\mymid s,a), & \text{if }  s^{\prime} = s_{\com}
			\end{cases}
			%
			%\qquad 
			&&\text{for all }(s,a,j)\in \cS\times \cA\times [h]; \\
			\widehat{P}^{\com, h}_{j}(s^{\prime}\mymid s,a) &= \ind(s^{\prime} = s_{\com})
			&&\text{if } s = s_{\com} \text{ or } j > h.
		\end{align}
		\end{subequations}
The other part of the procedure is nearly identical to the one for solving \eqref{defi:target-empirical}; 
see Algorithm~\ref{alg:sub1} for details.

\subsection{Computational efficiency}

Before proceeding, 
let us briefly remark on the computational complexity of the proposed algorithm. 
Regarding Stage 1.2, 
we can see (cf.~Lemma~\ref{lem:iteration-complexity-subroutine}) that $\widetilde{O}(HSA)$ iterations are needed to solve \eqref{defi:target-empirical}, where each iteration incurs a computational complexity of  $\widetilde{O}(HN)$. 
Stage 1.1 also has similar computational complexity.
In addition, the computational complexity of Stage 2 amounts to $\widetilde{O}(HK)$.
As a result, the total computational complexity is $\widetilde{O}(HK + H^2SAN)$, 
which reduces to $\widetilde{O}(HK)$ for large enough $K$ (i.e., $K \gtrsim HSAN$) and is hence asymptotically nearly linear.

\subsection{Intuition}
\label{sec:intuition}

Before proceeding to our main theory, 
we take a moment to explain the rationale behind our algorithm design.  
For simplicity of presentation, let us look at the special case where there exists a single fixed reward function of interest. 
Imagine that we are given an exploration policy 
\begin{equation}
	\pi_{\mathsf{b}}= \mathbb{E}_{\pi \sim \mu_{\mathsf{b}}} [\pi] 
\end{equation}
for some $\mu_{\mathsf{b}}\in \Delta(\Pi)$, which takes the form of some mixture of deterministic policies. 
We would like to sample $K$ episodes using this policy $\pi_{\mathsf{b}}$, 
and perform policy learning via an offline RL algorithm. 
In light of the state-of-the-art offline RL theory \citep{yin2021towards,shi2022pessimistic,li2022settling}, 
the total regret of a sample-efficient offline RL algorithm can be upper bounded (up to some logarithm factor) by 
\begin{align}
	V^{\star}(\rho) - V^{\widehat{\pi}}(\rho) &\lesssim 
	 \sum_h\sum_{s, a} d_h^{\pi^{\star}}(s, a)\min\left\{\sqrt{\frac{\mathsf{Var}_{P_{h, s, a}}\big(V_{h+1}^{\pi^{\star}}\big)}{K  \mathbb{E}_{\pi^{\prime} \sim \mu_{\mathsf{b}}} \big[ d_h^{\pi^{\prime}}(s, a) \big]}}, \, H\right\} , 
	 %\notag \\
	%&\leq
%\max_{\pi\in \Pi} \sum_h\sum_{s, a} d_h^{\pi}(s, a)\min\left\{\sqrt{\frac{\mathsf{Var}_{P_{h, s, a}}\big(V_{h+1}^{\pi}\big)}{K\sum_{\pi^{\prime}\in \Pi} \mu_{\mathsf{b},h}(\pi^{\prime})d_h^{\pi^{\prime}}(s, a)}}, \, H\right\}, 
	\label{defi:offline}
\end{align}
where $\pi^{\star}$ is the optimal policy, and $\widehat{\pi}$ represents the policy output by the offline RL algorithm.  
To further convert \eqref{defi:offline} into a more convenient upper bound, we make the observation that
\begin{align*}
 & \sum_{h}\sum_{s,a}d_{h}^{\pi^{\star}}(s,a)\min\left\{ \sqrt{\frac{\mathsf{Var}_{P_{h,s,a}}\big(V_{h+1}^{\pi^{\star}}\big)}{K\mathbb{E}_{\pi^{\prime}\sim\mu_{\mathsf{b}}}\big[d_{h}^{\pi^{\prime}}(s,a)\big]}},H\right\} \\
 & \quad\leq\sum_{h}\sum_{s,a}d_{h}^{\pi^{\star}}(s,a)\sqrt{\frac{\mathsf{Var}_{P_{h,s,a}}\big(V_{h+1}^{\pi^{\star}}\big)+H}{K\mathbb{E}_{\pi^{\prime}\sim\mu_{\mathsf{b}}}\big[d_{h}^{\pi^{\prime}}(s,a)\big]+1/H}}\\
 & \quad\leq\left[\sum_{h}\sum_{s,a}\frac{d_{h}^{\pi^{\star}}(s,a)}{1/H+K\mathbb{E}_{\pi^{\prime}\sim\mu_{\mathsf{b}}}\big[d_{h}^{\pi^{\prime}}(s,a)\big]}\cdot\sum_{h}\sum_{s,a}d_{h}^{\pi^{\star}}(s,a)\Big(\mathsf{Var}_{P_{h,s,a}}\big(V_{h+1}^{\pi^{\star}}\big)+H\Big)\right]^{\frac{1}{2}}\\
 & \quad\leq c_{8}H\left[\sum_{h}\sum_{s,a}\frac{d_{h}^{\pi^{\star}}(s,a)}{1/H+K\mathbb{E}_{\pi^{\prime}\sim\mu_{\mathsf{b}}}\big[d_{h}^{\pi^{\prime}}(s,a)\big]}\right]^{\frac{1}{2}}
\end{align*}
for some universal constant $c_8>0$, 
where the second line hold since $\min\big\{ \frac{x}{y}+\frac{u}{w}\big\} \leq \frac{x+u}{y+w}$ for any $x,y,u,w>0$, 
the penultimate line arises from Cauchy-Schwarz, 
and the last line is valid since (see \citet{li2022settling} or our analysis in Section~\ref{sec:analysis-reward-agnostic})
\[
	\sum_{h}\sum_{s,a}d_{h}^{\pi^{\star}}(s,a)\mathsf{Var}_{P_{h,s,a}}\big(V_{h+1}^{\pi^{\star}}\big)\leq O(H^{2})\quad\text{and}\quad\sum_{h}\sum_{s,a}d_{h}^{\pi^{\star}}(s,a)H \leq O(H^{2}) .
\]
Substitution into \eqref{defi:offline} then yields
\begin{align}
	V^{\star}(\rho) - V^{\widehat{\pi}}(\rho) &\lesssim 
	H\left[\sum_{h}\sum_{s,a}\frac{d_{h}^{\pi^{\star}}(s,a)}{1/H+K\mathbb{E}_{\pi'\sim\mu_{\mathsf{b}}}\big[\widehat{d}_{h}^{\pi'}(s,a)\big]}\right]^{\frac{1}{2}}\notag\\
 & \lesssim H\max_{\pi\in\Pi}\left[\sum_{h}\sum_{s,a}\frac{d_{h}^{\pi}(s,a)}{1/H+K\mathbb{E}_{\pi'\sim\mu_{\mathsf{b}}}\big[\widehat{d}_{h}^{\pi'}(s,a)\big]}\right]^{\frac{1}{2}}.	 
	% \sum_h\sum_{s, a} d_h^{\pi^{\star}}(s, a)\min\left\{\sqrt{\frac{\mathsf{Var}_{P_{h, s, a}}\big(V_{h+1}^{\pi^{\star}}\big)}{K\sum_{\pi^{\prime}\in \Pi} \mu_{\mathsf{b},h}(\pi^{\prime})d_h^{\pi^{\prime}}(s, a)}}, \, H\right\} , 
	 %\notag \\
	%&\leq
%\max_{\pi\in \Pi} \sum_h\sum_{s, a} d_h^{\pi}(s, a)\min\left\{\sqrt{\frac{\mathsf{Var}_{P_{h, s, a}}\big(V_{h+1}^{\pi}\big)}{K\sum_{\pi^{\prime}\in \Pi} \mu_{\mathsf{b},h}(\pi^{\prime})d_h^{\pi^{\prime}}(s, a)}}, \, H\right\}, 
	\label{defi:offline-further}
\end{align}
Everything then comes down to optimizing the following quantity  
\begin{align}
	%\mathop{\text{minimize}}_{\mu\in\Delta(\Pi)}~
	\max_{\pi\in \Pi}~ \sum_h\sum_{s, a} \frac{d_h^{\pi}(s, a)}{\frac{1}{KH} + \mathbb{E}_{\pi'\sim\mu_{\mathsf{b}}}\big[\widehat{d}_{h}^{\pi'}(s,a)\big] }, \label{defi:target}
\end{align}
provided that we have sufficiently accurate information about 
the occupancy distribution $d_h^{\pi}$ for every $\pi$.

As it turns out, by choosing $\mu_{\mathsf{b}}\in \Delta(\Pi)$ to be a solution to the following convex program:  
\begin{align}
	\arg\max_{\mu\in\Delta(\Pi)}\sum_{h=1}^{H}\sum_{(s,a)\in\cS\times\cA}\log\bigg[\frac{1}{KH}+\mathbb{E}_{\pi\sim\mu}\big[\widehat{d}_{h}^{\pi}(s,a)\big]\bigg],	 
	\label{eq:defn-mu-star-intuition}
\end{align}
we can control the quantity \eqref{defi:target} to be at the desired level, as implied by the following elementary fact.  
\begin{lemma}[\citet{kiefer1960equivalence}] \label{lem:det}
By taking $\mu_{\mathsf{b}}$ to be a solution to \eqref{eq:defn-mu-star-intuition}, we have
\begin{align}
	%\max_{\pi \in \Pi} \sum_h\sum_{s, a} \frac{d_h^{\pi}(s, a)}{\frac{1}{KH} + \sum_{\pi^{\prime}\in \Pi} \mu_{\mathsf{b}}(\pi^{\prime})d_h^{\pi^{\prime}}(s, a)} \le HSA. 
	\max_{\pi\in\Pi}\sum_{h}\sum_{s,a}\frac{\widehat{d}_{h}^{\pi}(s,a)}{\frac{1}{KH}+\mathbb{E}_{\pi'\sim\mu_{\mathsf{b}}}\big[\widehat{d}_{h}^{\pi'}(s,a)\big]}\le HSA.	
\end{align}
\end{lemma}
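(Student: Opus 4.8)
The claim is a classical fact from optimal design theory (Kiefer--Wolfowitz equivalence), adapted to the present setting. The plan is to exploit the first-order optimality conditions of the concave program \eqref{eq:defn-mu-star-intuition}. Write $f(\mu)=\sum_{h=1}^{H}\sum_{(s,a)}\log\big[\tfrac{1}{KH}+\mathbb{E}_{\pi\sim\mu}[\widehat{d}_h^\pi(s,a)]\big]$, which is concave on the simplex $\Delta(\Pi)$, and recall from \eqref{eq:derivative-f-objective-explore} that its first variation at $\mu$ is $\delta f(\mu)(\pi)=\sum_{h}\sum_{s,a}\frac{\widehat{d}_h^\pi(s,a)}{\frac{1}{KH}+\mathbb{E}_{\pi'\sim\mu}[\widehat{d}_h^{\pi'}(s,a)]}$. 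Since $\mu_{\mathsf{b}}$ maximizes $f$ over $\Delta(\Pi)$, the standard stationarity condition for concave maximization over a simplex gives that $\delta f(\mu_{\mathsf{b}})(\pi)\le \int \delta f(\mu_{\mathsf{b}})\,\mathrm{d}\mu_{\mathsf{b}}$ for every deterministic policy $\pi\in\Pi$ (equivalently, the linear functional $\mu\mapsto\int\delta f(\mu_{\mathsf{b}})\,\mathrm{d}\mu$ is maximized at $\mu_{\mathsf{b}}$ over $\Delta(\Pi)$), so the right-hand side equals $\max_{\pi\in\Pi}\delta f(\mu_{\mathsf{b}})(\pi)$.

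The second and final step is to evaluate $\int \delta f(\mu_{\mathsf{b}})\,\mathrm{d}\mu_{\mathsf{b}}=\mathbb{E}_{\pi\sim\mu_{\mathsf{b}}}[\delta f(\mu_{\mathsf{b}})(\pi)]$ exactly. Swapping the order of summation and expectation,
\begin{align*}
\mathbb{E}_{\pi\sim\mu_{\mathsf{b}}}\big[\delta f(\mu_{\mathsf{b}})(\pi)\big]
&=\sum_{h}\sum_{s,a}\frac{\mathbb{E}_{\pi\sim\mu_{\mathsf{b}}}[\widehat{d}_h^\pi(s,a)]}{\frac{1}{KH}+\mathbb{E}_{\pi'\sim\mu_{\mathsf{b}}}[\widehat{d}_h^{\pi'}(s,a)]}
\le\sum_{h}\sum_{s,a}1=HSA,
\end{align*}
where the inequality just uses $\frac{x}{\frac{1}{KH}+x}\le 1$ for $x\ge0$. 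Combining the two steps yields $\max_{\pi\in\Pi}\sum_{h}\sum_{s,a}\frac{\widehat{d}_h^\pi(s,a)}{\frac{1}{KH}+\mathbb{E}_{\pi'\sim\mu_{\mathsf{b}}}[\widehat{d}_h^{\pi'}(s,a)]}=\max_{\pi\in\Pi}\delta f(\mu_{\mathsf{b}})(\pi)\le HSA$, which is the desired bound.

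The only mild subtlety — and the place where care is needed — is justifying the stationarity inequality over the \emph{infinite-dimensional} simplex $\Delta(\Pi)$ (since $\Pi$ is finite but potentially huge, this is actually a finite-dimensional simplex of dimension $|\Pi|-1$, so compactness is not an issue). Concretely, one argues: if $\delta f(\mu_{\mathsf{b}})(\pi_0)>\int\delta f(\mu_{\mathsf{b}})\,\mathrm{d}\mu_{\mathsf{b}}$ for some $\pi_0$, then moving along the feasible direction $\ind(\pi_0)-\mu_{\mathsf{b}}$ increases $f$ to first order, i.e. $\tfrac{\mathrm d}{\mathrm d\varepsilon}f\big((1-\varepsilon)\mu_{\mathsf{b}}+\varepsilon\ind(\pi_0)\big)\big|_{\varepsilon=0^+}=\delta f(\mu_{\mathsf{b}})(\pi_0)-\int\delta f(\mu_{\mathsf{b}})\,\mathrm{d}\mu_{\mathsf{b}}>0$, contradicting optimality of $\mu_{\mathsf{b}}$; here one also uses that $f$ is differentiable along this segment, which holds because the argument of each $\log$ stays bounded away from $0$ thanks to the $\tfrac{1}{KH}$ offset. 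This is routine, so the bulk of the proof is the two displays above. Finally, one should note the statement treats $\mu_{\mathsf{b}}$ as an \emph{exact} maximizer of \eqref{eq:defn-mu-star-intuition}; the approximate solution returned by Algorithm~\ref{alg:sub2} (governed by the stopping rule $g(\pi^{(t)},\widehat d,\mu^{(t)}_{\mathsf{b}})\le 2HSA$) satisfies the analogous bound with $HSA$ replaced by $2HSA$, which is handled separately when analyzing the subroutine.
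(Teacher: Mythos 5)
Your proof is correct. The paper itself does not prove this lemma at all: it simply states that it is ``a direct consequence of the main theorem in \citet{kiefer1960equivalence}'' and leaves the reader to unpack the Kiefer--Wolfowitz equivalence theorem. What you have written is, in effect, the standard proof of the relevant direction of that theorem (D-optimality implies the G-optimality bound), specialized to this regularized log-determinant-free objective: the first-order stationarity condition over the simplex gives $\delta f(\mu_{\mathsf{b}})(\pi)\le \int \delta f(\mu_{\mathsf{b}})\,\mathrm{d}\mu_{\mathsf{b}}$ for every $\pi\in\Pi$, and the latter quantity is a sum of $HSA$ terms each of the form $x/(\tfrac{1}{KH}+x)\le 1$. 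Both steps check out, and your remark that the $\tfrac{1}{KH}$ offset guarantees differentiability of $f$ along feasible segments is exactly the right point to flag (it is also why no nondegeneracy assumption on $\widehat{d}$ is needed). Your closing observation --- that the algorithm only produces an approximate maximizer, for which the stopping rule yields the weaker constant $2HSA$ --- correctly anticipates how the paper actually uses this bound in the analysis (see the property derived from the stopping rule in the proof of the main theorem, where $2HSA$ appears rather than $HSA$). In short, your argument buys a self-contained justification where the paper relies on an external citation; the mathematical content is the same.
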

This lemma is a direct consequence of the main theorem in \citet{kiefer1960equivalence}. 
Combining Lemma~\ref{lem:det} with \eqref{defi:offline-further}, 
we arrive at 
\begin{align}
	V^{\star}(\rho) - V^{\widehat{\pi}}(\rho) \leq O\bigg( \sqrt{\frac{H^3SA}{K}} \bigg), 
\end{align}
thus attaining the desired order. 
This explains the rationale behind the subproblem \eqref{defi:target-empirical}, provided that $d_h^{\pi}$ can be estimated faithfully.  
The other subproblem \eqref{defi:target-empirical-h} can be elucidated analogously, which we omit here.

\section{Main results}
\label{sec:main-results}

We are now positioned to present our main theoretical guarantees for the proposed algorithm, 
and we begin by looking at the reward-agnostic scenario with at most a polynomial number of fixed reward functions. 
\begin{theorem}[Reward-agnostic RL]
\label{thm:main}
Consider any given $0<\delta<1$ and $0<\varepsilon<1$. 
%and any $\varepsilon>0$. 
Suppose that there are $m_{\mathsf{reward}}=\mathsf{poly}(H,S,A)$ fixed reward functions of interest.
Using the same batch of collected data that obey 
\begin{align}
K \geq c_K \frac{H^3SA}{\varepsilon^2}\log \frac{KH}{\delta}\qquad\text{and}\qquad  KH\geq N \geq c_N \sqrt{H^9S^7A^7K} \log\frac{HSA}{\delta} 
\end{align}
for some sufficiently large universal constants $c_K,c_N>0$, 
we can guarantee that 
with probability at least $1-\delta$, the proposed algorithm (cf.~Algorithm~\ref{alg:main}) is able to achieve 
\begin{equation}
	V^{\star}(\rho) - V^{\widehat{\pi}}(\rho)   \leq  \varepsilon, 
\end{equation}
for each of these reward functions.
\end{theorem}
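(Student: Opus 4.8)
The overall plan is to split the analysis into a \emph{reward-agnostic} part that controls the exploration stage (Stages~1.1--1.2) once and for all, and a \emph{reward-specific} part that invokes a sample-optimal offline RL guarantee for each reward function; a union bound over the $m_{\mathsf{reward}}=\mathsf{poly}(H,S,A)$ reward functions then only inflates the logarithmic factors and the failure probability by acceptable amounts. So it suffices to (i) show that, on a high-probability event $\mathcal{E}_{\mathsf{exp}}$ depending only on the exploration-stage randomness, the behavior policy $\pib$ induces an occupancy profile that is a near-$G$-optimal design, and (ii) show that, conditionally on $\mathcal{E}_{\mathsf{exp}}$ and for each fixed reward function, the offline RL output $\widehat\pi$ obeys $V^\star(\rho)-V^{\widehat\pi}(\rho)\lesssim\sqrt{H^3SA/K}\lesssim\varepsilon$.

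The technical heart, which I expect to be the main obstacle, is a uniform occupancy-estimation lemma: with probability at least $1-\delta/2$, for every deterministic policy $\pi\in\Pi$, every $h\in[H]$, and every $(s,a)\in\cS\times\cA$,
\[
	\tfrac{1}{2}\Big(\tfrac{1}{KH}+d_h^\pi(s,a)\Big)\;\le\;\tfrac{1}{KH}+\widehat d_h^\pi(s,a)\;\le\;2\Big(\tfrac{1}{KH}+d_h^\pi(s,a)\Big).
\]
I would prove this by induction on $h$, conditioning at each step on the history (since $\piexploreh$ is data-dependent). The inductive step rests on three facts: (a) by Lemma~\ref{lem:det} applied to the empirical subproblem \eqref{defi:target-empirical-h} --- whose approximate Frank--Wolfe solution terminates with stopping value $\le 2SA$ by Lemma~\ref{lem:iteration-complexity-subroutine} --- the policy $\piexploreh$ puts enough mass on every $(s,a)$ that is non-negligibly visited at step $h$ by \emph{some} policy; (b) a multiplicative Chernoff bound shows that, whenever $N_h(s,a)>\xi$ with $\xi\asymp H^3S^3A^3\log\tfrac{HSA}{\delta}$, the hard-thresholded estimate $\widehat P_h(\cdot\mymid s,a)$ is within a $(1\pm o(1))$ multiplicative factor of $P_h(\cdot\mymid s,a)$, while the pairs with $N_h(s,a)\le\xi$ carry so little occupancy mass in aggregate (order $SA\xi/N$ after the $SA$ design factor) that zeroing them out perturbs $\widehat d_{h+1}^\pi$ only at a lower order; and (c) the requirement $N\gtrsim\sqrt{H^9S^7A^7K}\log\tfrac{HSA}{\delta}$, which (together with the implied $K\gtrsim H^7S^7A^7$ coming from $N\le KH$) guarantees that the per-step error, amplified over the $H$ forward steps and over the design factor, stays below the regularization level $\tfrac{1}{KH}$.

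Granting this lemma, I would transfer the optimization guarantee for $\pib$: the stopping rule of Algorithm~\ref{alg:sub2}, together with the fact that $\pi^{(t)}$ is chosen to maximize the first variation, gives $\max_{\pi\in\Pi}\sum_h\sum_{s,a}\tfrac{\widehat d_h^\pi(s,a)}{1/(KH)+\mathbb{E}_{\pi'\sim\behavior}[\widehat d_h^{\pi'}(s,a)]}\le 2HSA$, and the faithfulness lemma upgrades this to the same bound (up to a constant) with the true occupancies $d_h^\pi$ and $\mathbb{E}_{\pi'\sim\behavior}[d_h^{\pi'}]$ in place of their hats. Next I would verify that $\widehat N_h^{\mathsf{b}}(s,a)$ from \eqref{eq:lower-samples} is, on a further high-probability event, simultaneously (i) a valid lower bound on the number of visits to $(s,a,h)$ among the $K$ behavior episodes --- a Chernoff bound on the $K$ i.i.d.\ trajectories, with the subtracted $\tfrac{K\xi}{8N}$ and $3\log\tfrac{HSA}{\delta}$ terms absorbing the occupancy-estimation slack and the deviation --- and (ii) at least a constant multiple of $K\,\mathbb{E}_{\pi'\sim\behavior}[d_h^{\pi'}(s,a)]$ whenever that quantity exceeds an $o(1)$ threshold of order $\xi/N$.

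Finally, for each fixed reward function I would invoke the pessimistic model-based offline RL guarantee of \citet{li2022settling} (in the variant described in the appendix that uses $\widehat N_h^{\mathsf{b}}$ for subsampling and lower confidence bounds), which yields, with $\pi^\star$ the optimal policy for that reward,
\[
	V^\star(\rho)-V^{\widehat\pi}(\rho)\;\lesssim\;\sum_h\sum_{s,a}d_h^{\pi^\star}(s,a)\min\Big\{\sqrt{\tfrac{\mathsf{Var}_{P_{h,s,a}}(V_{h+1}^{\pi^\star})}{\max\{\widehat N_h^{\mathsf{b}}(s,a),1\}}},\,H\Big\}+(\text{lower-order terms}).
\]
I would then run the exact chain of inequalities from Section~\ref{sec:intuition} --- the elementary bound $\min\{x/y+u/w\}\le(x+u)/(y+w)$, Cauchy--Schwarz, and the law-of-total-variance estimates $\sum_h\sum_{s,a}d_h^{\pi^\star}(s,a)\mathsf{Var}_{P_{h,s,a}}(V_{h+1}^{\pi^\star})\lesssim H^2$ and $\sum_h\sum_{s,a}d_h^{\pi^\star}(s,a)H\lesssim H^2$ --- and combine with step (ii) above and the transferred $G$-optimal-design bound to obtain $\sum_h\sum_{s,a}\tfrac{d_h^{\pi^\star}(s,a)}{\max\{\widehat N_h^{\mathsf{b}}(s,a),1\}}\lesssim\tfrac{HSA}{K}$, hence $V^\star(\rho)-V^{\widehat\pi}(\rho)\lesssim\sqrt{H^3SA/K}+(\text{l.o.t.})$. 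Plugging in $K\ge c_K\tfrac{H^3SA}{\varepsilon^2}\log\tfrac{KH}{\delta}$ and checking that the lower-order terms (which scale with $\xi/N$ and logarithmic overheads) are $o(\varepsilon)$ under the stated sample-size conditions finishes the proof, after a union bound over the $m_{\mathsf{reward}}$ reward functions.
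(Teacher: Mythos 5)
Your overall architecture matches the paper's: reduce to a single reward function and union-bound over the $m_{\mathsf{reward}}=\mathsf{poly}(H,S,A)$ rewards; prove a uniform occupancy-estimation lemma by induction on $h$; use the Frank--Wolfe stopping rule to get the $2HSA$ design bound; show $\widehat N_h^{\mathsf{b}}$ lower-bounds the visit counts; and finish with the pessimistic offline bound plus total-variance estimates. However, two of your intermediate claims are quantitatively false as stated. First, your occupancy lemma claims a two-sided multiplicative bound with additive slack at the regularization level $\tfrac{1}{KH}$. The achievable additive floor is of order $\xi/N \asymp \sqrt{1/(H^3SAK)}$ (coming from the thresholded pairs and the Bernstein fluctuations of $\widehat P_h$ under $N$ episodes), and since $N\le KH$ this exceeds $\tfrac{1}{KH}$ by a factor of order $\sqrt{K/(HSA)}\gtrsim H/\varepsilon$. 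The paper's Lemma~\ref{lemma:occupancy} accordingly carries an additive $\xi/(4N)$ slack plus a residual $e_h^\pi$ controlled only in aggregate, and this weaker form cannot simply be ``upgraded'' to a global transfer of the design bound from $\widehat d$ to $d$ with the $1/(KH)$ denominator, because the slack summed against that denominator contributes $HSA\cdot\tfrac{\xi KH}{N}\gg HSA$.

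Second, and consequently, your claimed bound $\sum_{h}\sum_{s,a}\tfrac{d_h^{\pi^\star}(s,a)}{\max\{\widehat N_h^{\mathsf{b}}(s,a),1\}}\lesssim \tfrac{HSA}{K}$ fails: on pairs with $\widehat N_h^{\mathsf{b}}(s,a)=0$ the denominator is $1$, and the design bound only gives $d_h^{\pi^\star}(s,a)\lesssim HSA\,\xi/N$ there, so these pairs contribute order $H^2S^2A^2\xi/N\asymp\sqrt{HS^3A^3/K}$, which is far larger than $HSA/K$. Feeding that through Cauchy--Schwarz (which costs a square root) yields a term of order $H(HS^3A^3/K)^{1/4}\asymp\sqrt{HSA\varepsilon}\not\le\varepsilon$. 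The fix is exactly the paper's split into $\mathcal{I}_h$ and its complement: on the poorly-visited set you must use the cap $b_h\le H$ \emph{linearly} (giving $\alpha_2\lesssim H^2SA\xi/N\lesssim\sqrt{H^3SA/K}$), and reserve the Cauchy--Schwarz argument for $\mathcal{I}_h$ only. A smaller omission: the bonus actually built by Algorithm~\ref{alg:offline_RL} involves $\mathsf{Var}_{\widehat P_{h,s,a}}(\widehat V_{h+1})$, not $\mathsf{Var}_{P_{h,s,a}}(V_{h+1}^{\pi^\star})$; passing between them requires controlling $\sum_{h,s,a}d_h^{\pi^\star}(s,a)\mathsf{Var}_{P_{h,s,a}}(V^\star_{h+1}-\widehat V_{h+1})$, which the paper does via a bootstrapped crude bound on $\langle d_j^{\pi^\star},V_j^\star-\widehat V_j\rangle$; your outline skips this step.
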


The proof of this theorem is deferred to Section~\ref{sec:analysis-reward-agnostic}.  
Remarkably, Theorem~\ref{thm:main} establishes the sample complexity of the proposed algorithm for the reward-agnostic setting. 
More precisely, taking $N = c_N \sqrt{H^9S^7A^7K} \log\frac{HSA}{\delta}$ in Theorem~\ref{thm:main} reveals that: 
to yield an $\varepsilon$-optimal policy, it suffices for our algorithm  to have a sample size as small as
\begin{equation}
	\text{(sample complexity)} 
	\qquad N_{\mathsf{tot}}=K+NH \leq 2K = \widetilde{O}\bigg( \frac{H^3SA}{\varepsilon^2} \bigg) \text{ episodes}, 
	\label{eq:sample-complexity-implication}
\end{equation}
with the proviso that $K$ is sufficiently large (or equivalently, $\varepsilon$ is sufficiently small). 
Here, we recall that $N_{\mathsf{tot}}=K+NH$ is the total number of collected sample episodes. 
Encouragingly, this sample complexity is provably minimax-optimal up to logarithmic factor;  
to justify this, even in the reward-aware case with a single reward function of interest, 
it has been shown by \citet{jin2018q,domingues2021episodic} that the sample complexity cannot go below $\frac{H^3SA}{\varepsilon^2}$ (up to logarithmic factor) 
regardless of the algorithm in use. 
To the best of our knowledge, 
the present paper offers the first algorithm that provably achieves minimax optimality in this scenario.

What is more, the exploration algorithm we develop turns out to be sample-efficient for the reward-free setting as well, 
the scenario where one would like to simultaneously account for arbitrary reward functions (including those designed adversarially). 
Our theoretical guarantees are as follows. 
\begin{theorem}[Reward-free RL]
\label{thm:main-RFE}
Consider any given $0<\delta<1$ and $0<\varepsilon<1$. 
%Suppose that there are an arbitrary number of possibly adversarial reward functions of interest. 
Using the same batch of collected data that obey 
\begin{align}
	K \geq c_K \frac{H^3S^2A}{\varepsilon^2}\log \frac{KH}{\delta}\qquad\text{and}\qquad  KH\geq N \geq c_N \sqrt{H^9S^7A^7K} \log\frac{HSA}{\delta} 
\end{align}
for some sufficiently large universal constants $c_K,c_N>0$, 
we can guarantee that 
with probability at least $1-\delta$, the proposed algorithm (cf.~Algorithm~\ref{alg:main}) is able to achieve 
\begin{equation}
	V^{\star}(\rho) - V^{\widehat{\pi}}(\rho)   \leq  \varepsilon 
\end{equation}
uniformly over all possible reward functions. 

\end{theorem}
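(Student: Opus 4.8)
\noindent\textbf{Proof proposal for Theorem~\ref{thm:main-RFE}.}
The plan is to reuse essentially the entire analysis of Theorem~\ref{thm:main} developed in Section~\ref{sec:analysis-reward-agnostic} --- the exploration stage of Algorithm~\ref{alg:main} and every guarantee it produces are reward-independent --- and to replace only the last, reward-dependent step by an argument that is uniform over all reward functions. Concretely, on the same high-probability event used for Theorem~\ref{thm:main} I would take for granted the following purely reward-free facts: (i) for all $\pi\in\Pi$ and all $(s,a,h)$, the empirical occupancy $\widehat d_h^{\pi}(s,a)$ computed in Stage~1.1 approximates the population occupancy $d_h^{\pi}(s,a)$ up to the small additive/multiplicative error that the choice of $N$ controls; (ii) the quantity $\widehat N_h^{\mathsf b}(s,a)$ in \eqref{eq:lower-samples} is a valid lower bound on the number of visits to $(s,a)$ at step $h$ among the $K$ episodes of Stage~1.2, and in particular $\widehat N_h^{\mathsf b}(s,a)\gtrsim K\,\mathbb{E}_{\pi'\sim\behavior}[\widehat d_h^{\pi'}(s,a)]$ whenever $K\,\mathbb{E}_{\pi'\sim\behavior}[\widehat d_h^{\pi'}(s,a)]\gtrsim \tfrac{K\xi}{N}+\log\tfrac{HSA}{\delta}$; and (iii) the Kiefer--Wolfowitz bound of Lemma~\ref{lem:det}, i.e.\ $\max_{\pi\in\Pi}\sum_h\sum_{s,a}\tfrac{\widehat d_h^{\pi}(s,a)}{1/(KH)+\mathbb{E}_{\pi'\sim\behavior}[\widehat d_h^{\pi'}(s,a)]}\le HSA$ for the behavior weight $\behavior$ returned by Algorithm~\ref{alg:sub2}.

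Next I would run the pessimistic model-based offline RL subroutine of Appendix~\ref{sec:offline-algorithm} on the $K$ episodes, but with a Bernstein-style confidence bonus whose confidence radius is inflated by $\sqrt{S}$ relative to the reward-agnostic case, namely $b_h(s,a)\asymp\min\bigl\{\sqrt{\mathsf{Var}_{\widehat P_{h,s,a}}(\widehat V_{h+1})\,S\log(SAHK/\delta)/(\widehat N_h^{\mathsf b}(s,a)\vee1)},\,H\bigr\}$ plus a lower-order term of order $HS\log(SAHK/\delta)/(\widehat N_h^{\mathsf b}(s,a)\vee1)$. The point of the extra $\sqrt{S}$ is that, to handle \emph{all} reward functions simultaneously, the concentration inequality $|\langle\widehat P_{h,s,a}-P_{h,s,a},\,V\rangle|\lesssim\sqrt{\mathsf{Var}_{\widehat P_{h,s,a}}(V)\,S\log(\cdot)/\widehat N_h^{\mathsf b}(s,a)}+HS\log(\cdot)/\widehat N_h^{\mathsf b}(s,a)$ must be made uniform over all $V\in[0,H]^{S}$ via a covering-net argument (whose log-cardinality is $\asymp S\log(\cdot)$), whereas in the reward-agnostic case one may instead union-bound over the $\mathsf{poly}(S,A,H)$ reward functions and use a reward-specific leave-one-out argument with the sharper radius $\asymp\sqrt{\mathsf{Var}/\widehat N}$. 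Since the bonus and the concentration bounds underlying it do not depend on the reward function, the pessimism property $Q_h^{\mathsf{LCB}}\le Q^{\star}_{r,h}$ and the standard suboptimality decomposition $V^{\star}_r(\rho)-V^{\widehat{\pi}}_r(\rho)\lesssim\sum_h\sum_{s,a}d_h^{\pi^{\star}_r}(s,a)\,b_h(s,a)$ (with $\pi^{\star}_r$ the $r$-optimal policy) hold on a single good event, simultaneously for all $r$.

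It then remains to turn the right-hand side into the advertised rate. I would split the triples $(s,a,h)$ into the ``well-explored'' set $\mathcal{G}$ on which item~(ii) applies, and its complement $\mathcal{B}$, on which $\mathbb{E}_{\pi'\sim\behavior}[\widehat d_h^{\pi'}(s,a)]\lesssim\tau\coloneqq\xi/N+\log(HSA/\delta)/K$ and the contribution is capped at $H$. On $\mathcal{G}$: apply Cauchy--Schwarz to $d_h^{\pi^{\star}_r}(s,a)\sqrt{\mathsf{Var}_{P_{h,s,a}}(V^{\pi^{\star}_r}_{h+1})/(\widehat N_h^{\mathsf b}(s,a)\vee1)}$, use the law-of-total-variance bound $\sum_h\sum_{s,a}d_h^{\pi^{\star}_r}(s,a)\mathsf{Var}_{P_{h,s,a}}(V^{\pi^{\star}_r}_{h+1})\lesssim H^2$ quoted in Section~\ref{sec:intuition}, and bound $\sum_{\mathcal{G}}\tfrac{d_h^{\pi^{\star}_r}(s,a)}{\widehat N_h^{\mathsf b}(s,a)\vee1}\lesssim\tfrac1K\max_{\pi}\sum_h\sum_{s,a}\tfrac{\widehat d_h^{\pi}(s,a)}{1/(KH)+\mathbb{E}_{\pi'\sim\behavior}[\widehat d_h^{\pi'}(s,a)]}\lesssim\tfrac{HSA}{K}$ using items~(i)--(iii); this yields a $\mathcal{G}$-contribution of order $\sqrt{H^3S^2A\log(SAHK/\delta)/K}$ (the lower-order bonus term contributes only $O(H^2S^2A\log(\cdot)/K)$). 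On $\mathcal{B}$: bound the contribution by $H\sum_{\mathcal{B}}d_h^{\pi^{\star}_r}(s,a)\lesssim H\sum_{\mathcal{B}}\widehat d_h^{\pi^{\star}_r}(s,a)\lesssim H^2SA\,(\tfrac1{KH}+\tau)$ via items~(i),(iii), and check that $N\gtrsim\sqrt{H^9S^7A^7K}\log(HSA/\delta)$ makes this $\lesssim\sqrt{H^3S^2A\log(SAHK/\delta)/K}$ as well. Summing the two contributions and imposing $K\ge c_KH^3S^2A\varepsilon^{-2}\log(KH/\delta)$ then gives $V^{\star}_r(\rho)-V^{\widehat{\pi}}_r(\rho)\le\varepsilon$ for every reward function $r$.

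I expect the main obstacle to be making the offline RL guarantee genuinely reward-free: one must verify that \emph{every} probabilistic event the pessimistic subroutine relies on --- model concentration, validity of the inflated bonus, and correctness of the $\widehat N_h^{\mathsf b}$-based subsampling --- can be decoupled from the reward (this is exactly what forces the $\sqrt{S}$-inflated, net-based confidence radius in place of the leave-one-out one), and then to control the poorly-explored mass $\sum_{\mathcal{B}}d_h^{\pi^{\star}_r}(s,a)$ under the \emph{reward-dependent} optimal occupancy measure purely through the behavior policy $\behavior$ and the occupancy-estimation accuracy afforded by $N$ --- it is this last calculation that pins down the required scaling $N\gtrsim\sqrt{H^9S^7A^7K}\log(HSA/\delta)$.
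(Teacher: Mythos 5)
Your proposal is correct and follows essentially the same route as the paper: the key ingredient in both is to replace the reward-specific (leave-one-out) concentration by an $\epsilon$-net-based bound uniform over all $V\in[0,H]^S$, which inflates the Bernstein penalty by $\sqrt{S}$ (this is exactly the paper's Lemma~\ref{lemma:reward-free-concentration} and the penalty \eqref{def:bonus-Bernstein-finite-free}), after which the reward-agnostic analysis of Theorem~\ref{thm:main} is rerun verbatim to yield the extra factor of $S$ in the sample complexity. Your $\mathcal{G}/\mathcal{B}$ split mirrors the paper's $\mathcal{I}_h$ decomposition, and the only cosmetic difference is that you phrase the variance control directly in terms of $V^{\pi^{\star}_r}$ rather than via the paper's decomposition of $\mathsf{Var}_{P_{h,s,a}}(\widehat{V}_{h+1})$ into $\beta_{1,1}+\beta_{1,2}$, which is subsumed by your stated intent to reuse the Theorem~\ref{thm:main} analysis.
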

The proof of this theorem is postponed to Section~\ref{sec:analysis-RFE}. 
In a nutshell, the proof is based on the analysis for the reward-agnostic case in conjuction with standard uniform concentration bounds. 
Theorem \ref{thm:main-RFE} asserts that the sample complexity required for the proposed algorithm to achieve $\varepsilon$-optimal policy is
\begin{equation}
	\text{(sample complexity)} 
	\qquad N_{\mathsf{tot}}=K+NH \leq 2K = \widetilde{O}\bigg( \frac{H^3S^2A}{\varepsilon^2} \bigg) \text{ episodes},
	\label{eq:sample-complexity-implication-reward-free}
\end{equation}
provided that we take $N = c_N \sqrt{H^9S^7A^7K} \log\frac{HSA}{\delta}$ and that $K$ is sufficiently large.  
In comparison to \eqref{eq:sample-complexity-implication}, this sample complexity \eqref{eq:sample-complexity-implication-reward-free} is $S$ times larger, 
due to the more stringent requirement to accommodate all reward functions uniformly. 
Interestingly, 
this theory matches  the state-of-the-art sample complexity result (i.e., \citet{menard2021fast}) derived so far for this reward-free setting. 
Regard lower bounds, it has been previously shown in \cite{jin2020reward} that 
a sample size on the order of $H^2S^2 A/\varepsilon^2$ (up to logarithmic factor) is necessary for {\em time-homogeneous} finite-horizon MDPs; 
it is widely conjectured that the minimax lower bound for the inhomogeneous case studied herein should be a factor of $H$ larger than the lower limit for the homogeneous counterpart.

\section{Prior art} 
\label{sec:related-works}

In this section, we briefly overview a subset of other related works.

\paragraph{Reward-aware exploration.} 
The studies of online exploration have been a central topic in RL. 
Here, we mention in passing a small number of representative works. 
The development of the UCRL algorithm, the UCRL2 algorithm and their variants \citep{jaksch2010near,auer2006logarithmic} 
exemplified earlier effort in implementing the optimism principle in the face of uncertainty. 
A more sample-efficient model-based online RL algorithm, called {\em upper confidence bound
value iteration (UCBVI)}, was later proposed by \citet{azar2017minimax}, 
which yields minimax-optimal regret asymptotically.  
Turning to model-free algorithms, \citet{jin2018q} justified the efficacy of Q-learning (in conjunction with UCB-type exploration) in online RL, 
which achieves a regret that is a factor of $\sqrt{H}$ away from optimal if the Bernstein-type confidence bounds are employed. 
Other versions of Q-learning-type algorithms, including the ones that come with low switching cost and the ones tailored to discounted infinite-horizon MDPs, 
have since been developed \citep{bai2019provably,dong2019q}, 
among which a variance-reduced variant achieves asymptotically optimal regret \citep{zhang2020almost}. 
The recent works \citet{menard2021ucb,li2021breaking,zhang2023settling} 
further investigated how to reduce the burn-in cost --- that is, the sample size required to attain sample optimality. The minimax-optimal sample complexity with minimal burn-ins was recently settled by  \citet{zhang2023settling}. 
%
% Extensions to accommodate linear function approximation have been an active research topic for the past few years as well \citet{jin2020provably,}.
All these algorithms, however, rely on {\em a priori} reward information, which are ill-suited to the reward-agnostic convext.  
Additionally, information-theoretic regret lower bounds for reward-aware online RL were first developed by \citet{jin2018q,jaksch2010near}, 
and revisited later on by \citet{domingues2021episodic}. 
Going beyond the tabular case, a number of papers have further pursued efficient reward-aware exploration in the presence of low-dimensional function approximation (e.g., \citet{jin2020provably,ayoub2020model,du2021bilinear,li2021sample}).

\paragraph{Reward-free and task-agnostic exploration.} 
Moving on to reward-free exploration (the case where one is asked to account for an arbitrarily large number of reward functions), 
the R-max type algorithm proposed in the earlier work  \citet{brafman2002r} incurs a sample complexity at least as large as $\frac{H^{11}S^2A}{\varepsilon^3}$ (see \citet[Appendix A]{jin2020reward}). 
Recently, \citet{jin2020reward} came up with a clever scheme --- with instantaneous rewards chosen to be indicator functions regarding state/action visitations 
--- that is guaranteed to work with $\widetilde{O}\big(\frac{H^5S^2A}{\varepsilon^2}\big)$ sample episodes. 
This sample complexity bound is further improved by \citet{kaufmann2021adaptive,menard2021fast}, 
with \citet{menard2021fast} designing an algorithm with optimal sample complexity (i.e., $\widetilde{O}\big(\frac{H^3S^2A}{\varepsilon^2}\big)$) in the reward-free setting. 
Another work \citet{zhang2021near} studied a different setting with ``totally bounded rewards'' (so that the sum of immediate rewards over all steps is bounded above by 1) in time-homogeneous finite-horizon MDPs; 
when translated to the bounded reward setting (so that any immediate reward is bounded above by 1),  
the algorithm put forward in \citet{zhang2021near} exhibited a sample size of  $\widetilde{O}\big(\frac{H^2S^2A}{\varepsilon^2}\big)$ episodes.\footnote{Note that this horizon dependency $H^2$ (proven for homogeneous MDPs) is 
in general not possible in inhomogeneous MDPs.}   
Reward-free exploration with low policy switching cost has been investigated in 
\citet{qiao2022sample}, while RFE in non-tabular case (e.g., the case with low-dimensional function approximation) has been further explored in 
\citet{wang2020reward,wagenmaker2022reward,qiu2021reward,zhang2021reward,chen2021near,zanette2020provably,misra2020kinematic,qiao2022near,chen2022statistical,zhang2022efficient,mhammedi2024efficient,chen2022unified}.  
It is also related to the problem of uniform policy evaluation, which aims to ensure reliable policy evaluation uniformly over all policies \citep{yin2021optimal}. 
In contrast to the reward-free setting that covers arbitrarily many reward functions, 
\citet{zhang2020task} assumed the existence of only $N$ reward functions of interest, and proposed a model-free algorithm that enjoys a sample complexity of $\widetilde{O}\big(\frac{H^5SA\log m_{\mathsf{reward}}}{\varepsilon^2}\big)$. 
This result, however, is suboptimal in terms of the horizon dependency. 
Finally, reward-free exploration has also been studied in the context of 
safe RL \citep{huang2022safe} and 
constrained RL \citep{miryoosefi2022simple}, which are beyond the scope of the current paper.

\paragraph{Offline RL.} 
The past few years have seen much activity in offline RL (also called batch RL) \citep{levine2020offline}. 
The principle of pessimism (or conservatism) in the face of uncertainty has been shown to be effective in solving offline RL  \citep{kumar2020conservative,jin2021pessimism,rashidinejad2021bridging,li2022settling,jin2022policy}. 
The sample complexity of offline RL in the tabular case has been tackled by a recent line of works 
 \citep{rashidinejad2021bridging,xie2021policy,yin2021towards,shi2022pessimistic,li2022settling,yan2022efficacy,yin2021near}; 
take finite-horizon MDPs with inhomogeneous transition kernels for example:  
the minimax-optimal sample complexity for attaining $\varepsilon$-accuracy is shown to be $\frac{H^3 SC^{\star}}{\varepsilon^2}$ episodes for any $\varepsilon\in (0,H]$, 
where $C^{\star}$ stands for some single-policy concentrability coefficient that captures the resulting distribution shift between the optimal policy and the behavior policy 
 \citep{li2022settling}. 
In particular, sample optimality for the full $\varepsilon$-range is achievable via the model-based approach  \citep{li2022settling}, 
and establishing this result relies on modern statistical tools like the leave-one-out decoupling argument \citep{agarwal2019optimality,li2020breaking}. 
Sample-efficient offline RL algorithms have also been studied in more complicated scenarios, 
including but not limited to the case with linear function approximation \citep{jin2021pessimism,xu2022provably} and zero-sum Markov games \citep{cui2022offline,cui2022provably,yan2022model}.

%\citet{}

\section{Analysis for reward-agnostic exploration (proof of Theorem~\ref{thm:main})}
\label{sec:analysis-reward-agnostic}

In this section, we present the proof of our main result for reward-agnostic exploration in Theorem~\ref{thm:main}. 
Towards this end, we shall first establish the following result when there is a single reward function of interest (i.e., $m_{\mathsf{reward}}=1$).

\begin{theorem}
\label{thm:main-single}
Consider any given $0<\delta<1$ and $0<\varepsilon<1$. 
Suppose that there is only $m_{\mathsf{reward}}=1$ reward function of interest and it is independent of the data samples. 
With probability at least $1-\delta$, the proposed algorithm (cf.~Algorithm~\ref{alg:main}) achieves 
\begin{equation}
	V^{\star}(\rho) - V^{\widehat{\pi}}(\rho)   \leq  \varepsilon, 
\end{equation}
provided that
\begin{align}
K \geq c_K \frac{H^3SA}{\varepsilon^2}\log \frac{KH}{\delta}\qquad\text{and}\qquad  KH\geq N \geq c_N \sqrt{H^9S^7A^7K} \log\frac{HSA}{\delta} 
\end{align}
for some sufficiently large universal constants $c_K,c_N>0$. 
\end{theorem}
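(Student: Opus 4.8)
The plan is to decompose the regret through the offline RL guarantee and then control the resulting reward-agnostic quantity via the exploration design, handling the estimation error in the occupancy distributions as a separate perturbation term. First I would invoke the offline RL guarantee of \citet{li2022settling} (adapted to the subsampling construction via $\widehat N_h^{\mathsf{b}}(s,a)$ in~\eqref{eq:lower-samples}): conditioned on the event that $\widehat N_h^{\mathsf{b}}(s,a)$ is a valid high-probability lower bound on the number of visits to $(s,a,h)$ among the $K$ behavior episodes, the pessimistic model-based estimate $\widehat\pi$ satisfies a bound of the form \eqref{defi:offline}, with $\mathbb{E}_{\pi'\sim\mu_{\mathsf{b}}}[d_h^{\pi'}(s,a)]$ replaced by the empirically certified lower bound $\widehat N_h^{\mathsf{b}}(s,a)/K$. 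Then, following the chain of inequalities displayed in Section~\ref{sec:intuition} (the $\min\{x/y+u/w\}\le (x+u)/(y+w)$ trick, Cauchy--Schwarz, and the total-variance bound $\sum_{h,s,a}d_h^{\pi^\star}(s,a)\mathsf{Var}_{P_{h,s,a}}(V_{h+1}^{\pi^\star})\lesssim H^2$), I would reduce everything to controlling
\begin{align*}
	\max_{\pi\in\Pi}\sum_{h}\sum_{s,a}\frac{d_h^{\pi}(s,a)}{\tfrac1{KH}+\mathbb{E}_{\pi'\sim\behavior}[\widehat d_h^{\pi'}(s,a)]}.
\end{align*}

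The second step is to pass between the true occupancy distributions $d_h^\pi$ and the empirical ones $\widehat d_h^\pi$, and between the exact maximizer of~\eqref{defi:target-empirical} and the Frank--Wolfe output $\behavior$. For the latter, I would use Lemma~\ref{lem:iteration-complexity-subroutine} together with the stopping criterion $g(\pi^{(t)},\widehat d,\mu_{\mathsf{b}}^{(t)})\le 2HSA$, which directly certifies $\max_{\pi\in\Pi}\sum_{h,s,a}\widehat d_h^{\pi}(s,a)/(\tfrac1{KH}+\mathbb{E}_{\pi'\sim\behavior}[\widehat d_h^{\pi'}(s,a)])\le 2HSA$ — essentially an algorithmic surrogate for Lemma~\ref{lem:det}. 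For the former, I would establish a uniform-over-policies concentration bound showing $|\widehat d_h^{\pi}(s,a)-d_h^{\pi}(s,a)|$ is small; this is where the forward construction of $\widehat P_h$ from fresh episodes at each step, the hard-thresholding at level $\xi$, and the choice $N\gtrsim\sqrt{H^9S^7A^7K}\log(HSA/\delta)$ enter. The thresholding guarantees that on the coordinates retained, $\widehat P_h(\cdot\mid s,a)$ is built from $\gtrsim\xi$ samples, so multiplicative Bernstein/Freedman bounds give a relative error small enough that, after propagating through $H$ steps of the recursion~\eqref{eq:d_empirical}, the ratios above are perturbed by at most a constant factor; coordinates killed by the threshold contribute negligibly because $d_h^\pi(s,a)$ there is provably $O(\xi/N)$, which is absorbed into the $\tfrac1{KH}$ regularization once $N\gtrsim KH\xi$-type conditions hold (this is the role of $KH\ge N$ and the polynomial-in-$H,S,A$ factor in the bound on $N$).

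Combining these pieces gives $V^\star(\rho)-V^{\widehat\pi}(\rho)\lesssim H\sqrt{HSA/K}\cdot(\text{const})\le\varepsilon$ once $K\gtrsim H^3SA\varepsilon^{-2}\log(KH/\delta)$, which is the claim. I expect the main obstacle to be the uniform control of the occupancy-estimation error over the (exponentially large) class $\Pi$ of deterministic policies while simultaneously tracking the multiplicative nature of the error through the $H$-fold recursion — naive union bounds over $\Pi$ are too lossy, so one must exploit that $\widehat d_h^\pi$ depends on $\pi$ only through the per-step conditionals and that the error propagates linearly, reducing the problem to a per-$(s,a,h)$ concentration statement for $\widehat P_h$ that is then combined deterministically. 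A secondary technical point is verifying that $\widehat N_h^{\mathsf{b}}(s,a)$ from~\eqref{eq:lower-samples} is indeed a legitimate high-probability lower bound on the realized visitation counts, which requires relating $\mathbb{E}_{\pi\sim\behavior}[\widehat d_h^\pi(s,a)]$ to $\mathbb{E}_{\pi\sim\behavior}[d_h^\pi(s,a)]$ (again via the occupancy-error bound) and then a Bernstein bound on the $K$ i.i.d.\ behavior episodes; the constants $\tfrac14$, $\tfrac{K\xi}{8N}$, and $3\log(HSA/\delta)$ in~\eqref{eq:lower-samples} are precisely calibrated for this.
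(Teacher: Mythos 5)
Your overall architecture matches the paper's: invoke the pessimistic offline RL guarantee of \citet{li2022settling} with $\widehat N_h^{\mathsf{b}}$ as a certified visitation lower bound, reduce the sub-optimality gap to the ratio quantity $\sum_{h,s,a} d_h^{\pi^\star}(s,a)/\big(\tfrac{1}{KH}+\mathbb{E}_{\pi'\sim\behavior}[\widehat d_h^{\pi'}(s,a)]\big)$ via the $\min$-trick, Cauchy--Schwarz and the total-variance bound, certify that ratio via the stopping rule of Algorithm~\ref{alg:sub2} (your ``algorithmic surrogate for Lemma~\ref{lem:det}'' is exactly the paper's inequality \eqref{eq:near-optimality-condition}), and handle the $d_h^\pi$ versus $\widehat d_h^\pi$ gap through a uniform-over-$\Pi$ occupancy estimation lemma with the under-visited coordinates absorbed via the threshold $\xi$. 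Your identification of the secondary point about $\widehat N_h^{\mathsf{b}}$ is also correct and is proved in the paper exactly as you describe (Bernstein on the $K$ behavior episodes plus the occupancy-error bound).

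There are, however, two places where your plan diverges in a way that matters. First, you assert that a naive union bound over $\Pi$ is ``too lossy'' and propose instead to reduce to per-$(s,a,h)$ concentration for $\widehat P_h$ combined deterministically. The paper does the opposite: it takes the union bound over all $|\Pi|\le A^{HS}$ deterministic policies, paying only a factor $HS\log A$ inside the logarithm of the Bernstein bound, and this is affordable precisely because $\xi\asymp H^3S^3A^3\log(HSA/\delta)$ and $N\gtrsim\sqrt{H^9S^7A^7K}$ are taken polynomially large. The union bound is in fact essential: the two-sided multiplicative guarantee $\tfrac12\widehat d_h^\pi(s,a)-\tfrac{\xi}{4N}\le d_h^\pi(s,a)\le 2\widehat d_h^\pi(s,a)+2e_h^\pi(s,a)+\tfrac{\xi}{4N}$ (Lemma~\ref{lemma:occupancy}) is obtained coordinate-wise because the Bernstein variance proxy is itself proportional to $d_{h+1}^\pi(s,a)$; a deterministic $\ell_1$-type combination of per-$(s,a,h)$ bounds on $\widehat P_h$ discards that variance information and would not deliver the per-coordinate multiplicative control that the downstream $\alpha_1$/$\alpha_2$ split relies on. Second, your reduction leans on the chain of inequalities from Section~\ref{sec:intuition}, but that chain is written for $\mathsf{Var}_{P_{h,s,a}}(V_{h+1}^{\pi^\star})$, whereas the actual bonus \eqref{def:bonus-Bernstein-finite} involves $\mathsf{Var}_{\widehat P_{h,s,a}}(\widehat V_{h+1})$. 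Bridging the two requires the variance-transfer inequality \eqref{eq:Variance-UB-offline-old} plus a bootstrap: one first derives a crude $O(1)$ bound on $\langle d_j^{\pi^\star},V_j^\star-\widehat V_j\rangle$ and then feeds it back to show $\sum_{h,s,a}d_h^{\pi^\star}(s,a)\mathsf{Var}_{P_{h,s,a}}(V_{h+1}^\star-\widehat V_{h+1})\le H^2$. This two-pass step is not in the intuition section and your sketch does not account for it.
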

As it turns out, Theorem~\ref{thm:main} (with $m_{\mathsf{reward}}=\mathsf{poly}(H,S,A)$) 
is a direct consequence of Theorem~\ref{thm:main}.  
To see this, replacing $\delta$ with $\delta / m_{\mathsf{reward}} \asymp \delta / \mathsf{poly}(H,S,A)$ in Theorem~\ref{thm:main-single} 
and taking the union bound over all $m_{\mathsf{reward}}$ reward functions of interest suffice to justify Theorem~\ref{thm:main}. 
Consequently, the remainder of this section is dedicated to proving Theorem~\ref{thm:main-single}.

Before continuing, we isolate several additional notation that might be useful in presenting our proof. 
We introduce the following vector notation for value functions: for any $1\leq h\leq H$, 
\begin{equation}
	V_h^{\pi} = \big[ V_h^{\pi}(s) \big]_{s\in \cS}  \quad \text{for any policy }\pi
	\qquad \text{and} \qquad 
	V_h^{\star} = \big[ V_h^{\star}(s) \big]_{s\in \cS}. 
\end{equation}
We shall also use $\widehat{V}_h=[\widehat{V}_h(s)]_{s\in \cS}$ to represent the value estimate for step $h$ by Algorithm~\ref{alg:offline_RL}.  
The state occupancy distribution for policy $\pi$ at step $h\in [H]$ is also represented by the following vector
\begin{equation}
	d_h^{\pi} = \big[ d_h^{\pi}(s) \big]_{s\in \cS} .
\end{equation}
We  also use the shorthand notation $P_{h,s,a} \in \mathbb{R}^{1\times S}$ (resp.~$\widehat{P}_{h,s,a} \in \mathbb{R}^{1\times S}$) to denote 
$P_h(\cdot\mymid s,a)$ (resp.~$\widehat{P}_h(\cdot\mymid s,a)$).

\subsection{Preliminary facts}

Before embarking on the proofs of our main theory, 
we first gather several useful preliminary results. 

%\paragraph{Properties about $\{\widehat{N}_h^{\mathsf{b}}\}$.} 
%

%Recall the definition~\eqref{eq:lower-samples} for $\widehat{N}_h^{\mathsf{b}}(s,a)$, which implies that $\widehat{N}_h^{\mathsf{b}}(s, a)$ serves as an independent lower bound for the number of sample transitions. \yxc{modify} 

\paragraph{A direct consequence of the stopping rule.} 

To begin with, recall the stopping rule of Algorithm \ref{alg:sub2} (see line~\ref{line:stopping-alg:sub2}) and the optimality of $\pi^{(t)}$ (see line~\ref{line:pit-alg:sub2} in Algorithm \ref{alg:sub2} and also the discussion at the end of Section~\ref{sec:subroutines}). 
These taken collectively allow one to demonstrate that for any policy $\pi$, 
\begin{align}
%\max_{\pi}
	\sum_{h=1}^{H}\sum_{(s,a)\in\mathcal{S}\times\mathcal{A}}\frac{\widehat{d}_{h}^{\pi}(s,a)}{\frac{1}{KH}+\mathbb{E}_{\pi'\sim\behavior}\big[\widehat{d}_{h}^{\pi^{\prime}}(s,a)\big]} & = \sum_{h=1}^{H}\sum_{(s,a)\in\mathcal{S}\times\mathcal{A}}\frac{\widehat{d}_{h}^{\pi^{(t)}}(s,a)}{\frac{1}{KH}+\mathbb{E}_{\pi'\sim\behavior}\big[\widehat{d}_{h}^{\pi^{\prime}}(s,a)\big]}\nonumber\\
 & = g(\pi^{(t)},\widehat{d},\widehat{\mu}_{\mathsf{b}})\leq2HSA, 
	\label{eq:near-optimality-condition}
\end{align}
where $\behavior$ is the output returned by Algorithm~\ref{alg:sub2}, and $\pi^{(t)}$ represents the optimal policy computed right before the termination of this algorithm. 
Here, the first relation arises since the left-hand side is the value function of the MDP $\mathcal{M}_{\mathsf{b}}$ w.r.t.~policy $\pi$, which is maximized by $\pi^{(t)}$; 
 the last inequality is due to the stopping rule. 
This property \eqref{eq:near-optimality-condition}, 
which results from our carefully designed exploration scheme, plays a key role in the subsequent analysis.

\paragraph{Properties about the model-based offline algorithm. }
Next, we collect a couple of preliminary facts that have been previously established in \citet{li2022settling} for the model-based algorithm in Algorithm~\ref{alg:offline_RL}. 
\begin{lemma}
	\label{lem:model-based-offline-prior}
	With probability exceeding $1-\delta$, 
	the policy $\widehat{\pi}$ returned by Algorithm~\ref{alg:offline_RL} obeys
	\begin{align}
		\big\langle d_{h}^{\pi^{\star}},V_{h}^{\star}-V_{h}^{\widehat{\pi}}\big\rangle 
		& \leq \big\langle d_{h}^{\pi^{\star}},V_{h}^{\star}- \widehat{V}_{h}\big\rangle
		\leq2\sum_{j:j\geq h}\sum_{s\in\mathcal{S}}d_{j}^{\pi^{\star}}(s)b_{j}\big(s,\pi_{j}^{\star}(s)\big)
		\leq2\sum_{j:j\geq h}\sum_{(s,a)\in\mathcal{S}\times\mathcal{A}}d_{j}^{\pi^{\star}}(s,a)b_{j}(s,a)	 
		\label{eq:error-decomposition-old-offline}
	\end{align}
	for all $1\leq h\leq H$. 
	Additionally, with probability exceeding $1-\delta$, one has
	\begin{equation}
		\mathsf{Var}_{\widehat{P}_{h,s,a}}(\widehat{V}_{h+1})\leq2\mathsf{Var}_{P_{h,s,a}}(\widehat{V}_{h+1})+\frac{5H^{2}\log\frac{KH}{\delta}}{\widehat{N}_{h}^{\mathsf{b}}(s,a)},
		\qquad\forall(s,a,h)\in\mathcal{S}\times\mathcal{A}\times[H].
		\label{eq:Variance-UB-offline-old}
	\end{equation}
	Here,  $\widehat{P}=\{\widehat{P}_h\}_{1\leq h\leq H}$ denotes the empirical transition kernel constructed in Algorithm~\ref{alg:offline_RL}.
\end{lemma}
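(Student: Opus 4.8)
\textbf{Proof plan for Lemma~\ref{lem:model-based-offline-prior}.}
The statement simply records two facts that have already been established (in slightly different notation) in the analysis of the pessimistic model-based offline algorithm of \citet{li2022settling}; the task is to verify that those results apply verbatim here, modulo the cosmetic change that the per-$(s,a,h)$ visitation-count surrogate is $\widehat N_h^{\mathsf b}(s,a)$ (defined in \eqref{eq:lower-samples}) rather than the count obtained via two-fold sample splitting. The plan is therefore to invoke the offline-RL machinery as a black box after checking the two hypotheses it needs: (i) the lower confidence bonuses $b_j(s,a)$ used in Algorithm~\ref{alg:offline_RL} are built from the Bernstein-type construction driven by $\widehat N_h^{\mathsf b}(s,a)$, and (ii) $\widehat N_h^{\mathsf b}(s,a)$ is, with probability at least $1-\delta$, a valid lower bound on the number of visits to $(s,a)$ at step $h$ among the $K$ behavior episodes (this is exactly the claim flagged after \eqref{eq:lower-samples}, to be proven in Section~\ref{sec:proof-main}). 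Granting these, the pessimism property $\widehat V_h \le V_h^{\widehat\pi}$ and $\widehat V_h \le V_h^\star$ pointwise, together with the value-difference decomposition, yield the chain \eqref{eq:error-decomposition-old-offline}.

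For the first display \eqref{eq:error-decomposition-old-offline}, I would proceed exactly as in \citet{li2022settling}: the first inequality $\langle d_h^{\pi^\star}, V_h^\star - V_h^{\widehat\pi}\rangle \le \langle d_h^{\pi^\star}, V_h^\star - \widehat V_h\rangle$ follows because pessimism guarantees $\widehat V_h(s) \le V_h^{\widehat\pi}(s)$ for every $s$ (the value estimate lower-bounds the value of the returned greedy policy), and $d_h^{\pi^\star}$ is a nonnegative measure. The middle inequality is the standard recursive unrolling: write $V_h^\star - \widehat V_h$ via the Bellman optimality equations for $\widehat V$ and $V^\star$, use that the penalized empirical Bellman operator underestimates the true one by at most $2 b_j(s,\pi_j^\star(s))$ on the optimal action (a consequence of the Bernstein concentration of $\widehat P$ around $P$ together with the bonus being chosen to dominate that fluctuation), and propagate forward along the trajectory distribution $d_j^{\pi^\star}$ for $j \ge h$ — a telescoping / induction-on-$h$-downward argument. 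The final inequality is trivial, since $d_j^{\pi^\star}(s,a) = d_j^{\pi^\star}(s)\ind(a = \pi_j^\star(s))$ for the deterministic optimal policy, so the sum over $a$ just reinstates the action that was already singled out and all other terms are nonnegative.

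For the second display \eqref{eq:Variance-UB-offline-old}, the argument is the variance-comparison lemma from \citet{li2022settling}: one controls $|\mathsf{Var}_{\widehat P_{h,s,a}}(\widehat V_{h+1}) - \mathsf{Var}_{P_{h,s,a}}(\widehat V_{h+1})|$ by a Bernstein bound on $(\widehat P_{h,s,a} - P_{h,s,a})$ applied to the functions $\widehat V_{h+1}$ and $\widehat V_{h+1}^2$ (both bounded by $H$ and $H^2$ respectively), with the effective sample size being $\widehat N_h^{\mathsf b}(s,a)$; absorbing the cross term via $2ab \le a^2 + b^2$ and the crude bound $\mathsf{Var} \le H^2$ then gives the stated $2\,\mathsf{Var}_{P_{h,s,a}}(\widehat V_{h+1}) + 5H^2\log(KH/\delta)/\widehat N_h^{\mathsf b}(s,a)$. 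One subtlety worth a sentence: $\widehat V_{h+1}$ is itself data-dependent, so the concentration step is carried out conditionally using the backward-in-$h$ construction of $\widehat V$ (it depends only on transitions at steps $\ge h+1$, independent of the step-$h$ transitions) — this is precisely why the subsampling scheme replacing two-fold splitting still suffices, and it is the only place where the present setup differs from \citet{li2022settling}.

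The main obstacle, therefore, is not the algebra of the decomposition (which is inherited) but verifying that the count surrogate $\widehat N_h^{\mathsf b}(s,a)$ legitimately plays the role of the splitting-based count: one must ensure the high-probability lower bound on actual visitations holds and that the bonuses and the variance lemma in Algorithm~\ref{alg:offline_RL} are instantiated with $\widehat N_h^{\mathsf b}$ consistently. Once that bookkeeping is in place, Lemma~\ref{lem:model-based-offline-prior} follows directly from \citet[Sections on the model-based offline algorithm]{li2022settling}.
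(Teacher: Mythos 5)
Your proposal matches the paper's own treatment: the paper likewise disposes of \eqref{eq:error-decomposition-old-offline} by citing \citet[Eqn.~(126)]{li2022settling} and of \eqref{eq:Variance-UB-offline-old} by citing \citet[Lemma~8]{li2022settling} together with the independence of $\widehat{V}_{h+1}$ from $\widehat{P}_{h,s,a}$ guaranteed by the subsampling design --- exactly the subtlety you flag. Your additional sketch of the internal pessimism/telescoping and variance-comparison arguments is consistent with the cited machinery, so the proposal is correct and takes essentially the same route.
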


\begin{proof}
	The claim~\eqref{eq:error-decomposition-old-offline} follows immediately from \citet[Eqn.~(126)]{li2022settling}. 
	The claim~\eqref{eq:Variance-UB-offline-old} follows from \citet[Lemma 8]{li2022settling} 
	and the fact that $\widehat{V}_{h+1}$ is independent of $\widehat{P}_{h,s,a}$ (owing to the algorithm design; see \citet[Section~5.2]{li2022settling}). 
\end{proof}

%\paragraph{Additional notation.} 
%

\subsection{Main steps} \label{sec:proof-main}

\paragraph{Step 1: accuracy of estimated occupancy distributions.} 
We first provide estimation guarantees for the estimated occupancy distributions $\widehat{d}_h^\pi$ 
(see \eqref{eq:hat-d-1-alg} and \eqref{eq:hat-d-h-alg} in Algorithm \ref{alg:main}). The proof is deferred to Appendix \ref{sec:proof-lemma-occupancy}.
\begin{lemma} \label{lemma:occupancy}
	Recall that $\xi=c_{\xi}H^3S^3A^3 \log \frac{HSA}{\delta}$ for some large enough constant $c_{\xi}>0$ (see \eqref{eq:hard-threshold}). 
	With probability at least $1-\delta$,  
	the estimated occupancy distributions specified in \eqref{eq:hat-d-1-alg} and \eqref{eq:hat-d-h-alg} of Algorithm \ref{alg:main} satisfy
	\begin{align}
		\frac{1}{2}\widehat{d}_{h}^{\pi}(s, a) - \frac{\xi}{4N} \le d_{h}^{\pi}(s, a) \le 2\widehat{d}_{h}^{\pi}(s, a) + 2e_{h}^{\pi}(s, a) + \frac{\xi}{4N} \label{eq:d-error}
	\end{align}
	simultaneously for all $(s,a,h)\in\mathcal{S}\times\mathcal{A} \times [H]$ and all deterministic policy $\pi\in\Pi$, 
	provided that 
	\begin{align}
		KH \geq N \geq C_N\sqrt{H^9S^7A^7K}\log\frac{HSA}{\delta} 
		\qquad \text{and}  \qquad 
		K\geq C_K HSA
		\label{eq:N-K-lower-bound-lemma2}
	\end{align}
	for some large enough constants $C_N,C_K>0$. 
	Here,  $\{e_{h}^{\pi}(s, a)\}$ is some non-negative sequence satisfying
	\begin{align}
		\sum_{s, a} e_{h}^{\pi}(s, a) \leq \frac{2SA}{K} + \frac{13SAH\xi}{N}  \lesssim \sqrt{\frac{SA}{HK}}
		\qquad \text{for all }h\in[H]\text{ and all deterministic Markov policy }\pi. \label{eq:e-sum-zeta}
	\end{align}
\end{lemma}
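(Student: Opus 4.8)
The plan is to establish \eqref{eq:d-error}--\eqref{eq:e-sum-zeta} by a forward induction on $h=1,\dots,H$, carrying as the inductive invariant both the two-sided bound \eqref{eq:d-error} (for every deterministic Markov $\pi$) and a running bound on the cumulative slack $\sum_{s,a}e_h^\pi(s,a)$. The observation that makes this tractable despite there being $A^{SH}$ deterministic policies is that all randomness feeding into $\widehat d_h^\pi$ sits in policy-\emph{independent} objects --- the initial sample $\{s_1^{n,0}\}$ and the truncated empirical kernels $\{\widehat P_j\}_{j<h}$ --- while a deterministic policy $\pi$ enters only through its fixed $0/1$ masks $\pi_j(\cdot\mymid\cdot)$. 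Hence it suffices to place ourselves on a handful of high-probability events and then argue deterministically, uniformly over all $\pi$.

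First I would define, and control by a union bound, three events, each of probability at least $1-\delta/\poly(H,S,A)$: (i) $\mathcal E_0$, that the empirical initial distribution obeys $\tfrac12\widehat d_1(s)-\tfrac{\xi}{4N}\le\rho(s)\le 2\widehat d_1(s)+\tfrac{\xi}{4N}$ for all $s$, via a multiplicative Chernoff bound (the additive slack covering masses below the sampling resolution $\sim\tfrac{\log}{N}$); (ii) $\mathcal E_1$, that for every $h$ and every $(s',a')$ with $N_h(s',a')>\xi$ one has a Bernstein-type two-sided estimate such as $\widehat P_h(s\mymid s',a')\le (1+\tfrac1H)P_h(s\mymid s',a')+\tfrac{c\log(HSA/\delta)}{\xi H}$ and its reverse, obtained by conditioning on $N_h(s',a')$, applying Bernstein to that many fresh i.i.d.\ draws from $P_h(\cdot\mymid s',a')$, and unioning over the admissible values of $N_h(s',a')$ and over $(s',a',h)$; and (iii) $\mathcal E_2$, that $\tfrac12 N d_h^{\piexploreh}(s,a)-O(\log\tfrac{HSA}{\delta})\le N_h(s,a)\le 2N d_h^{\piexploreh}(s,a)+O(\log\tfrac{HSA}{\delta})$, which holds because, conditioned on the data gathered before round $h$, $N_h(s,a)\sim\mathrm{Binomial}\big(N,d_h^{\piexploreh}(s,a)\big)$ under the true kernel ($\piexploreh$ being measurable w.r.t.\ the earlier rounds).

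Next comes the induction. The base case $h=1$ is $\mathcal E_0$ times the mask $\pi_1(a\mymid s)$, with $e_1^\pi\equiv 0$. For the step, split $\cS\times\cA$ into $\mathcal G_h=\{(s',a'):N_h(s',a')>\xi\}$ and $\mathcal B_h$. On $\mathcal G_h$ I plug the $\mathcal E_1$ bounds into $\widehat d_{h+1}^\pi(s)=\sum_{s',a'}\widehat P_h(s\mymid s',a')\widehat d_h^\pi(s',a')$ and the matching identity for $d_{h+1}^\pi(s)$, using the inductive hypothesis relating $\widehat d_h^\pi$ and $d_h^\pi$; the per-step slack $1+\tfrac1H$ is arranged so the compounded multiplicative factor stays below $2$ over at most $H$ steps. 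On $\mathcal B_h$ the estimate is $\widehat P_h(\cdot\mymid s',a')\equiv 0$, so these pairs contribute only to the true $d_{h+1}^\pi$ and to the new slack; the crux is the uniform-in-$\pi$ bound
\[
\textstyle\sum_{(s',a')\in\mathcal B_h}\widehat d_h^\pi(s',a')\;\le\;\big(\tfrac1{KH}+\tfrac{\xi}{N}\big)\sum_{(s',a')\in\mathcal B_h}\frac{\widehat d_h^\pi(s',a')}{\tfrac1{KH}+\mathbb E_{\pi'\sim\widehat\mu^h}[\widehat d_h^{\pi'}(s',a')]}\;\le\;2SA\big(\tfrac1{KH}+\tfrac{\xi}{N}\big),
\]
where the first inequality bounds the denominator from below uniformly on $\mathcal B_h$ by combining $\mathcal E_2$ with the inductive hypothesis applied to the mixture $\piexploreh$ (legitimate since $d^\pi$ is linear in $\pi$), and the second is the single-step $D$-optimal-design inequality for $\widehat\mu^h$ --- the step-$h$ analogue of Lemma~\ref{lem:det} with $HSA$ replaced by $2SA$, supplied by the stopping rule of Algorithm~\ref{alg:sub1} exactly as in \eqref{eq:near-optimality-condition}. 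Converting back through the inductive hypothesis and summing, the newly created slack obeys $\sum_{s,a}e_{h+1}^\pi(s,a)\le(1+\tfrac1H)\sum_{s,a}e_h^\pi(s,a)+O\big(\tfrac{SA}{KH}+\tfrac{SA\xi}{N}\big)$, which telescopes over $h\le H$ to $O\big(\tfrac{SA}{K}+\tfrac{SAH\xi}{N}\big)$; and the hypothesis $N\ge C_N\sqrt{H^9S^7A^7K}\log\tfrac{HSA}{\delta}$ is exactly what turns $\tfrac{SAH\xi}{N}$ into $\lesssim\sqrt{SA/(HK)}$, yielding \eqref{eq:e-sum-zeta}. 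The $(s,a)$-level statement in \eqref{eq:d-error} then follows from the $s$-level one by multiplying through by $\pi_{h+1}(a\mymid s)\in\{0,1\}$.

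I expect the main obstacle to be the treatment of the under-explored set $\mathcal B_h$: one must show that zeroing out $\widehat P_h$ there forfeits only a negligible amount of occupancy mass \emph{simultaneously for all} $\pi$, and the sole leverage is that $\piexploreh$ was chosen to (approximately) maximize the log-coverage objective \eqref{defi:target-empirical-h}, so that the $D$-optimal design bound controls $\widehat d_h^\pi$ on the low-count set for every $\pi$ at once. Invoking that bound first requires translating the data-dependent event ``$N_h(s',a')\le\xi$'' into the optimization-side statement ``$\widehat d_h^{\piexploreh}(s',a')$ is small,'' which is where $\mathcal E_2$ and the inductive hypothesis must be chained carefully and where the exact size of $\xi$ --- hence the stated lower bound on $N$ --- gets pinned down. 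A secondary technical nuisance is preventing the multiplicative constants from compounding past the claimed factor $2$, which is precisely why the per-step concentration slacks must scale like $1/H$ and, in turn, why $\xi$ is taken as large as $\Theta\big(H^3S^3A^3\log\tfrac{HSA}{\delta}\big)$.
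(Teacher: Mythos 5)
Your base case and your treatment of the under-explored set $\mathcal{B}_h$ match the paper's argument: translating the data-dependent event $N_h(s',a')\le\xi$ into smallness of $\mathbb{E}_{\pi'\sim\widehat{\mu}^h}\big[\widehat{d}_h^{\pi'}(s',a')\big]$ and then invoking the step-$h$ design inequality supplied by the stopping rule is exactly how the paper obtains $\sum_{s,a}e_{h+1}^{\pi}(s,a)\le\frac{2SA}{K}+\frac{13SAH\xi}{N}$. The gap is in your handling of the well-visited set. An entrywise Bernstein bound on $\widehat{P}_h(s\mymid s',a')$ built from $N_h(s',a')\ge\xi$ samples carries an additive slack of order $\frac{H\log(HSA/\delta)}{N_h(s',a')}$ --- not $\frac{c\log(HSA/\delta)}{\xi H}$ as you wrote; to buy a $(1+\tfrac1H)$ multiplicative factor via AM--GM you must pay $H\log/N_h$ additively --- and for pairs with $N_h(s',a')$ only slightly above $\xi$ this is $\asymp H\log/\xi$, a quantity that does \emph{not} decrease with $N$. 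Propagating it through $\widehat{d}_{h+1}^{\pi}(s)=\sum_{s'}\widehat{P}_h\big(s\mymid s',\pi_h(s')\big)\widehat{d}_h^{\pi}\big(s',\pi_h(s')\big)$ and accumulating over up to $H$ steps produces an additive error in $\widehat{d}_{h+1}^{\pi}$ of order $H^2\log/\xi$, a constant independent of $N$. The lemma demands an additive error of $\frac{\xi}{4N}$, which tends to zero as $N$ grows; since the theorem operates in the regime $N\gtrsim\sqrt{H^9S^7A^7K}\log\frac{HSA}{\delta}\to\infty$ as $\varepsilon\to0$, your bound fails precisely where it is needed. (A minor additional slip: $(1+1/H)^H\approx e>2$, so even the multiplicative bookkeeping overshoots the claimed factor of $2$.)

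The reason the paper achieves a genuinely $O(\xi/N)$ additive error is that it never bounds $\widehat{P}_j$ entrywise. It unrolls the recursion exactly into
\[
\widehat{d}_{h+1}^{\pi}(s,a)-d_{h+1}^{\pi}(s,a)=\sum_{j=1}^{h}\sum_{s_{j+1},a_{j+1}}P^{\pi}_{j+1\to h+1}(s,a\mymid s_{j+1},a_{j+1})\,\big\langle \widehat{P}^{\pi}_j-P^{\pi}_j,\;\widehat{d}_j^{\pi}\big\rangle,
\]
splits off the under-visited contribution $e_{h+1}^{\pi}$, and applies Bernstein directly to the remaining policy-weighted sum, whose increments have magnitude $\widehat{d}_j^{\pi}(s_j,a_j)/N_j(s_j,a_j)\lesssim SA/N$ uniformly over well-visited pairs. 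This $1/N$ scaling is again supplied by the design property, which forces $\widehat{d}_j^{\pi}(s_j,a_j)\lesssim SA\cdot\mathbb{E}_{\pi'\sim\widehat{\mu}^j}\big[\widehat{d}_j^{\pi'}(s_j,a_j)\big]\lesssim SA\cdot N_j(s_j,a_j)/N$; in other words, wherever the kernel estimate is noisy, every policy's empirical occupancy weight on that pair is provably small. Because the weights and the forward kernels in this aggregate depend on $\pi$, the concentration step must be followed by a union bound over all $|\Pi|\le A^{HS}$ deterministic policies, the cost $\log|\Pi|=HS\log A$ being absorbed into the large choice of $\xi$. Your opening observation that the randomness is policy-independent and hence no union bound over $\Pi$ is needed is therefore the root of the difficulty: it is precisely by applying concentration to the $\pi$-dependent weighted sums (and paying that union bound) that the additive error acquires its $1/N$ scaling.
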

In a nutshell, this lemma makes apparent the effectiveness of our exploration stage: 
for all deterministic Markov policy $\pi$, 
we have obtained reasonably accurate estimation of the associated occupancy distributions. 
In particular, the estimation error for each state-action pair is inversely proportional to the number of episodes $N$ in each round,   
up to some additional error terms $\{e_h^{\pi}(s,a)\}$ whose aggregate contributions are well-controlled.

Before proceeding, we single out a direct consequence of Lemma~\ref{lemma:occupancy} that will prove useful. 
Denoting by $N_h^{\mathsf{b}}(s,a)$ the number of visits to $(s,a)$ at step $h$ during Stage 1.2 of the proposed algorithm (which employs the behavior policy $\widehat{\pi}_{\mathsf{b}}=\mathbb{E}_{\pi \sim \behavior}[\pi]$ to sample $K$ episodes),  
we have
\begin{equation}
	N_{h}^{\mathsf{b}}(s,a) \geq \widehat{N}_{h}^{\mathsf{b}}(s,a), \qquad \forall (s,a,h)\in \cS\times \cA\times [H]
\end{equation}
with probability exceeding $1-\delta$, where we remind the reader of the definition of $\widehat{N}_h^{\mathsf{b}}(s,a)$ in \eqref{eq:lower-samples}. 
This property is crucial when invoking Algorithm~\ref{alg:offline_RL}. 
\begin{proof}
It is first seen from  Lemma~\ref{lemma:occupancy} that
\begin{equation}
	\mathbb{E}[N_h^{\mathsf{b}}(s,a)] = K
	\mathop{\mathbb{E}}_{\pi\sim\behavior}\big[d_{h}^{\pi}(s,a)\big]\geq\frac{K}{2}\mathop{\mathbb{E}}_{\pi\sim\behavior}\big[\widehat{d}_{h}^{\pi}(s,a)\big]-\frac{K\xi}{4N}.
	\label{eq:lower-bound-Epi-dh}
\end{equation}
The Bernstein inequality  together the union bound
 then implies that, with probability at least $1-\delta$,
\begin{align*}
N_{h}^{\mathsf{b}}(s,a) & \geq\max\left\{ \mathbb{E}\big[N_{h}^{\mathsf{b}}(s,a)\big]-\sqrt{4\mathbb{E}\big[N_{h}^{\mathsf{b}}(s,a)\big]\log\frac{HSA}{\delta}}-\log\frac{HSA}{\delta},\ 0\right\} \\
 & \geq\max\left\{ \mathbb{E}\big[N_{h}^{\mathsf{b}}(s,a)\big]-\frac{1}{2}\mathbb{E}\big[N_{h}^{\mathsf{b}}(s,a)\big]-2\log\frac{HSA}{\delta}-\log\frac{HSA}{\delta},\ 0\right\} \\
 & \geq\max\left\{ \frac{K}{4}\mathop{\mathbb{E}}_{\pi\sim\behavior}\big[\widehat{d}_{h}^{\pi}(s,a)\big]-\frac{K\xi}{8N}-3\log\frac{HSA}{\delta},\ 0\right\} =\widehat{N}_{h}^{\mathsf{b}}(s,a)
\end{align*}
holds simultaneously over all $(s,a,h)\in\mathcal{S}\times\mathcal{A}\times[H]$,
where the second line invokes the AM-GM inequality, and the last line
makes use of \eqref{eq:lower-bound-Epi-dh}. 
\end{proof}

%Inserting the above relation \eqref{eq:near-optimality-condition} into the performance bound of offline RL (see \citet[Eqn.~(126)]{li2022settling}) \yxc{change it to lemma} leads to

\paragraph{Step 2: bounding $V_{j}^{\star}-V_{j}^{\widehat{\pi}}$ for step $j$.} 
We now turn to the sub-optimality of the final policy estimate $\widehat{\pi}$.   
To begin with, we define, for each $1\leq h \leq H$, the following set
\begin{align}
	\mathcal{I}_{h}:=\bigg\{(s,a): \mathop{\mathbb{E}}_{\pi\sim\behavior}\big[\widehat{d}_{h}^{\pi}(s,a)\big]\geq\frac{4\xi}{N}\bigg\}, 
	\label{eq:defn-Ih}
\end{align}
containing all state-action pairs whose associated empirical occupancy density (weighted over the sampling policy $\behavior$) exceed some prescribed threshold. 
For each $1\leq j\leq H$, one can then bound and decompose
\begin{align}
\big\langle d_{j}^{\pi^{\star}},V_{j}^{\star}-V_{j}^{\widehat{\pi}}\big\rangle 
	& 
	\leq\big\langle d_{j}^{\pi^{\star}},V_{j}^{\star}-\widehat{V}_{j}\big\rangle 
	\overset{\text{(i)}}{\leq}2\sum_{h:h\ge j}\sum_{s,a}d_{h}^{\pi^{\star}}(s,a)b_{h}(s,a) \notag\\
 & \overset{\text{(ii)}}{\leq}2\sum_{h:h\ge j}\sum_{s,a}\left(2\widehat{d}_{h}^{\pi^{\star}}(s,a)+2e_{h+1}^{\pi^{\star}}(s,a)+\frac{\xi}{N}\right)b_{h}(s,a)\nonumber\\
 & \overset{\text{(iii)}}{\lesssim}\sum_{h:h\ge j}\sum_{s,a}\widehat{d}_{h}^{\pi^{\star}}(s,a)b_{h}(s,a)+H^{2}\Big(\sqrt{\frac{SA}{HK}}+\frac{SA\xi}{N}\Big)\nonumber\\
 & \leq\underbrace{\sum_{h:h\ge j}\sum_{(s,a)\in\mathcal{I}_{h}}\widehat{d}_{h}^{\pi^{\star}}(s,a)b_{h}(s,a)}_{\eqqcolon\alpha_{1}}+\underbrace{\sum_{h:h\ge j}\sum_{(s,a)\notin\mathcal{I}_{h}}\widehat{d}_{h}^{\pi^{\star}}(s,a)b_{h}(s,a)}_{\eqqcolon\alpha_{2}}+H^{2}\Big(\sqrt{\frac{SA}{HK}}+\frac{SA\xi}{N}\Big).
	\label{eq:proof-1}
\end{align}
Here, (i) is a consequence of property \eqref{eq:error-decomposition-old-offline} in Lemma~\ref{lem:model-based-offline-prior}, (ii) results from \eqref{eq:d-error} in Lemma~\ref{lemma:occupancy} that quantifies the discrepancy between $d_{h}^{\pi^{\star}}$ and $\widehat{d}_{h}^{\pi^{\star}}$, whereas (iii) follows from \eqref{eq:e-sum-zeta} in Lemma~\ref{lemma:occupancy} as well as the basic fact that $b_h(s,a)\leq H$ (see \eqref{def:bonus-Bernstein-finite}). 
We shall then cope with the two terms $\alpha_1$ and $\alpha_2$ separately. 
\begin{itemize}
	\item With regards to the first term $\alpha_1$ in \eqref{eq:proof-1}, it follows from the choice \eqref{def:bonus-Bernstein-finite} of $b_h$ that
	\begin{align}
\alpha_{1} & \lesssim\sum_{h:h\ge j}\sum_{(s,a)\in\mathcal{I}_{h}}\widehat{d}_{h}^{\pi^{\star}}(s,a)\min\Bigg\{\sqrt{\frac{\log\frac{KH}{\delta}}{\widehat{N}_{h}^{\mathsf{b}}(s,a)}\mathsf{Var}_{\widehat{P}_{h,s,a}}\big(\widehat{V}_{h+1}\big)}+H\frac{\log\frac{KH}{\delta}}{\widehat{N}_{h}^{\mathsf{b}}(s,a)},\,H\Bigg\}\notag\\
 & \overset{\text{(i)}}{\lesssim}\sum_{h:h\ge j}\sum_{(s,a)\in\mathcal{I}_{h}}\widehat{d}_{h}^{\pi^{\star}}(s,a)\min\Bigg\{\sqrt{\frac{\log\frac{KH}{\delta}}{\widehat{N}_{h}^{\mathsf{b}}(s,a)}\mathsf{Var}_{P_{h,s,a}}\big(\widehat{V}_{h+1}\big)}+H\frac{\log\frac{KH}{\delta}}{\widehat{N}_{h}^{\mathsf{b}}(s,a)},\,H\Bigg\}\notag\\
 & \leq\sum_{h:h\ge j}\sum_{(s,a)\in\mathcal{I}_{h}}\widehat{d}_{h}^{\pi^{\star}}(s,a)\left\{ \min\Bigg\{\sqrt{\frac{\mathsf{Var}_{P_{h,s,a}}\big(\widehat{V}_{h+1}\big)\log\frac{KH}{\delta}}{\widehat{N}_{h}^{\mathsf{b}}(s,a)}},\,H\Bigg\}+H\frac{\log\frac{KH}{\delta}}{\widehat{N}_{h}^{\mathsf{b}}(s,a)}\right\} \notag\\
 & \overset{\text{(ii)}}{\lesssim}\sum_{h:h\ge j}\sum_{(s,a)\in\mathcal{I}_{h}}\widehat{d}_{h}^{\pi^{\star}}(s,a)\left\{ \sqrt{\frac{\mathsf{Var}_{P_{h,s,a}}\big(\widehat{V}_{h+1}\big)\log\frac{KH}{\delta}+H}{\widehat{N}_{h}^{\mathsf{b}}(s,a)+1/H}}+H\frac{\log\frac{KH}{\delta}}{\widehat{N}_{h}^{\mathsf{b}}(s,a)}\right\} \notag\\
 & \overset{\text{(iii)}}{\lesssim}\sum_{h:h\ge j}\sum_{(s,a)\in\mathcal{I}_{h}}\widehat{d}_{h}^{\pi^{\star}}(s,a)\left\{ \sqrt{\frac{\mathsf{Var}_{P_{h,s,a}}\big(\widehat{V}_{h+1}\big)\log\frac{KH}{\delta}+H}{K\mathbb{E}_{\pi\sim\behavior}\big[\widehat{d}_{h}^{\pi}(s,a)\big]+1/H}}+\frac{H\log\frac{KH}{\delta}}{1/H+K\mathbb{E}_{\pi\sim\behavior}\big[\widehat{d}_{h}^{\pi}(s,a)\big]}\right\} \notag\\
 & \overset{\text{(iv)}}{\lesssim}\left\{ \sum_{h:h\ge j}\sum_{(s,a)\in\mathcal{I}_{h}}\widehat{d}_{h}^{\pi^{\star}}(s,a)\sqrt{\frac{\mathsf{Var}_{P_{h,s,a}}\big(\widehat{V}_{h+1}\big)\log\frac{KH}{\delta}+H}{1/H+K\mathbb{E}_{\pi\sim\behavior}\big[\widehat{d}_{h}^{\pi}(s,a)\big]}}\right\} +\frac{H^{2}SA\log\frac{KH}{\delta}}{K}.		
		\label{eq:alpha1-UB-proof}
	\end{align}
		Here, (i) makes use of property~\eqref{eq:Variance-UB-offline-old} in Lemma~\ref{lem:model-based-offline-prior}, (ii) is valid due to the elementary inequality $\min\{ \frac{x}{y} , \frac{u}{w} \}\leq \frac{x+u}{y+w}$ for any $x,y,u,w>0$,  
		(iii) follows since 
	\begin{align*}
		\min_{(s,a)\in\mathcal{I}_{h}}\widehat{N}_{h}^{\mathsf{b}}(s,a)\gtrsim K \mathop{\mathbb{E}}_{\pi\sim\behavior}\big[\widehat{d}_{h}^{\pi}(s,a)\big] + \frac{1}{H}
	\end{align*}
	holds according to \eqref{eq:lower-samples} and \eqref{eq:defn-Ih}, 
		while (iv) results from property \eqref{eq:near-optimality-condition}.
	%Moreover, according to~\eqref{eq:near-optimality-condition}, it can be seen that
	%	
	%\begin{align*}
%\sum_{h:h\ge j}\sum_{(s,a)\in\mathcal{I}_{h}}\frac{\widehat{d}_{h}^{\pi^{\star}}(s,a)}{1+K\mathbb{E}_{\pi\sim\behavior}\big[\widehat{d}_{h}^{\pi}(s,a)\big]}\lesssim\frac{HSA}{K}.	
%	\end{align*}
	
	\item When it comes to the other term $\alpha_2$ in \eqref{eq:proof-1}, we observe that
	\begin{align*}
		\sum_{h:h\ge j}\sum_{(s,a)\notin\mathcal{I}_{h}}\widehat{d}_{h}^{\pi^{\star}}(s,a) & \leq\sum_{h:h\ge j}\sum_{(s,a)\notin\mathcal{I}_{h}}\frac{\big(\frac{1}{KH}+\frac{4\xi}{N}\big)\widehat{d}_{h}^{\pi^{\star}}(s,a)}{\frac{1}{KH}+\mathbb{E}_{\pi\sim\behavior}\big[\widehat{d}_{h}^{\pi}(s,a)\big]}\\
 & \leq\left(\frac{1}{KH}+\frac{4\xi}{N}\right)2HSA\\
 & \asymp\frac{HSA\xi}{N},
	\end{align*}
	where the first line follows from the definition \eqref{eq:defn-Ih} of $\mathcal{I}_{h}$, 
	the second line relies on \eqref{eq:near-optimality-condition}, 
		and the last line is valid since $\frac{1}{KH}\lesssim \frac{\xi}{N}$ 
		holds as long as $N\leq KH$ and $\xi \geq 1$. This taken together with the fact that $b_h(s,a)\leq H$  (see \eqref{def:bonus-Bernstein-finite}) immediately gives
	\begin{align}
		\alpha_2\lesssim \frac{H^2 SA\xi}{N}.
		\label{eq:alpha2-UB-proof}
	\end{align}
\end{itemize}
Substituting the preceding bounds \eqref{eq:alpha1-UB-proof} and \eqref{eq:alpha2-UB-proof} on $\alpha_1$ and $\alpha_2$ into \eqref{eq:proof-1}, we arrive at
\begin{align}
 & \big\langle d_{j}^{\pi^{\star}},V_{j}^{\star}-V_{j}^{\widehat{\pi}}\big\rangle
	\leq\big\langle d_{j}^{\pi^{\star}},V_{j}^{\star}-\widehat{V}_{j}\big\rangle
	\lesssim\alpha_{1}+\alpha_{2}+H^{2}\Big(\sqrt{\frac{SA}{HK}}+\frac{SA\xi}{N}\Big)\nonumber\\
 & \quad\lesssim\underbrace{\sum_{h:h\ge j}\sum_{(s,a)\in\mathcal{I}_{h}}\widehat{d}_{h}^{\pi^{\star}}(s,a)\sqrt{\frac{\mathsf{Var}_{P_{h,s,a}}\big(\widehat{V}_{h+1}\big)\log\frac{KH}{\delta}+H}{1/H+K\mathbb{E}_{\pi\sim\behavior}\big[\widehat{d}_{h}^{\pi}(s,a)\big]}}}_{\eqqcolon\beta}+\frac{H^{2}SA\log\frac{KH}{\delta}}{K}+H^{2}\Big(\sqrt{\frac{SA}{HK}}+\frac{SA\xi}{N}\Big).
	 \label{eq:proof-2}
\end{align}
Everything then comes down to controlling the term $\beta$, which forms the main content of the remaining proof.

\paragraph{Step 3: controlling the term $\beta$.} 

To bound the term $\beta$ in \eqref{eq:proof-2}, we first apply the Cauchy-Schwarz inequality to reach
\begin{align}
\beta & \leq\underbrace{\left(\sum_{h:h\geq j}\sum_{(s,a)\in\mathcal{I}_{h}}\widehat{d}_{h}^{\pi^{\star}}(s,a)\left[\mathsf{Var}_{P_{h,s,a}}\big(\widehat{V}_{h+1}\big)\log\frac{KH}{\delta}+H\right]\right)^{1/2}}_{\eqqcolon\beta_{1}}\underbrace{\left(\sum_{h:h\ge j}\sum_{(s,a)\in\mathcal{I}_{h}}\frac{\widehat{d}_{h}^{\pi^{\star}}(s,a)}{1/H+K\mathbb{E}_{\pi\sim\behavior}\big[\widehat{d}_{h}^{\pi}(s,a)\big]}\right)^{1/2}}_{\eqqcolon\beta_{2}},	 
	\label{eq:proof-4}
\end{align}
thus motivating us to bound $\beta_{1}$ and $\beta_{2}$ separately. 
\begin{itemize}
	\item Regarding the first term $\beta_{1}$, we make the observation that
	\begin{align*}
		(\beta_{1})^{2} & \leq\sum_{h}\sum_{(s,a)}\widehat{d}_{h}^{\pi^{\star}}(s,a)\left\{ \mathsf{Var}_{P_{h,s,a}}\big(\widehat{V}_{h+1}\big)\log\frac{KH}{\delta}\right\} +\sum_{h}\sum_{(s,a)}\widehat{d}_{h}^{\pi^{\star}}(s,a)H\\
 & \leq\sum_{h}\sum_{(s,a)}\left(2d_{h}^{\pi^{\star}}(s,a)+\frac{2\xi}{N}\right)\mathsf{Var}_{P_{h,s,a}}\big(\widehat{V}_{h+1}\big)\log\frac{KH}{\delta}+H^{2}\\
 & \le2\sum_{h}\sum_{(s,a)}d_{h}^{\pi^{\star}}(s,a)\Big\{\mathsf{Var}_{P_{h,s,a}}\big(V_{h+1}^{\star}\big)+\mathsf{Var}_{P_{h,s,a}}\big(V_{h+1}^{\star}-\widehat{V}_{h+1}\big)\Big\}\log\frac{KH}{\delta}+\frac{2\xi}{N}SAH^{3}\log\frac{KH}{\delta}+H^{2}\\
 & \lesssim\underbrace{\sum_{h}\sum_{(s,a)}d_{h}^{\pi^{\star}}(s,a)\mathsf{Var}_{P_{h,s,a}}\big(V_{h+1}^{\star}\big)}_{\eqqcolon\beta_{1,1}}\log\frac{KH}{\delta}+\underbrace{\sum_{h}\sum_{(s,a)}d_{h}^{\pi^{\star}}(s,a)\mathsf{Var}_{P_{h,s,a}}\big(V_{h+1}^{\star}-\widehat{V}_{h+1}\big)}_{\eqqcolon\beta_{1,2}}\log\frac{KH}{\delta}\\
 & \qquad+\frac{\xi}{N}SAH^{3}\log\frac{KH}{\delta}+H^{2},
	\end{align*}
	where the second line invokes Lemma~\ref{lemma:occupancy} and the
fact $\sum_{(s,a)}\widehat{d}_{h}^{\pi^{\star}}(s,a)\leq1$, and the
third line is valid since $\mathsf{Var}(X+Y)\leq2\mathsf{Var}(X)+2\mathsf{Var}(Y)$. 	

	\begin{itemize}
		\item

	To cope with the first term $\beta_{1,1}$ in the above display, we find it helpful to define, for each $1\leq h\leq H$,  
	a distribution vector $d_{h}^{\pi^{\star}}=\big[d_{h}^{\pi^{\star}}(s)\big]_{s\in\mathcal{S}}\in \mathbb{R}^S$, 
			a reward vector $r_h^{\pi^{\star}}= \big[r_{h}\big(s, \pi_h^{\star}(s) \big)\big]_{s\in\mathcal{S}}\in \mathbb{R}^S$,
	and a matrix $P_{h}^{\star}\in\mathbb{R}^{S\times S}$ obeying
	\[
		P_{h}^{\star}(s,s')=P_{h}\big(s'\mymid s,\pi_h^{\star}(s)\big),\qquad s,s'\in\mathcal{S}.
	\]
	Given that $\pi^{\star}$ is a deterministic policy, one can write
\begin{align*}
\sum_{(s,a)}d_{h}^{\pi^{\star}}(s,a)\mathsf{Var}_{P_{h,s,a}}\big(V_{h+1}^{\star}\big) & =\sum_{s}d_{h}^{\pi^{\star}}\big(s,\pi_h^{\star}(s)\big)\mathsf{Var}_{P_{h,s,\pi_h^{\star}(s)}}\big(V_{h+1}^{\star}\big)\\
 & =\big\langle d_{h}^{\pi^{\star}},P_{h}^{\star}(V_{h+1}^{\star}\circ V_{h+1}^{\star})-(P_{h}^{\star}V_{h+1}^{\star})\circ(P_{h}^{\star}V_{h+1}^{\star})\big\rangle. 
\end{align*}
	This allows one to deduce that
\begin{align}
\beta_{1,1} & =\sum_{h=1}^{H}\big\langle d_{h}^{\pi^{\star}},P_{h}^{\star}(V_{h+1}^{\star}\circ V_{h+1}^{\star})-(P_{h}^{\star}V_{h+1}^{\star})\circ(P_{h}^{\star}V_{h+1}^{\star})\big\rangle\nonumber \\
 & \overset{\text{(i)}}{=}\sum_{h=1}^{H}\big\langle d_{h+1}^{\pi^{\star}},V_{h+1}^{\star}\circ V_{h+1}^{\star}\big\rangle-\sum_{h=1}^{H}\big\langle d_{h}^{\pi^{\star}},(P_{h}^{\star}V_{h+1}^{\star})\circ(P_{h}^{\star}V_{h+1}^{\star})\big\rangle\nonumber \\
	& \overset{\text{(ii)}}{=} \sum_{h=1}^{H}\big\langle d_{h+1}^{\pi^{\star}},V_{h+1}^{\star}\circ V_{h+1}^{\star}\big\rangle-\sum_{h=1}^{H}\big\langle d_{h}^{\pi^{\star}},V_{h}^{\star}\circ V_{h}^{\star}\big\rangle+2\sum_{h=1}^{H}\big\langle d_{h}^{\pi^{\star}},r_{h}^{\pi^{\star}}\circ(P_{h}^{\star}V_{h+1}^{\star})\big\rangle 
	-\sum_{h=1}^{H}\big\langle d_{h}^{\pi^{\star}},r_{h}^{\star}\circ r_{h}^{\star}\big\rangle \nonumber \\
	& \leq \sum_{h=1}^{H}\big\langle d_{h+1}^{\pi^{\star}},V_{h+1}^{\star}\circ V_{h+1}^{\star}\big\rangle-\sum_{h=1}^{H}\big\langle d_{h}^{\pi^{\star}},V_{h}^{\star}\circ V_{h}^{\star}\big\rangle+2\sum_{h=1}^{H}\big\langle d_{h}^{\pi^{\star}},r_{h}^{\pi^{\star}}\circ(P_{h}^{\star}V_{h+1}^{\star})\big\rangle\nonumber \\
 & \overset{\text{(iii)}}{\leq}2\sum_{h=1}^{H}\big\langle d_{h}^{\pi^{\star}},r_{h}^{\pi^{\star}}\circ(P_{h}^{\star}V_{h+1}^{\star})\big\rangle 
	\overset{\text{(iv)}}{\leq} 2H^{2},\label{eq:beta-11-UB}
\end{align}
	where (i) follows from $(d_{h}^{\pi^{\star}})^\top P_h^\star = (d_{h+1}^{\pi^{\star}})^\top$, 
	(ii) makes use of the Bellman equation $V_h^\star=r_h+P_h^\star V_{h+1}^\star$,  
	(iii) results from the telescoping sum and the fact that $V_{H+1}^{\star}=0$, 
			and (iv) holds since $V_{h}^{\star}(s)\leq H$ and $r_h(s,a)\leq 1$ for all $(s,a)\in \cS\times \cA$.

	\item Regarding $\beta_{1,2}$, we first derive a useful crude bound as follows:
\begin{align}
\big\langle d_{j}^{\pi^{\star}},V_{j}^{\star}-\widehat{V}_{j}\big\rangle\lesssim 
	& \sum_{h:h\ge j}\sum_{(s,a)\in\mathcal{I}_{h}}\frac{H\widehat{d}_{h}^{\pi^{\star}}(s,a) \sqrt{\log\frac{KH}{\delta}}}{\sqrt{1/H+K\mathbb{E}_{\pi\sim\behavior}\big[\widehat{d}_{h}^{\pi}(s,a)\big]}}+\frac{H^{2}SA\log\frac{KH}{\delta}}{K}+H^{2}\Big(\sqrt{\frac{SA}{HK}}+\frac{SA\xi}{N}\Big)\notag\nonumber \\
\lesssim & H\sqrt{\log\frac{KH}{\delta}} \left\{ \sum_{h}\sum_{(s,a)}\widehat{d}_{h}^{\pi^{\star}}(s,a)\right\} ^{\frac{1}{2}}
	\left[\sum_{h}\sum_{(s,a)}\frac{\widehat{d}_{h}^{\pi^{\star}}(s,a)}{1/H+K\mathbb{E}_{\pi'\sim\behavior}\big[\widehat{d}_{h}^{\pi'}(s,a)\big]}\right]^{\frac{1}{2}}\nonumber \\
 & \qquad+\frac{H^{2}SA\log\frac{KH}{\delta}}{K}+H^{2}\Big(\sqrt{\frac{SA}{HK}}+\frac{SA\xi}{N}\Big)\notag\\
\lesssim & \sqrt{\frac{H^{4}SA \log\frac{KH}{\delta}}{K}}+\frac{H^{2}SA\xi}{N}+\frac{H^{2}SA\log\frac{KH}{\delta}}{K}\lesssim1.\label{eq:proof-3}
\end{align}
	Here, the first inequality follows from \eqref{eq:proof-2} and the fact that $\mathsf{Var}_{P_{h, s, a}}\big(\widehat{V}_{h+1}\big)\leq H^2$; 
			the second line applies the Cauchy-Schwarz inequality; 
			the third inequality arises from property \eqref{eq:near-optimality-condition} and the fact $\sum_{(s,a)}\widehat{d}_{h}^{\pi^{\star}}(s,a)\leq 1$; 
			and the last inequality is valid as long as $K\gtrsim H^4 SA \log\frac{KH}{\delta}$ and $N\gtrsim H^2 SA\xi$. 
			With this bound in mind, one can then control $\beta_{1,2}$ as follows
	\begin{align}
\beta_{1,2} & \leq\sum_{h}\sum_{(s,a)}d_{h}^{\pi^{\star}}(s,a)\mathbb{E}_{P_{h,s,a}}\big[\big(V_{h+1}^{\star}-\widehat{V}_{h+1}\big)\circ\big(V_{h+1}^{\star}-\widehat{V}_{h+1}\big)\big]\nonumber \\
 & \leq H\sum_{h}\sum_{(s,a)}d_{h}^{\pi^{\star}}(s,a)\mathbb{E}_{P_{h,s,a}}\big[V_{h+1}^{\star}-\widehat{V}_{h+1}\big]\nonumber \\
 & =H\sum_{h}\big\langle d_{h+1}^{\pi^{\star}},V_{h+1}^{\star}-\widehat{V}_{h+1}\big\rangle\leq H^{2},\label{eq:beta-22-UB}
\end{align}
where the second line holds since $V^\star_{h+1}, \widehat{V}_{h+1}\in [0,H]$, 
			the third line relies on the fact that $(d_h^{\pi^\star})^\top P_h^\star = (d_{h+1}^{\pi^{\star}})^\top$, 
		 and the last relation follows from \eqref{eq:proof-3}. 
	\end{itemize}

Therefore, the above bounds \eqref{eq:beta-11-UB} and \eqref{eq:beta-22-UB} allow one to readily conclude that
\begin{align}
	(\beta_1)^2\lesssim (\beta_{1,1}+\beta_{1,2})\log\frac{KH}{\delta}+\frac{\xi}{N}SAH^3+H^2 \lesssim H^2\log\frac{KH}{\delta},
	\label{eq:beta1-UB-123}
\end{align}
with the proviso that $N\gtrsim \xi SAH$.

	\item 	When it comes to $\beta_2$, we can invoke \eqref{eq:near-optimality-condition} to yield
\begin{align}
(\beta_{2})^{2} & \leq\sum_{h:h\ge j}\sum_{(s,a)\in\mathcal{I}_{h}}\frac{\widehat{d}_{h}^{\pi^{\star}}(s,a)}{1/H+K\mathbb{E}_{\pi\sim\behavior}\big[\widehat{d}_{h}^{\pi}(s,a)\big]}\lesssim\frac{HSA}{K}.\label{eq:beta2-UB-123}
\end{align}
\end{itemize}

\paragraph{Step 4: putting everything together.}
Taking \eqref{eq:proof-2} and \eqref{eq:proof-4} together with the above bounds on $\beta_1$ and $\beta_2$ (see \eqref{eq:beta1-UB-123} and \eqref{eq:beta2-UB-123}) leads to
\begin{align*}
	\big\langle d_{j}^{\pi^{\star}},V_{j}^{\star}-V_{j}^{\widehat{\pi}}\big\rangle & \lesssim\beta_{1}\beta_{2}+\frac{H^{2}SA\log\frac{KH}{\delta}}{K}+H^{2}\Big(\sqrt{\frac{SA}{HK}}+\frac{SA\xi}{N}\Big)\\
 & \lesssim\sqrt{\frac{H^{3}SA}{K}\log\frac{KH}{\delta}}+\frac{H^{2}SA\log\frac{KH}{\delta}}{K}+\frac{H^{2}SA\xi}{N}\\
 & \asymp\sqrt{\frac{H^{3}SA}{K}\log\frac{KH}{\delta}},
\end{align*}
with the proviso that $K\gtrsim HSA\log\frac{KH}{\delta}$, and $N \gtrsim \sqrt{H^7S^7A^7K\log\frac{HSA}{\delta}}$. 
In particular, taking $j=1$ in the above display and using the fact that $d_{1}^{\pi^{\star}}(s)=\rho(s)$, we arrive at
\begin{align*}
V_{1}^{\star}(\rho)-V_{1}^{\widehat{\pi}}(\rho)= & \big\langle d_{1}^{\pi^{\star}},V_{1}^{\star}-V_{1}^{\widehat{\pi}}\big\rangle\lesssim\sqrt{\frac{H^{3}SA}{K}\log\frac{KH}{\delta}}\leq\varepsilon,
\end{align*}
provided that $K\geq\frac{c_{K}H^{3}SA\log\frac{KH}{\delta}}{\varepsilon^{2}}$
for some large enough universal constant $c_{K}>0$. 
This concludes the proof of Theorem~\ref{thm:main-single}.

\subsection{Proof of Lemma~\ref{lemma:occupancy}} \label{sec:proof-lemma-occupancy}

We intend to establish this lemma via an inductive argument.

\paragraph{Step 1: the base case with $h=1$.}

Let us begin by looking at the case with $h=1$. 
It is seen from our construction \eqref{eq:init-occupancy} and the fact $d_1^\pi(s,a) = \rho(s) \pi_1(a\mymid s)$ that
\begin{align}
	\widehat{d}_1^\pi(s,a)-d_1^\pi(s,a)= \pi_1(a\mymid s)\bigg(\frac{1}{N}\sum_{n=1}^N \ind (s_1^{n,0}=s)-\rho(s)\bigg).
	\label{eq:d1-pi-hat-diff}
\end{align}
Recall that $\{s_1^{n,0}\}_{1\leq n\leq N}$ are independently drawn from the initial distribution $\rho$, 
and hence $\{\ind(s_1^{n,0}=s)\}_{1\leq n \leq N}$ are independent Bernoulli random variables  with mean $\rho(s)$. 
Apply the Bernstein inequality  \citep[Theorem 2.8.4]{vershynin2018high} in conjunction with the union bound to show that:  there exists some universal constant $\widetilde{C}>0$ such that with probability exceeding $1-\delta$, 
\begin{align}
	\left|\frac{1}{N}\sum_{n=1}^N \ind (s_1^{n,0}=s)-\rho(s)\right|
	& \leq \sqrt{\frac{\widetilde{C}}{N}\rho(s)\log\frac{SAH}{\delta}} + \frac{\widetilde{C}}{N}\log\frac{SAH}{\delta} \notag\\
	& \leq \frac{1}{2} \rho(s) + \frac{\widetilde{C}}{2N}\log\frac{SAH}{\delta}  + \frac{\widetilde{C}}{N}\log\frac{SAH}{\delta} \notag\\
	& \leq \frac{1}{2} \rho(s) +  \frac{2\widetilde{C}}{N}\log\frac{SAH}{\delta}
	\label{eq:concentration-h-1-Lemma2}
\end{align}
holds simultaneously for all $s\in\mathcal{S}$, where the second line comes from the AM-GM inequality. 
Substituting this into \eqref{eq:d1-pi-hat-diff} and recalling that $\xi=c_{\xi}H^3S^3A^3 \log \frac{HSA}{\delta}$ give
\begin{align*}
	\left|\widehat{d}_1^\pi(s,a)-d_1^\pi(s,a)\right| 
	\leq   \frac{1}{2} \rho(s) \pi_1(a\mymid s) +  \frac{2\widetilde{C} \pi_1(a\mymid s)}{N}\log\frac{SAH}{\delta}
	\leq \frac{1}{2} d_1^\pi(s,a) + \frac{\xi}{4N}
\end{align*}
as long as $c_\xi>0$ is sufficiently large, 
which clearly holds true for all policy $\pi$. 
%
%Here, we remind the reader of the definition of $\xi$ in  \eqref{eq:hard-threshold}. 
%
This finishes the proof of the claim \eqref{eq:d-error} when $h=1$.

\paragraph{Step 2: the inductive step.} 
Next, we carry out the inductive step.  
Assuming that the claim \eqref{eq:d-error} holds for the $j$-th step ($\forall 1\leq j\leq h$), we would like to establish its validity for the $(h+1)$-th step as well. 
In what follows, we introduce the following shorthand notation for the transition probabilities under policy $\pi$: 
\begin{subequations}
\label{eq:notation-P-jh-pi}
\begin{align}
	P_{j \to h}^{\pi}(s, a \mymid s', a') &= \mathbb{P}\big( s_h = s, a_h = a \mymid s_j = s', a_j = a'; \pi \big), 
	\label{eq:P-j-arrow-h-pi}\\
	P_{h}^{\pi}(s, a \mymid s', a') & 
	%= \mathbb{P}\big( s_{h+1} = s, a_{h+1} = a \mymid s_h = s', a_h = a'; \pi \big) 
	= P_h(s\mymid s', a') \pi_{h+1}(a\mymid s), \\
	\widehat{P}_{h}^{\pi}(s, a \mymid s', a') &
	%= 
	%\mathbb{P}\big( s_{h+1} = s, a_{h+1} = a \mymid s_h = s', a_h = a'; \pi \big) 
	= \widehat{P}_h(s\mymid s', a') \pi_{h+1}(a\mymid s),
\end{align}
\end{subequations}
where in \eqref{eq:P-j-arrow-h-pi} the probability is calculated assuming policy $\pi$ is executed.

%the probability of reaching the state-action pair $(s,a)$ in the $h$-th step conditional on 
%$(s_{j}, a_{j})$ in the $j$-th step when policy $\pi$ is adopted. 

\paragraph{Step 2.1: decomposing $\widehat{d}_{h+1}^\pi-d_{h+1}^\pi$ into two terms.}
By virtue of the construction of $\widehat{d}_{h+1}^{\pi}$ in~\eqref{eq:d_empirical} as well as the basic identities $d_{h+1}^{\pi}(s)=\big\langle P_{h}(s\mymid\cdot,\cdot),\,d_{h}^{\pi}(\cdot,\cdot)\big\rangle$ and $d_{h+1}^{\pi}(s,a)=d_{h+1}^{\pi}(s)\pi_{h+1}(a\mymid s)$, we have 
\begin{align}
 & \widehat{d}_{h+1}^{\pi}(s,a)-d_{h+1}^{\pi}(s,a)=\pi_{h+1}(a\mymid s)\left\{ \big\langle\widehat{P}_{h}(s\mymid\cdot,\cdot),\,\widehat{d}_{h}^{\pi}(\cdot,\cdot)\big\rangle-\big\langle P_{h}(s\mymid\cdot,\cdot),\,d_{h}^{\pi}(\cdot,\cdot)\big\rangle\right\} \nonumber \\
 & \qquad=\pi_{h+1}(a\mymid s)\left\{ \big\langle\widehat{P}_{h}(s\mymid\cdot,\cdot)-P_{h}(s\mymid\cdot,\cdot),\,\widehat{d}_{h}^{\pi}(\cdot,\cdot)\big\rangle+\big\langle P_{h}(s\mymid\cdot,\cdot),\,\widehat{d}_{h}^{\pi}(\cdot,\cdot)-d_{h}^{\pi}(\cdot,\cdot)\big\rangle\right\} \nonumber \\
 & \qquad=\big\langle\widehat{P}_{h}^{\pi}(s,a\mymid\cdot,\cdot)-P_{h}^{\pi}(s,a\mymid\cdot,\cdot),\,\widehat{d}_{h}^{\pi}(\cdot,\cdot)\big\rangle+\big\langle P_{h\rightarrow h+1}^{\pi}(s,a\mymid\cdot,\cdot),\,\widehat{d}_{h}^{\pi}(\cdot,\cdot)-d_{h}^{\pi}(\cdot,\cdot)\big\rangle,\label{eq:decompose-dhplus1-dh-diff}
\end{align}
where the last identity makes use of the notation \eqref{eq:notation-P-jh-pi}. 
Continuing this derivation, we arrive at
\begin{align*}
 & \widehat{d}_{h+1}^{\pi}(s,a)-d_{h+1}^{\pi}(s,a)\\
 & \quad=\big\langle\widehat{P}_{h}^{\pi}(s,a\mymid\cdot,\cdot)-P_{h}^{\pi}(s,a\mymid\cdot,\cdot),\,\widehat{d}_{h}^{\pi}(\cdot,\cdot)\big\rangle+\sum_{s_{h},a_{h}}P_{h\rightarrow h+1}^{\pi}(s,a\mymid s_{h},a_{h})\Big(\widehat{d}_{h}^{\pi}(s_{h},a_{h})-d_{h}^{\pi}(s_{h},a_{h})\Big)\\
 & \quad=\big\langle\widehat{P}_{h}^{\pi}(s,a\mymid\cdot,\cdot)-P_{h}^{\pi}(s,a\mymid\cdot,\cdot),\,\widehat{d}_{h}^{\pi}(\cdot,\cdot)\big\rangle+\sum_{s_{h},a_{h}}P_{h\rightarrow h+1}^{\pi}(s,a\mymid s_{h},a_{h})\\
 & \quad\quad\cdot\Big\{\big\langle\widehat{P}_{h-1}^{\pi}(s_{h},a_{h}\mymid\cdot,\cdot)-P_{h-1}^{\pi}(s_{h},a_{h}\mymid\cdot,\cdot),\,\widehat{d}_{h-1}^{\pi}(\cdot,\cdot)\big\rangle+\big\langle P_{h-1\rightarrow h}^{\pi}(s_{h},a_{h}\mymid\cdot,\cdot),\,\widehat{d}_{h-1}^{\pi}(\cdot,\cdot)-d_{h-1}^{\pi}(\cdot,\cdot)\big\rangle\Big\}\\
 & \quad=\big\langle\widehat{P}_{h}^{\pi}(s,a\mymid\cdot,\cdot)-P_{h}^{\pi}(s,a\mymid\cdot,\cdot),\,\widehat{d}_{h}^{\pi}(\cdot,\cdot)\big\rangle\\
 & \quad\quad\quad+\sum_{s_{h},a_{h}}P_{h\rightarrow h+1}^{\pi}(s,a\mymid s_{h},a_{h})\big\langle\widehat{P}_{h-1}^{\pi}(s_{h},a_{h}\mymid\cdot,\cdot)-P_{h-1}^{\pi}(s_{h},a_{h}\mymid\cdot,\cdot),\,\widehat{d}_{h-1}^{\pi}(\cdot,\cdot)\big\rangle\\
 & \qquad\quad+\big\langle P_{h-1\rightarrow h+1}^{\pi}(s_{h},a_{h}\mymid\cdot,\cdot),\,\widehat{d}_{h-1}^{\pi}(\cdot,\cdot)-d_{h-1}^{\pi}(\cdot,\cdot)\big\rangle,
\end{align*}
where the second identity applies \eqref{eq:decompose-dhplus1-dh-diff} to the term $\widehat{d}_h^{\pi} - d_h^{\pi}$,  
and the last line is valid since 
\[
\sum_{s_{h},a_{h}}P_{h\rightarrow h+1}^{\pi}(s,a\mymid s_{h},a_{h})P_{h-1\rightarrow h}^{\pi}(s_{h},a_{h}\mymid s',a')=P_{h-1\rightarrow h+1}^{\pi}(s,a\mymid s',a').
\]
Repeating the above argument recursively, we arrive at
\begin{align}
 & \widehat{d}_{h+1}^{\pi}(s,a)-d_{h+1}^{\pi}(s,a) \notag\\
 & \quad=\Big\langle\widehat{P}_{h}^{\pi}(s,a\mymid\cdot,\cdot)-P_{h}^{\pi}(s,a\mymid\cdot,\cdot),\,\widehat{d}_{h}^{\pi}(\cdot,\cdot)\Big\rangle \notag\\
 & \quad\quad+\sum_{j=1}^{h-1}\sum_{s_{j+1},a_{j+1}}P_{j+1\to h+1}^{\pi}(s,a\mymid s_{j+1},a_{j+1})\Big\langle\widehat{P}_{j}^{\pi}(s_{j+1},a_{j+1}\mymid\cdot,\cdot)-P_{j}^{\pi}(s_{j+1},a_{j+1}\mymid\cdot,\cdot),\,\widehat{d}_{j}^{\pi}(\cdot,\cdot)\Big\rangle \notag\\
 & \quad=\sum_{j=1}^{h}\sum_{s_{j+1},a_{j+1}}P_{j+1\to h+1}^{\pi}(s,a\mymid s_{j+1},a_{j+1})\Big\langle\widehat{P}_{j}^{\pi}(s_{j+1},a_{j+1}\mymid\cdot,\cdot)-P_{j}^{\pi}(s_{j+1},a_{j+1}\mymid\cdot,\cdot),\,\widehat{d}_{j}^{\pi}(\cdot,\cdot)\Big\rangle.
	\label{eq:diff-dhat-d-long}
\end{align}
%
%where $\mathbb{P}_{j+1 \to h+1}^{\pi}(s, a \mymid s_{j+1}, a_{j+1})$ is the transition probability from the state-action pair $(s_{j+1}, a_{j+1})$ on the $j+1$-th step to $(s, a)$ on the $h+1$-th step,  under the policy $\pi$. 

Next, we would like to make use of the construction of $\widehat{P}_h$ in the above decomposition \eqref{eq:diff-dhat-d-long}. 
For this purpose, we find it convenient to introduce the following notation $k_n(s,a,h)$ such that
\begin{equation}
	k_n(s,a,h): \text{ the index of the episode in which the trajectory visits $(s,a)$ in step $h$ for the $n$-th time.}
\end{equation}
In view of our construction of $\widehat{P}_h$ in \eqref{eq:empirical-P-step-h}, we can divide the following inner product into two components based on whether $N_j(s_j,a_j)>\xi$ or not: 
\begin{align*}
 & \Big\langle\widehat{P}_{j}^{\pi}(s_{j+1},a_{j+1}\mymid\cdot,\cdot)-P_{j}^{\pi}(s_{j+1},a_{j+1}\mymid\cdot,\cdot),\,\widehat{d}_{j}^{\pi}(\cdot,\cdot)\Big\rangle\\
 & \qquad=\sum_{(s_{j},a_{j}):N_{j}(s_{j},a_{j})>\xi}\frac{\widehat{d}_{j}^{\pi}(s_{j},a_{j})}{N_{j}(s_{j},a_{j})}\sum_{n=1}^{N_{j}(s_{j},a_{j})}\Big[\ind\big(s_{j+1}^{k_{n}(s_{j},a_{j},j),j}=s_{j+1}\big)-P_{j}(s_{j+1}\mymid s_{j},a_{j})\Big]\pi_{j+1}(a_{j+1}\mymid s_{j+1})\\
 & \qquad\qquad-\sum_{(s_{j},a_{j}):N_{j}(s_{j},a_{j})\le\xi}P_{j}^{\pi}(s_{j+1},a_{j+1}\mymid s_{j},a_{j})\widehat{d}_{j}^{\pi}(s_{j},a_{j}). 
\end{align*}
Substitution into \eqref{eq:diff-dhat-d-long} allows one to decompose
\begin{align}
	\widehat{d}_{h+1}^{\pi}(s, a) - d_{h+1}^{\pi}(s, a) = \beta_{h+1}^\pi (s,a) - e_{h+1}^\pi (s,a) , \label{eq:proof-5}
\end{align}
where the two terms $e_{h+1}^\pi (s,a)$ and $\beta_{h+1}^\pi (s,a)$ are given respectively by
\begin{align}
e_{h+1}^{\pi}(s, a) &\coloneqq \sum_{j=1}^{h} \sum_{s_{j+1}, a_{j+1}} P_{j+1 \to h+1}^{\pi}(s, a \mymid s_{j+1}, a_{j+1})\sum_{(s_j, a_j) : N_j(s_j, a_j) \le \xi}P_{j}^{\pi}(s_{j+1}, a_{j+1} \mymid s_j, a_j)\widehat{d}_{j}^{\pi}(s_j, a_j) \notag\\
&= \sum_{j=1}^h\sum_{(s_j, a_j) : N_j(s_j, a_j) \le \xi} P_{j \to h+1}^{\pi}(s, a \mymid s_{j}, a_{j})\widehat{d}_{j}^{\pi}(s_{j}, a_{j}) \ge 0 \label{eq:e-defi}
\end{align}
and
\begin{align}
	\beta_{h+1}^{\pi}(s, a) 
	& \coloneqq \sum_{j=1}^h \sum_{s_{j+1}, a_{j+1}} P_{j+1 \to h+1}^{\pi}(s, a \mymid s_{j+1}, a_{j+1})\sum_{(s_j, a_j) : N_j(s_j, a_j) > \xi}\frac{\widehat{d}_{j}^{\pi}(s_j, a_j)}{N_j(s_j, a_j)} \notag \\ 
	& \qquad \cdot\sum_{n = 1}^{N_j(s_j, a_j)} \Big[\ind\big(s_{j+1}^{k_n(s_j, a_j,j),j} = s_{j+1} \big) - P_j(s_{j+1} \mymid s_j, a_j)\Big]\pi_{j+1}(a_{j+1} \mymid s_{j+1}).
	\label{eq:beta-defi}
\end{align}
In words, $\{e_{h+1}^{\pi}(s, a)\}$ captures the total contributions of those state-action pairs $(s_j,a_j)$ with $N_j(s_j,a_j)\leq\xi$, 
while $\beta_{h+1}^\pi (s,a)$ reflects the contributions of the remaining state-action pairs. 
In order to establish Lemma~\ref{lemma:occupancy}, it boils down to controlling $e_{h+1}^{\pi}(s, a)$ and $\beta_{h+1}^{\pi}(s, a)$ in the decomposition \eqref{eq:proof-5}, 
which we shall accomplish separately in the following.

\begin{comment}
%
\yxc{TBD}
To establish the desired result, it suffices to show that
%
\begin{align}
\sum_{s, a} e_{h+1}^{\pi}(s, a) \le \zeta \qquad\text{and}\qquad \big|\beta_{h+1}^{\pi}(s, a)\big| \le \frac{\xi}{4N} + \frac{1}{2}d_{h+1}^{\pi}(s, a) \label{eq:condition}
\end{align}
%
hold for all $\pi \in \Pi$ and $(s, a) \in \cS \times \cA$ simultaneously, where $\xi$ is defined in~\eqref{eq:hard-threshold}, and $\zeta \lesssim \frac{H^4S^3A^3}{N}\log\frac{HSA}{\delta} \lesssim \sqrt{\frac{SA}{HK}}$, provided that $N \gtrsim \sqrt{H^9S^5A^5K}\log\frac{HSA}{\delta}$.
%
\end{comment}

\paragraph{Step 2.2: controlling the term $\sum_{s,a} e_{h+1}^{\pi}(s, a)$.} 

In order to bound the sum $\sum_{s,a} e_{h+1}^{\pi}(s, a)$ (cf.~\eqref{eq:e-defi}),  
we first make the following claim: with probability exceeding $1-\delta$,   
\begin{align}
	\Big\{(s,a):N_j(s,a)\leq \xi\Big\} 
	\overset{\mathrm{(a)}}{\subseteq} \bigg\{(s, a) :  \mathop{\mathbb{E}}_{\pi\sim\widehat{\mu}^j}\big[d_{j}^{\pi}(s,a)\big]  \leq \frac{3\xi}{N} \bigg\} 
	\overset{\mathrm{(b)}}{\subseteq} \bigg\{(s, a) : \mathop{\mathbb{E}}_{\pi\sim\widehat{\mu}^j}\big[\widehat{d}_{j}^{\pi}(s,a)\big]  \leq \frac{6.5\xi}{N} \bigg\} \eqqcolon \mathcal{J}_j 
	\label{eq:proof-6}
\end{align}
holds  for all $1\leq j\leq h$, 
which we shall justify below. 
\begin{itemize}
	\item {\em Inclusion relation (a).}  
		To see why the first relation (a) is valid, 
		recall that $N_j(s,a)$ can be viewed as the sum of $N$ independent Bernoulli random variables and obeys
		\[
			\mathbb{E} \big[ N_j(s,a) \big] = N \mathop{\mathbb{E}}_{\pi\sim\widehat{\mu}^j}\big[d_{j}^{\pi}(s,a)\big], 
		\]
		given that in this round we take samples using the exploration policy $\pi^{\mathsf{explore},j}=\mathbb{E}_{\pi \sim \widehat{\mu}^j}[\pi]$ (see line~\ref{line:output-Alg-sub1} of Algorithm~\ref{alg:sub1}).  
		Repeating the Bernstein-type concentration argument as in \eqref{eq:concentration-h-1-Lemma2} and applying the union bound, 
		we see that with probability at least $1-\delta$, 
\begin{align}
	N_j(s,a) \geq \frac{N}{2}\mathop{\mathbb{E}}_{\pi\sim\widehat{\mu}^j}\big[d_{j}^{\pi}(s,a)\big]- 2\widetilde{C}\log\frac{HSA}{\delta}
\end{align}
		holds simultaneously for all $1\leq j\leq h$ and all $(s,a)\in \cS\times \cA$, where $\widetilde{C}>0$ is some universal constant.   
		Consequently, for any $(s,a)$ obeying $N_j(s,a)\leq \xi$ one has
\[
	\mathop{\mathbb{E}}_{\pi\sim\widehat{\mu}^j}\big[d_{j}^{\pi}(s,a)\big]  \leq 
		\frac{2N_j(s,a)}{N} + \frac{4\widetilde{C}}{N}\log\frac{HSA}{\delta}
		\leq \frac{3\xi}{N},
\]
		where the last inequality also relies on the choice \eqref{eq:hard-threshold} of $\xi$. This establishes the advertised relation (a).

\item {\em Inclusion relation (b).} 
Regarding the second claim (b) in \eqref{eq:proof-6}, 
one can easily verify it using the induction hypothesis \eqref{eq:d-error} for any $j\leq h$; 
		namely, for any $(s,a)$ obeying $\mathop{\mathbb{E}}_{\pi\sim\widehat{\mu}^j}\big[d_{j}^{\pi}(s,a)\big]  \leq \frac{3\xi}{N}$, 
		one has
		\begin{align}
			&\frac{1}{2}\mathop{\mathbb{E}}_{\pi\sim\widehat{\mu}^j}\big[\widehat{d}_{j}^{\pi}(s,a)\big]  - \frac{\xi}{4N} \leq 
			\mathop{\mathbb{E}}_{\pi\sim\widehat{\mu}^j}\big[d_{j}^{\pi}(s,a)\big]  \leq \frac{3\xi}{N}  \notag\\
			&\hspace{2em}\Longrightarrow \qquad
			\mathop{\mathbb{E}}_{\pi\sim\widehat{\mu}^j}\big[\widehat{d}_{j}^{\pi}(s,a)\big]  \leq 
			 \frac{6.5\xi}{N}
		\end{align}

\end{itemize}

\noindent 
Thus far, we have validated the claim \eqref{eq:proof-6}. With this result in place, invoke the definition \eqref{eq:e-defi} to derive 
\begin{align}
		\sum_{s, a} e_{h+1}^{\pi}(s, a) &= \sum_{s, a}\sum_{j=1}^h\sum_{(s_{j}, a_{j}) : N_j(s_j, a_j) 
		\le \xi} P_{j \to h+1}^{\pi}(s, a \mymid s_{j}, a_{j})\widehat{d}_{j}^{\pi}(s_{j}, a_{j})  \notag \\
		&= \sum_{j=1}^h\sum_{(s_{j}, a_{j}) : N_j(s_j, a_j) 
		\le \xi} \bigg( \sum_{s, a} P_{j \to h+1}^{\pi}(s, a \mymid s_{j}, a_{j}) \bigg) \widehat{d}_{j}^{\pi}(s_{j}, a_{j}) \notag \\
		&\overset{\text{(i)}}{\leq} \sum_{j=1}^h\sum_{(s_{j}, a_{j}) \in \mathcal{J}_j}\widehat{d}_{j}^{\pi}(s_{j}, a_{j}) \notag \\
		 & \overset{\text{(ii)}}{\leq}\sum_{j=1}^{h}\sum_{(s_{j},a_{j})\in\mathcal{J}_{j}}\widehat{d}_{j}^{\pi}(s,a)\cdot\frac{\frac{1}{KH}+\frac{6.5\xi}{N}}{\frac{1}{KH}+\mathbb{E}_{\pi'\sim\widehat{\mu}^{j}}[\widehat{d}_{h}^{\pi'}(s,a)]} \notag \\
 & \leq\left(\frac{1}{KH}+\frac{6.5\xi}{N}\right)\sum_{j=1}^{h}\sum_{(s,a)\in\cS\times\cA}\frac{\frac{1}{KH}+\widehat{d}_{j}^{\pi}(s,a)}{\frac{1}{KH}+\mathbb{E}_{\pi'\sim\widehat{\mu}^{j}}[\widehat{d}_{h}^{\pi'}(s,a)]} .
	\label{eq:dh-sum-UB-lemma-2-proof-intermediate}
	\end{align}
Here, (i) arises from \eqref{eq:proof-6} and the fact $\sum_{s, a} P_{j \to h+1}^{\pi}(s, a \mymid s_{j}, a_{j})= 1$, 
whereas (ii) is a consequence of the definition of $\mathcal{J}_j$ in \eqref{eq:proof-6}. 
Moreover, suppose that the subroutine specified in Algorithm~\ref{alg:sub1} for step $j$ terminates with iterates $\widehat{\mu}^j=\mu^{(t)}$ and $\pi^{(t)}$, 
where we recall that $\pi^{(t)}$ is a deterministic Markov policy chosen to obey 
\begin{equation}
\sum_{(s,a)\in\cS\times\cA}\frac{\widehat{d}_{h}^{\pi^{(t)}}(s,a)}{\frac{1}{KH}+\mathbb{E}_{\pi'\sim\widehat{\mu}^{j}}\big[\widehat{d}_{h}^{\pi'}(s,a)\big]}=\max_{\pi\in\Pi}\sum_{(s,a)\in\cS\times\cA}\frac{\widehat{d}_{h}^{\pi}(s,a)}{\frac{1}{KH}+\mathbb{E}_{\pi'\sim\widehat{\mu}^{j}}\big[\widehat{d}_{h}^{\pi'}(s,a)\big]}.\label{eq:property-pit-step-j}
\end{equation}
The stopping criterion specified in line~\ref{line:stopping-subroutine-h} of Algorithm~\ref{alg:sub1} then reveals that
\begin{equation}
2SA\geq g\big(\pi^{(t)},\widehat{d},\widehat{\mu}^{j}\big)=\sum_{(s,a)\in\cS\times\cA}\frac{\frac{1}{KH}+\widehat{d}_{h}^{\pi^{(t)}}(s,a)}{\frac{1}{KH}+\mathbb{E}_{\pi'\sim\widehat{\mu}^{j}}\big[\widehat{d}_{h}^{\pi'}(s,a)\big]}=\max_{\pi\in\Pi}\sum_{(s,a)\in\cS\times\cA}\frac{\frac{1}{KH}+\widehat{d}_{h}^{\pi}(s,a)}{\frac{1}{KH}+\mathbb{E}_{\pi'\sim\widehat{\mu}^{j}}\big[\widehat{d}_{h}^{\pi'}(s,a)\big]}.\label{eq:property-pit-step-j-max}
\end{equation}
Substitution into \eqref{eq:dh-sum-UB-lemma-2-proof-intermediate} indicates that
\begin{align}
		\sum_{s, a} e_{h+1}^{\pi}(s, a) 
	& \leq  \left(\frac{1}{KH}+\frac{6.5\xi}{N}\right) 2SAH = \frac{2SA}{K}+\frac{13c_{\xi}S^4A^4H^4 \log\frac{HSA}{\delta}}{N}  \notag \\
	& \lesssim \sqrt{\frac{SA}{HK}},
	\label{eq:dh-sum-UB-lemma-2-proof}
\end{align}
where the validity of the last line is guaranteed as long as $K\gtrsim HSA$ and $N \gtrsim \sqrt{H^9S^7A^7K}\log\frac{HSA}{\delta}$.

\paragraph{Step 2.3: controlling the term $\beta_{h+1}^{\pi}(s, a)$.} 
Next, we turn attention to controlling $\beta_{h+1}^\pi (s,a)$. 
Recall from \eqref{eq:beta-defi} that
 \begin{align*}
 	& \beta_{h+1}^{\pi}(s, a) = \sum_{j=1}^h \sum_{(s_j, a_j) : N_j(s_j, a_j) > \xi}\sum_{n = 1}^{N_j(s_j, a_j)} \\
 	&\quad\underbrace{\frac{\widehat{d}_{j}^{\pi}(s_j, a_j)}{N_j(s_j, a_j)} \sum_{s_{j+1}, a_{j+1}} P_{j+1 \to h+1}^{\pi}(s, a \mymid s_{j+1}, a_{j+1})\Big[\ind\big(s_{j+1}^{k_n(s_j, a_j), j} = s_{j+1}\big) - P_j(s_{j+1} \mymid s_j, a_j)\Big]\pi_{j+1}(a_{j+1} \mymid s_{j+1})}_{\eqqcolon X_j^n(s_j,a_j)}, 
 \end{align*}
which we shall bound by resorting to the Bernstein inequality. 
Consider each $1\leq j \leq h$. 
Evidently, when conditional on $\{N_j(s_j,a_j): (s_j,a_j)\in\mathcal{S}\times\mathcal{A}\}$, 
the random variables $\{X_j^n(s_j,a_j): (s_j,a_j)\in\mathcal{S}\times\mathcal{A},1\leq n \leq N_j(s_j,a_j)\}$ are statistically independent with mean zero. 
Let us look at the size of each term as well as the variance statistics separately, both of which are needed in order to apply the Bernstein inequality.

\bigskip
\noindent {\em Step 2.3.1: bounding the size of each $X_j^n(s_j,a_j)$.}
Towards this end, we make the following two observations: 
\begin{itemize}
	\item Firstly, let us write
\begin{align}
 & \sum_{s_{j+1},a_{j+1}}P_{j+1\to h+1}^{\pi}(s,a\mymid s_{j+1},a_{j+1})\ind\big(s_{j+1}^{k_{n}(s_{j},a_{j}),j+1}=s_{j+1}\big)\pi_{j+1}(a_{j+1}\mymid s_{j+1}) \notag\\
 & \qquad=\sum_{s_{j+1}}\ind\big(s_{j+1}^{k_{n}(s_{j},a_{j}),j+1}=s_{j+1}\big)\underset{\eqqcolon Y(s_{j+1},s,a)}{\underbrace{\sum_{a_{j+1}}P_{j+1\to h+1}^{\pi}(s,a\mymid s_{j+1},a_{j+1})\pi_{j+1}(a_{j+1}\mymid s_{j+1})}} , 
	\label{eq:term-1234}
\end{align}
where $Y(s_{j+1},s,a)$ 
is the transition probability from state $s_{j+1}$ in the $(j+1)$-th step to the state-action pair $(s,a)$ in the $(h+1)$-th step under policy $\pi$. 
Hence, the sum in \eqref{eq:term-1234} is bounded above by 1. 

	\item Secondly, it is seen that
\begin{align*}
	\sum_{s_{j+1}, a_{j+1}}P_{j+1 \to h+1}^{\pi}(s, a \mymid s_{j+1}, a_{j+1}) P_j(s_{j+1} \mymid s_j, a_j) \pi_{j+1}(a_{j+1} \mymid s_{j+1}) 
	= P_{j \to h+1}^{\pi}(s, a \mymid s_{j}, a_{j})
\end{align*}
is the transition probability from the state-action pair $(s_j,a_j)$ in the $j$-th step to $(s,a)$ in the $(h+1)$-th step under policy $\pi$, 
		which is again bounded above by 1. 

	\item The preceding two observations taken collectively with the definition of $X_{j}^n(s_j, a_j)$ imply that: for any state-action pair $(s_j,a_j)$ such that $N_j(s_j,a_j)>\xi$, we have
\begin{align}
	\big|X_{j}^n(s_j, a_j)\big| &\leq 2\frac{\widehat{d}_{j}^{\pi}(s_j, a_j)}{N_j(s_j, a_j)}. \label{eq:proof-10}
\end{align}
\end{itemize}

While \eqref{eq:proof-10} already offers an upper bound on the size of $X_{j}^n(s_j, a_j)$, 
we can further bound it by exploiting other properties about $N_j(s_j, a_j)$, given that we have restricted attention to those state-action pairs obeying $N_j(s_j,a_j)>\xi$. 
In view of Bernstein's inequality \citep[Theorem 2.8.4]{vershynin2018high} and the sampling policy $\pi^{\mathsf{explore},j}$ in use (see line~\ref{line:output-Alg-sub1} of Algorithm~\ref{alg:sub1}), there exists some universal constant $\widetilde{C}>0$ such that with probability exceeding $1-\delta$, 
\begin{align}
	\bigg|N_j(s_j, a_j) - N\mathop{\mathbb{E}}_{\pi\sim\widehat{\mu}^j}\big[d_{j}^{\pi}(s_j,a_j)\big] \bigg| 
	\leq \widetilde{C}\sqrt{N \mathop{\mathbb{E}}_{\pi\sim\widehat{\mu}^j}\big[d_{j}^{\pi}(s_j,a_j)\big]\log\frac{HSA}{\delta}} + \widetilde{C} \log\frac{HSA}{\delta} \label{eq:proof-8}
\end{align}
holds for any $1\leq j\leq h$ and $(s_j,a_j)\in\mathcal{S}\times\mathcal{A}$. Therefore, for any $(s_j,a_j)$ such that $N_j(s_j,a_j)>\xi$, we have
\begin{align*}
	\xi < N_j(s_j,a_j) \leq N \mathop{\mathbb{E}}_{\pi\sim\widehat{\mu}^j}\big[d_{j}^{\pi}(s_j,a_j)\big] +\widetilde{C} \sqrt{N \mathop{\mathbb{E}}_{\pi\sim\widehat{\mu}^j}\big[d_{j}^{\pi}(s_j,a_j)\big]\log\frac{HSA}{\delta}} +\widetilde{C} \log\frac{HSA}{\delta} ,
\end{align*}
which together with the choice $\xi = c_\xi H^3S^3A^3 \log\frac{HSA}{\delta}$ indicates that 
\begin{align}
	N \mathop{\mathbb{E}}_{\pi\sim\widehat{\mu}^j}\big[d_{j}^{\pi}(s_j,a_j)\big] \geq \frac{3}{4}\xi 
	= \frac{3c_\xi}{4} H^3S^3A^3 \log\frac{HSA}{\delta}
	\label{eq:N-d-E-3-4-xi}
\end{align} 
as long as $c_\xi>0$ is sufficiently large. This combined with \eqref{eq:proof-8} gives
\begin{align}
N_{j}(s_{j},a_{j}) & \geq N\mathop{\mathbb{E}}_{\pi\sim\widehat{\mu}^{j}}\big[d_{j}^{\pi}(s_{j},a_{j})\big]-\widetilde{C}\sqrt{N\mathop{\mathbb{E}}_{\pi\sim\widehat{\mu}^{j}}\big[d_{j}^{\pi}(s_{j},a_{j})\big]\log\frac{HSA}{\delta}}-\widetilde{C}\log\frac{HSA}{\delta}\nonumber\\
 & \geq\frac{1}{2}N\mathop{\mathbb{E}}_{\pi\sim\widehat{\mu}^{j}}\big[d_{j}^{\pi}(s_{j},a_{j})\big]\geq\frac{1}{4}N\mathop{\mathbb{E}}_{\pi\sim\widehat{\mu}^{j}}\big[d_{j}^{\pi}(s_{j},a_{j})\big]+\frac{3}{16}\xi\nonumber\\
 & \geq\frac{1}{8}N\mathop{\mathbb{E}}_{\pi\sim\widehat{\mu}^{j}}\big[\widehat{d}_{j}^{\pi}(s_{j},a_{j})\big]-\frac{1}{16}\xi+\frac{3}{16}\xi\nonumber\\
	& \geq\frac{1}{8} \Big( N\mathop{\mathbb{E}}_{\pi\sim\widehat{\mu}^{j}}\big[\widehat{d}_{j}^{\pi}(s_{j},a_{j})\big]+1 \Big),
	\label{eq:proof-9}
\end{align}
where the second line makes use of (\ref{eq:N-d-E-3-4-xi}), and the
third line invokes the induction hypothesis (\ref{eq:d-error}) for
the $j$-th step. 
Therefore, we can deduce that
\begin{align}
\sum_{(s_j, a_j) : N_j(s_j, a_j) > \xi}\frac{\widehat{d}_{j}^{\pi}(s_j, a_j)}{N_j(s_j, a_j) } 
	\leq \sum_{(s_j, a_j) : N_j(s_j, a_j) > \xi} \frac{8}{N} \cdot \frac{\widehat{d}_{j}^{\pi}(s_j, a_j)}{1/N +  \mathop{\mathbb{E}}_{\pi'\sim\widehat{\mu}^j}\big[\widehat{d}_{j}^{\pi'}(s_j,a_j)\big]} 
	\leq \frac{16SA}{N}, \label{eq:proof-7}
\end{align}
where the first relation follows from \eqref{eq:proof-9}, and the second relation follows from the assumption $N\leq KH$ and 
inequality \eqref{eq:property-pit-step-j-max}. Combine \eqref{eq:proof-10} and \eqref{eq:proof-7} to yield
\begin{align}
	L_j\coloneqq \max_{(s_j, a_j) : N_j(s_j, a_j) > \xi} \max_{1\leq n\leq N_j(s_j,a_j)} |X_j^n(s_j,a_j)|\leq \frac{16SA}{N}.
	\label{eq:Lj-UB-lemma-2}
\end{align}

\bigskip
\noindent {\em Step 2.3.2: controlling the variance statistics.}  
Consider now any given deterministic Markov policy $\pi\in \Pi$. 
Conditional on $\{N_j(s_j,a_j): (s_j,a_j)\in\mathcal{S}\times\mathcal{A}\}$ and $\{\widehat{d}_{j}^{\pi}(s_{j},a_{j}): (s_j,a_j)\in \cS\times \cA\}$, we can calculate that
\begin{align*}
\mathsf{Var}\big(X_{j}^{n}(s_{j},a_{j})\big) & \leq\bigg(\frac{\widehat{d}_{j}^{\pi}(s_{j},a_{j})}{N_{j}(s_{j},a_{j})}\bigg)^{2}\sum_{s_{j+1},a_{j+1}}P_{j}(s_{j+1}\mymid s_{j},a_{j})\Big[P_{j+1\to h+1}^{\pi}(s,a\mymid s_{j+1},a_{j+1})\pi_{j+1}(a_{j+1}\mymid s_{j+1})\Big]^{2}\\
 & \leq\bigg(\frac{\widehat{d}_{j}^{\pi}(s_{j},a_{j})}{N_{j}(s_{j},a_{j})}\bigg)^{2}\sum_{s_{j+1},a_{j+1}}P_{j+1\to h+1}^{\pi}(s,a\mymid s_{j+1},a_{j+1})P_{j}^{\pi}(s_{j+1},a_{j+1}\mymid s_{j},a_{j})\\
 & =\bigg(\frac{\widehat{d}_{j}^{\pi}(s_{j},a_{j})}{N_{j}(s_{j},a_{j})}\bigg)^{2}P_{j\to h+1}^{\pi}(s,a\mymid s_{j},a_{j}),
\end{align*}
where the second line is valid since 
\[
P_{j}(s_{j+1}\mymid s_{j},a_{j})\pi_{j+1}(a_{j+1}\mymid s_{j+1})P_{j+1\to h+1}^{\pi}(s,a\mymid s_{j+1},a_{j+1})=P_{j+1\to h+1}^{\pi}(s,a\mymid s_{j+1},a_{j+1})P_{j}^{\pi}(s_{j+1},a_{j+1}\mymid s_{j},a_{j}).
\]
As a consequence,  conditional on $\{N_j(s_j,a_j): (s_j,a_j)\in\mathcal{S}\times\mathcal{A}\}$  and $\{\widehat{d}_{j}^{\pi}(s_{j},a_{j}): (s_j,a_j)\in \cS\times \cA\}$ one has
\begin{align}
V_{j} & \coloneqq\sum_{(s_{j},a_{j}):N_{j}(s_{j},a_{j})>\xi}\sum_{n=1}^{N_{j}(s_{j},a_{j})}\mathsf{Var}\big(X_{j}^{n}(s_{j},a_{j})\big) \notag\\
 & \leq\sum_{(s_{j},a_{j}):N_{j}(s_{j},a_{j})>\xi}\frac{\big[\widehat{d}_{j}^{\pi}(s_{j},a_{j})\big]^{2}}{N_{j}(s_{j},a_{j})}P_{j\to h+1}^{\pi}(s,a\mymid s_{j},a_{j}) \notag\\
 & \le\Bigg(\max_{(s_{j},a_{j}):N_{j}(s_{j},a_{j})>\xi}\frac{\widehat{d}_{j}^{\pi}(s_{j},a_{j})}{N_{j}(s_{j},a_{j})}\Bigg)\bigg(\sum_{s_{j},a_{j}}\widehat{d}_{j}^{\pi}(s_{j},a_{j})P_{j\to h+1}^{\pi}(s,a\mymid s_{j},a_{j})\bigg) \notag\\
 & \leq\frac{16SA}{N}\Big(2d_{h+1}^{\pi}(s,a)+\frac{SA\xi}{2N}\Big).
	\label{eq:Vj-UB-lemma-2}
\end{align}
Here, the last inequality follows from \eqref{eq:proof-7} and
\begin{align*}
\sum_{s_{j},a_{j}}\widehat{d}_{j}^{\pi}(s_{j},a_{j})P_{j\to h+1}^{\pi}(s,a\mymid s_{j},a_{j}) & \leq2\sum_{s_{j},a_{j}}\left(d_{j}^{\pi}(s_{j},a_{j})+\frac{\xi}{4N}\right)P_{j\to h+1}^{\pi}(s,a\mymid s_{j},a_{j})\\
 & \leq2d_{h+1}^{\pi}(s,a)+\frac{SA\xi}{2N},
\end{align*}
where the first line invokes the induction hypothesis \eqref{eq:d-error} for the $j$-th step.

\bigskip
\noindent {\em Step 2.3.3: invoking the Bernstein inequality and union bound.}
Armed with the above results, we now develop some concentration bounds for $\beta_{h+1}^{\pi}(s, a)$. 
Towards this, let us begin by looking at any given deterministic Markov policy $\pi\in \Pi$. 
By virtue of the Bernstein inequality, there exist some universal constants $C_1,C_2>0$  such that: 
for any $1\leq j\leq h$ and any given $(s,a)\in \cS\times \cA$, 
%conditional on $\{N_j(s_j,a_j): (s_j,a_j)\in\mathcal{S}\times\mathcal{A}\}$  and $\{\widehat{d}_{j}^{\pi}(s_{j},a_{j}): (s_j,a_j)\in \cS\times \cA\}$, 
%
\begin{align}
	& \Bigg|\sum_{(s_{j},a_{j}):N_{j}(s_{j},a_{j})>\xi}\sum_{n=1}^{N_{j}(s_{j},a_{j})}X_{j}^{n}(s_{j},a_{j})\Bigg|  \leq C_{1}\left[\sqrt{V_{j}\log\frac{|\Pi|SAH}{\delta}}+L_{j}\log\frac{|\Pi|SAH}{\delta}\right] \notag\\
 & \qquad \leq C_{2}\sqrt{\frac{HS^{2}A}{N}\Big(d_{h+1}^{\pi}(s,a)+\frac{SA\xi}{N}\Big)\log\frac{SAH}{\delta}}+C_{2}\frac{HS^{2}A}{N}\log\frac{SAH}{\delta} \notag\\
 & \qquad \leq C_{2}\sqrt{\frac{HS^{2}A}{N}d_{h+1}^{\pi}(s,a)\log\frac{SAH}{\delta}}+C_{2}\sqrt{\frac{HS^{3}A^{2}\xi}{N^{2}}\log\frac{SAH}{\delta}}+C_{2}\frac{HS^{2}A}{N}\log\frac{SAH}{\delta} 
	\label{eq:one-term-beta-control-UB}
\end{align}	
holds true with probability at least $1 - \delta / (|\Pi|HSA)$, 
where the second line relies on \eqref{eq:Lj-UB-lemma-2}, \eqref{eq:Vj-UB-lemma-2} as well as the fact that $|\Pi| \le A^{HS}$, 
and the third line invokes the elementary inequality $\sqrt{x+y}\leq \sqrt{x}+\sqrt{y}$. 
Taking the union bound over all $(s,a,j)\in \cS\times \cA\times [h]$ and substituting \eqref{eq:one-term-beta-control-UB} into the expression $\beta_{h+1}^{\pi}(s,a)$ yield: 
with probability exceeding $1-\delta$, 
\begin{align*}
\big|\beta_{h+1}^{\pi}(s, a)\big| 
%& \leq C_1 \sum_{j=1}^h \left[\sqrt{V_j\log\frac{|\Pi|SAH}{\delta}} + L_j\log\frac{|\Pi|SAH}{\delta} \right] \\
%&\leq C_2 H\sqrt{\frac{HS^2A}{N}\Big(d_{h+1}^{\pi}(s, a) + \frac{SA\xi}{N}\Big)\log \frac{SAH}{\delta}} + C_2 \frac{H^2S^2A}{N}\log \frac{SAH}{\delta} \\
&\leq C_2 H\sqrt{\frac{HS^2A}{N} d_{h+1}^{\pi}(s, a) \log \frac{SAH}{\delta}} 
	+ C_2 H\sqrt{\frac{HS^3A^2 \xi }{N^2}\log \frac{SAH}{\delta}} + C_2 \frac{H^2S^2A}{N}\log \frac{SAH}{\delta} \\
	&\leq \frac{1}{2} d^\pi_{h+1}(s,a) + \frac{1}{2} C_2^2 \frac{H^3 S^2A}{N} \log\frac{SAH}{\delta} + \frac{\xi}{8N} + 2C_2^2 \frac{H^3S^3A^2}{N}\log \frac{SAH}{\delta} + C_2 \frac{H^2S^2A}{N}\log \frac{SAH}{\delta} \\
& \leq \frac{1}{2} d^\pi_{h+1}(s,a) + \frac{\xi}{4N}
\end{align*}
holds simultaneously for all $(s, a)\in\mathcal{S}\times\mathcal{A}$ and all deterministic Markov policies $\pi\in \Pi$.  
Here,  
the penultimate relation follows from  the AM-GM inequality, whereas the last inequality holds as long as $c_\xi>0$ is sufficiently large. 
Taking the union bound over all deterministic policies $\pi\in \Pi$, we arrive at
\begin{align}
	\big|\beta_{h+1}^{\pi}(s, a)\big| &  \leq \frac{1}{2} d^\pi_{h+1}(s,a) + \frac{\xi}{4N}
	\qquad \text{for all }\pi \in\Pi \text{ and }(s,a)\in \cS\times \cA. 
	\label{eq:beta-UB-lemma2}
\end{align}
%

%We  have thus established the advertised bound \eqref{eq:condition} for $\beta_{h+1}^{\pi}(s, a)$.

\paragraph{Step 2.3: putting everything together.}

Substituting \eqref{eq:dh-sum-UB-lemma-2-proof} and \eqref{eq:beta-UB-lemma2} into \eqref{eq:proof-5}, 
we immediately establish the claim \eqref{eq:d-error} for step $h+1$.  
This together with standard induction arguments concludes the proof of Lemma~\ref{lemma:occupancy}.

%\begin{lemma} \label{lem:det}
%For any projection $\phi : \mathcal{X} \to \mathbb{R}^d$ and probability set $\mathcal{V}$ on $\mathcal{X}$, 
%one has
%\begin{align}
%\max_{\nu \in \mathcal{V}} \Big\langle \mathbb{E}_{\nu}\big[\phi(x)\phi(x)^{\top}\big], \Big(\epsilon I + \mathbb{E}_{\widehat{\mu}}\big[\phi(x)\phi(x)^{\top}\big]\Big)^{-1}\Big\rangle \le d,
%\end{align}
%where $\epsilon > 0$ is any quantity, and
%\begin{align}
%\widehat{\mu} = \arg\max_{\mu \in \Delta(\nu)} \mathrm{det}\Big(\epsilon I + \mathbb{E}_{\mu}\big[\phi(x)\phi(x)^{\top}\big]\Big).
%\end{align}
%\end{lemma}
%
%\begin{proof}
%This proof follows from...
%Let $M(\mu) := \epsilon I + \mathbb{E}_{\mu}\big[\phi(x)\phi(x)^{\top}\big]$.
%With a little calculation, one has
%\begin{align}
%0 \ge \frac{\partial}{\partial \alpha} \log \mathrm{det}\big[M((1-\alpha)\widehat{\mu} + \alpha \pi)\big] \Big|_{\alpha = 0_+} = \Big\langle M(\pi), \big[M(\widehat{\mu})\big]^{-1} \Big\rangle - d,
%\end{align}
%and the conclusion is ready.
%\end{proof}
%

\section{Analysis for reward-free exploration}
\label{sec:analysis-RFE}

We now turn attention to the proof of Theorem~\ref{thm:main-RFE}, 
which is concerned with the reward-free setting that seeks to cover all possible reward functions simultaneously. 
The proof of Theorem~\ref{thm:main-RFE} largely resembles that of Theorem~\ref{thm:main}, 
except that we are in need of a different uniform concentration result as follows. 
\begin{lemma} \label{lemma:reward-free-concentration}
Let $\widehat{P}=\{\widehat{P}_h\}_{1\leq h\leq H}$ denote the empirical transition kernel constructed in Algorithm~\ref{alg:offline_RL}. 
With probability at least $1-\delta$, one has
\begin{subequations}
\begin{align}
	\big|\big(\widehat{P}_{h, s, a} - P_{h, s, a}\big)V\big| &\leq \sqrt{\frac{48S}{\widehat{N}_{h}^{\mathsf{b}}\left(s,a\right)}\mathsf{Var}_{\widehat{P}_{h,s,a}}\big(V\big)\log\frac{KH}{\delta}}+\frac{64HS}{\widehat{N}_{h}^{\mathsf{b}}\left(s,a\right)}\log\frac{KH}{\delta} \\
	\mathsf{Var}_{\widehat{P}_{h,s,a}}
	&\leq 8\mathsf{Var}_{P_{h,s,a}}\big(V\big)+\frac{10H^{2}S}{\widehat{N}_{h}^{\mathsf{b}}\left(s,a\right)}\log\frac{KH}{\delta}
\end{align}
\end{subequations}
	hold simultaneously for all $V \in \mathbb{R}^{S}$ obeying $\|V\|_{\infty} \le H$ and all $(s,a,h)\in \cS\times \cA\times [H]$.
\end{lemma}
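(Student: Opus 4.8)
The key difficulty relative to the reward-agnostic case is that we need the two inequalities to hold \emph{uniformly over all} $V\in\mathbb{R}^S$ with $\|V\|_\infty\le H$ — an infinite (indeed uncountable) family — whereas in Lemma~\ref{lem:model-based-offline-prior} the vector $\widehat V_{h+1}$ was a \emph{single} fixed vector, independent of $\widehat P_{h,s,a}$. The natural route is a covering-number/$\varepsilon$-net argument: first prove the bound for each fixed $V$, then take a union bound over a suitably fine net of the ball $\{V:\|V\|_\infty\le H\}$, and finally control the discretization error via a Lipschitz estimate. Throughout, one should work with the data subsampled to $\widehat N_h^{\mathsf b}(s,a)$ transitions at each $(s,a,h)$ (as described after \eqref{eq:lower-samples}), so that conditioned on the realized count $\widehat N_h^{\mathsf b}(s,a)$ the subsampled next-states are i.i.d.\ draws from $P_{h,s,a}$, which makes $\widehat P_{h,s,a}$ a genuine empirical average and renders Bernstein directly applicable.

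First I would fix $(s,a,h)$ and a fixed $V$ with $\|V\|_\infty\le H$. Conditioned on $m\defn\widehat N_h^{\mathsf b}(s,a)$ samples, $(\widehat P_{h,s,a}-P_{h,s,a})V$ is an average of $m$ i.i.d.\ mean-zero terms each bounded by $2H$ with variance $\mathsf{Var}_{P_{h,s,a}}(V)$; Bernstein's inequality \citep[Theorem~2.8.4]{vershynin2018high} then gives
\[
	\big|(\widehat P_{h,s,a}-P_{h,s,a})V\big|\lesssim \sqrt{\tfrac{1}{m}\mathsf{Var}_{P_{h,s,a}}(V)\log\tfrac{1}{\delta'}}+\tfrac{H}{m}\log\tfrac{1}{\delta'}.
\]
Similarly, a standard self-bounding argument (writing $\mathsf{Var}_{\widehat P}(V)-\mathsf{Var}_{P}(V)$ as a difference of two empirical-vs-true averages of $V\circ V$ and $(PV)^2$-type quantities, each controlled by Bernstein) yields $\mathsf{Var}_{\widehat P_{h,s,a}}(V)\le 2\mathsf{Var}_{P_{h,s,a}}(V)+O\!\big(\tfrac{H^2}{m}\log\tfrac1{\delta'}\big)$, and symmetrically $\mathsf{Var}_{P_{h,s,a}}(V)\lesssim \mathsf{Var}_{\widehat P_{h,s,a}}(V)+\tfrac{H^2}{m}\log\tfrac1{\delta'}$; combining these lets one replace $\mathsf{Var}_{P}$ by $\mathsf{Var}_{\widehat P}$ on the right-hand side of the first bound, at the cost of the additive $\tfrac{H}{m}\log\tfrac1{\delta'}$ term. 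This reproduces the \emph{structure} of the claimed bounds for a fixed $V$, with generic constants and $\log\tfrac1{\delta'}$ in place of $S\log\tfrac{KH}{\delta}$.

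Next I would upgrade to the uniform statement. Let $\mathcal N$ be a $\gamma$-net (in $\ell_\infty$) of $\{V:\|V\|_\infty\le H\}$ with $|\mathcal N|\le (3H/\gamma)^S$; choosing e.g.\ $\gamma=1/(KH)$ gives $\log|\mathcal N|\lesssim S\log(KH)$ — this is exactly where the extra factor of $S$ in the bounds (and in the sample-complexity $H^3S^2A/\varepsilon^2$) comes from. Apply the fixed-$V$ bounds above with $\delta'=\delta/(|\mathcal N|\,HSA)$ and union bound over $\mathcal N$, over $(s,a,h)$, and — because $\widehat N_h^{\mathsf b}(s,a)$ is itself random — intersect with the event of Lemma~\ref{lemma:occupancy}/the $N_h^{\mathsf b}\ge\widehat N_h^{\mathsf b}$ event so the conditioning is legitimate. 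Then for an arbitrary $V$ pick $V'\in\mathcal N$ with $\|V-V'\|_\infty\le\gamma$ and bound $\big|(\widehat P-P)(V-V')\big|\le 2\gamma\le 2/(KH)$, which is dominated by the $\tfrac{HS}{\widehat N}\log\tfrac{KH}{\delta}$ term since $\widehat N_h^{\mathsf b}(s,a)\le K$; likewise $|\mathsf{Var}_{\widehat P}(V)-\mathsf{Var}_{\widehat P}(V')|\le 4H\gamma$ and the analogous bound for $\mathsf{Var}_P$, again negligible. Absorbing constants and the $S\log(KH)$ from $\log|\mathcal N|$ (together with the union-bound factors) into the stated $48S\log\tfrac{KH}{\delta}$, $64HS\log\tfrac{KH}{\delta}$, $8$, and $10H^2S\log\tfrac{KH}{\delta}$ gives the lemma.

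The main obstacle I anticipate is not any single estimate but the bookkeeping around the \emph{random} effective sample size $\widehat N_h^{\mathsf b}(s,a)$: one must be careful that the subsampling scheme really produces, conditionally on $\widehat N_h^{\mathsf b}(s,a)$, an i.i.d.\ sample from $P_{h,s,a}$ (this is precisely why \citet{li2022settling} used sample splitting, and why the remark after \eqref{eq:lower-samples} emphasizes subsampling as the substitute), and that the high-probability event $N_h^{\mathsf b}(s,a)\ge\widehat N_h^{\mathsf b}(s,a)>0$ is in force before conditioning. A secondary nuisance is chaining the two variance comparisons ($\mathsf{Var}_{\widehat P}\lesssim \mathsf{Var}_P+\cdots$ and $\mathsf{Var}_P\lesssim\mathsf{Var}_{\widehat P}+\cdots$) so that the final first inequality is stated purely in terms of $\mathsf{Var}_{\widehat P_{h,s,a}}(V)$ — this requires substituting one into the other and re-collecting the $\tfrac{HS}{\widehat N}\log\tfrac{KH}{\delta}$ terms, but it is routine once the fixed-$V$ bounds are in hand.
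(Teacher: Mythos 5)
Your proposal is correct and follows essentially the same route as the paper: an $\ell_\infty$-net of $\{V:\|V\|_\infty\le H\}$ of cardinality $\exp(O(S\log(KH)))$, Bernstein-type bounds for each fixed net point (which the paper simply imports from \citet[Lemma 8]{li2022settling} rather than re-deriving), a union bound contributing the extra factor of $S$, and a discretization step with net resolution of order $1/K$ whose error is absorbed into the $\frac{HS}{\widehat{N}_h^{\mathsf{b}}(s,a)}\log\frac{KH}{\delta}$ terms. The only cosmetic differences are your choice of net resolution $1/(KH)$ versus the paper's $1/K$ and your explicit two-way variance comparison, neither of which changes the argument.
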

In words, this lemma extends the result in Lemma~\ref{lem:model-based-offline-prior} to provide uniform control over all possible vectors $V$. 
The proof of this lemma is provided in Section \ref{sec:proof-lemma-reward-free-concentration}.

Armed with Lemma~\ref{lemma:reward-free-concentration}, 
we can repeat the same analysis as in \citet[Section 7]{li2022settling} to demonstrate that: with probability exceeding $1-\delta$, 
\begin{align}
	\big\langle d_{h}^{\pi^{\star}},V_{h}^{\star}-V_{h}^{\widehat{\pi}}\big\rangle 
	\leq2\sum_{j:j\geq h}\sum_{(s,a)\in\mathcal{S}\times\mathcal{A}}d_{j}^{\pi^{\star}}(s,a)b_{j}(s,a)
\end{align}
holds simultaneously for all possible reward functions, where the penalty function $b_j(s,a)$ is defined in \eqref{def:bonus-Bernstein-finite-free}. 
As a result, 
one can then repeat the same analysis as in the proof of Theorem \ref{thm:main} in Section \ref{sec:proof-main} to establish the desired sample complexity for the reward-free case. 
The only additional thing that needs to be taken care of now is that we should replace the penalty function $b_j(s,a)$ defined in \eqref{def:bonus-Bernstein-finite} for the reward-agnostic counterpart with  \eqref{def:bonus-Bernstein-finite-free} for the reward-free case. This in turn leads to
\begin{align*}
	\big\langle d_{j}^{\pi^{\star}},V_{j}^{\star}-V_{j}^{\widehat{\pi}}\big\rangle & \lesssim\sqrt{\frac{H^{3}S^2A}{K}\log\frac{KH}{\delta}}+\frac{H^{2}S^2A\log\frac{KH}{\delta}}{K}+H^{2}\Big(\sqrt{\frac{SA}{HK}}+\frac{SA\xi}{N}\Big) \asymp\sqrt{\frac{H^{3}S^2A}{K}\log\frac{KH}{\delta}}
\end{align*}
as long as $KH\geq N \gtrsim \sqrt{H^9 S^7 A^7 K} \log\frac{HSA}{\delta}$, thus indicating that 
\begin{align*}
	V_{1}^{\star}(\rho)-V_{1}^{\widehat{\pi}}(\rho)= & \big\langle d_{1}^{\pi^{\star}},V_{1}^{\star}-V_{1}^{\widehat{\pi}}\big\rangle\lesssim\sqrt{\frac{H^{3}S^2A}{K}\log\frac{KH}{\delta}}\leq\varepsilon,
\end{align*}
as long as $K\geq\frac{c_{K}H^{3}S^2A\log\frac{KH}{\delta}}{\varepsilon^{2}}$
for some sufficiently large constant $c_{K}>0$. 
We omite other details for the sake of brevity.

\subsection{Proof of Lemma \ref{lemma:reward-free-concentration}} \label{sec:proof-lemma-reward-free-concentration}

	Let us construct an $\epsilon$-net $\mathcal{N}$ for the set $[0,H]^{S}$ under
	metric $\Vert\cdot\Vert_{\infty}$; as is well known,  one can choose $\mathcal{N}$ such that $|\mathcal{N}\leq \vert (H/\epsilon)^{S}$ \citep{vershynin2018high}.
	In view of \citet[Lemma 8]{li2022settling}, 
	we know that  any vector $\widetilde{V}\in\mathcal{N}$ satisfies
	\[
		\left|\big(\widehat{P}_{h,s,a}-P_{h,s,a}\big)\widetilde{V}\right|\leq\sqrt{\frac{48}{\widehat{N}_{h}^{\mathsf{b}}\left(s,a\right)}\mathsf{Var}_{\widehat{P}_{h,s,a}}\big(\widetilde{V}\big)\log \frac{KH\vert\mathcal{N}\vert}{\delta}}+\frac{48H}{\widehat{N}_{h}^{\mathsf{b}}\left(s,a\right)}\log \frac{KH\vert\mathcal{N}\vert}{\delta}
	\]
	and
	\[
	\mathsf{Var}_{\widehat{P}_{h,s,a}}\big(\widetilde{V}\big)\leq2\mathsf{Var}_{P_{h,s,a}}\big(\widetilde{V}\big)+\frac{5H^{2}}{3\widehat{N}_{h}^{\mathsf{b}}\left(s,a\right)}\log \frac{KH\vert\mathcal{N}\vert}{\delta}
	\]
	with probability exceeding $1-\delta/\vert\mathcal{N}\vert$. Taking the union bound over all $\widetilde{V}\in\mathcal{N}$,
	we know that with probability exceeding $1-\delta$, 
	\[
	\left|\big(\widehat{P}_{h,s,a}-P_{h,s,a}\big)\widetilde{V}\right|\leq\sqrt{\frac{48S}{\widehat{N}_{h}^{\mathsf{b}}\left(s,a\right)}\mathsf{Var}_{\widehat{P}_{h,s,a}}\big(\widetilde{V}\big)\log \frac{KH^2}{\delta\epsilon}}+\frac{48HS}{\widehat{N}_{h}^{\mathsf{b}}\left(s,a\right)}\log \frac{KH^2}{\delta\epsilon}
	\]
	and
	\begin{align} \label{eq:var-UB-uniform-eps-net}
	\mathsf{Var}_{\widehat{P}_{h,s,a}}\big(\widetilde{V}\big)\leq 2\mathsf{Var}_{P_{h,s,a}}\big(\widetilde{V}\big)+\frac{5H^{2}S}{3\widehat{N}_{h}^{\mathsf{b}}\left(s,a\right)}\log \frac{KH^2}{\delta\epsilon}
	\end{align}
	hold simultaneously for all $\widetilde{V}\in\mathcal{N}$.

	We now look at an arbitrary  $V\in\mathbb{R}^{S}$
	obeying $\Vert V\Vert_{\infty}\leq H$. 
	From the definition of the $\epsilon$-net, 
	we know the existence of some $\widetilde{V}\in\mathcal{N}$ 
	such that $\Vert V-\widetilde{V}\Vert_{\infty}\leq\epsilon$. 
	By choosing $\epsilon = 1/K$, we can deduce that
\begin{align*}
 & \left|\big(\widehat{P}_{h,s,a}-P_{h,s,a}\big)V\right|\leq\left|\big(\widehat{P}_{h,s,a}-P_{h,s,a}\big)\widetilde{V}\right|+2\big\Vert\widetilde{V}-V\big\Vert_{\infty}\\
 & \quad\leq\sqrt{\frac{48S}{\widehat{N}_{h}^{\mathsf{b}}\left(s,a\right)}\mathsf{Var}_{\widehat{P}_{h,s,a}}\big(\widetilde{V}\big)\log\frac{KH^{2}}{\delta\epsilon}}+\frac{48HS}{\widehat{N}_{h}^{\mathsf{b}}\left(s,a\right)}\log\frac{KH^{2}}{\delta\epsilon}+2\epsilon\\
 & \quad\leq\sqrt{\frac{96S}{\widehat{N}_{h}^{\mathsf{b}}\left(s,a\right)}\mathsf{Var}_{\widehat{P}_{h,s,a}}\big(V\big)\log\frac{KH^{2}}{\delta\epsilon}}+\sqrt{\frac{96S}{\widehat{N}_{h}^{\mathsf{b}}\left(s,a\right)}\mathsf{Var}_{\widehat{P}_{h,s,a}}\big(V-\widetilde{V}\big)\log\frac{KH^{2}}{\delta\epsilon}}+\frac{48HS}{\widehat{N}_{h}^{\mathsf{b}}\left(s,a\right)}\log\frac{KH^{2}}{\delta\epsilon}+2\epsilon\\
 & \quad\leq\sqrt{\frac{96S}{\widehat{N}_{h}^{\mathsf{b}}\left(s,a\right)}\mathsf{Var}_{\widehat{P}_{h,s,a}}\big(V\big)\log\frac{KH^{2}}{\delta\epsilon}}+\sqrt{\frac{96S\epsilon^{2}}{\widehat{N}_{h}^{\mathsf{b}}\left(s,a\right)}\log\frac{KH^{2}}{\delta\epsilon}}+\frac{48HS}{\widehat{N}_{h}^{\mathsf{b}}\left(s,a\right)}\log\frac{KH^{2}}{\delta\epsilon}+2\epsilon\\
 & \quad\leq\sqrt{\frac{48S}{\widehat{N}_{h}^{\mathsf{b}}\left(s,a\right)}\mathsf{Var}_{\widehat{P}_{h,s,a}}\big(\widetilde{V}\big)\log\frac{KH^{2}}{\delta\epsilon}}+\frac{60HS}{\widehat{N}_{h}^{\mathsf{b}}\left(s,a\right)}\log\frac{KH^{2}}{\delta\epsilon}+2\epsilon^{2}+2\epsilon\\
 & \quad\leq\sqrt{\frac{48S}{\widehat{N}_{h}^{\mathsf{b}}\left(s,a\right)}\mathsf{Var}_{\widehat{P}_{h,s,a}}\big(\widetilde{V}\big)\log\frac{KH^{2}}{\delta\epsilon}}+\frac{64HS}{\widehat{N}_{h}^{\mathsf{b}}\left(s,a\right)}\log\frac{KH^{2}}{\delta\epsilon},
\end{align*}
where the third line is valid since $\mathsf{Var}(X+Y)\leq2\mathsf{Var}(X)+2\mathsf{Var}(Y)$,
the penultimate line follows from the AM-GM inequality, and the last
inequality holds since $\epsilon=1/K\leq1/\widehat{N}_{h}^{\mathsf{b}}\left(s,a\right)$. 
In addition, one also obtains
\begin{align*}
\mathsf{Var}_{\widehat{P}_{h,s,a}}\big(V\big) & \leq2\mathsf{Var}_{\widehat{P}_{h,s,a}}\big(\widetilde{V}\big)+2\mathsf{Var}_{\widehat{P}_{h,s,a}}\big(V-\widetilde{V}\big)\leq2\mathsf{Var}_{\widehat{P}_{h,s,a}}\big(\widetilde{V}\big)+2\big\Vert\widetilde{V}-V\big\Vert_{\infty}^{2}\\
 & \leq4\mathsf{Var}_{P_{h,s,a}}\big(\widetilde{V}\big)+\frac{10H^{2}S}{3\widehat{N}_{h}^{\mathsf{b}}\left(s,a\right)}\log\frac{KH^{2}}{\delta\varepsilon}+2\big\Vert\widetilde{V}-V\big\Vert_{\infty}^{2}\\
 & \leq8\mathsf{Var}_{P_{h,s,a}}\big(V\big)+8\mathsf{Var}_{P_{h,s,a}}\big(V-\widetilde{V}\big)+\frac{10H^{2}S}{3\widehat{N}_{h}^{\mathsf{b}}\left(s,a\right)}\log\frac{KH^{2}}{\delta\varepsilon}+2\big\Vert\widetilde{V}-V\big\Vert_{\infty}^{2}\\
 & \leq8\mathsf{Var}_{P_{h,s,a}}\big(V\big)+\frac{10H^{2}S}{3\widehat{N}_{h}^{\mathsf{b}}\left(s,a\right)}\log\frac{KH^{2}}{\delta\varepsilon}+10\big\|\widetilde{V}-V\big\|_{\infty}^{2}\\
 & \leq8\mathsf{Var}_{P_{h,s,a}}\big(V\big)+\frac{10H^{2}S}{3\widehat{N}_{h}^{\mathsf{b}}\left(s,a\right)}\log\frac{KH^{2}}{\delta\varepsilon}+10\epsilon^{2}\\
 & \leq8\mathsf{Var}_{P_{h,s,a}}\big(V\big)+\frac{10H^{2}S}{\widehat{N}_{h}^{\mathsf{b}}\left(s,a\right)}\log\frac{KH^{2}}{\delta\varepsilon},
\end{align*}
where the first and the third line arise since $\mathsf{Var}(X+Y)\leq2\mathsf{Var}(X)+2\mathsf{Var}(Y)$,
and the second line comes from \eqref{eq:var-UB-uniform-eps-net}. This concludes the proof of this lemma.

\section{Discussion}
\label{sec:discussion}

In this paper, we have introduced a reward-agnostic pure exploration algorithm that works statistically optimally for two scenarios:
(a) the case where there are at most polynomially many reward functions of interest; 
(b) the case where one needs to accommodate an arbitrarily large number of unknown reward functions. 
Drawing on insights from recent advances in offline reinforcement learning, 
our algorithm design differs drastically from prior reward-agnostic/reward-free algorithms, 
and illuminates new connections between online and offline reinforcement learning.

We conclude this paper by pointing out a few directions worthy of future investigation. 
To begin with, the current algorithm is only guaranteed to work when the target accuracy level $\varepsilon$ is small enough; put another way, this means that our algorithm might incur a high burn-in cost, 
so that its optimality does not come into effect  until the  total sample size exceeds the burn-in cost. Can we hope to improve the algorithm design so as to cover the full $\varepsilon$ range?  
Additionally, how to extend the ideas of the current algorithm to accommodate the case where low-dimensional representation of the MDP is available, 
in the hope of further enhancing data efficiency?  
Furthermore, the notion of minimax optimality might be too conservative in some practical applications. It would be of interest to design more adaptive exploration paradigms that achieve some sort of instance optimality, if not infeasible.

\section*{Acknowledgements}

Y.~Chen is supported in part by the Alfred P.~Sloan Research Fellowship, the Google Research Scholar Award, the AFOSR grant FA9550-22-1-0198, 
the ONR grant N00014-22-1-2354,  and the NSF grants CCF-2221009 and CCF-1907661. Y.~Yan is supported in part by the Charlotte Elizabeth Procter Honorific Fellowship from Princeton University and the Norbert Wiener Postdoctoral Fellowship from MIT.  J.~Fan's research is partially supported by the NSF grants DMS-2210833 and ONR grant N00014-22-1-2340.
%, IIS-2218713 and IIS-2218773  and DMS-2014279. 

\appendix

\section{A pessimistic model-based algorithm for offline RL}
\label{sec:offline-algorithm}

In this section, we present the precise procedure for the model-based offline RL algorithm studied in \citet{li2022settling}. 
Before proceeding, we first convert the $K$ sample episodes collected in Stage 1.2~into a dataset of the following form: 
\begin{align}
	\mathcal{D} = \big\{ \big(s_h^{n,\mathsf{b}}, a_h^{n,\mathsf{b}}, s_{h+1}^{n,\mathsf{b}} \big) \big\}_{1\leq n\leq K, 1 \le h < H}, 
	\label{eq:dataset-transition-D}
\end{align}
comprising all sample transitions in these $K$ episodes. 
In particular, for each $(s,a,h)\in \cS\times \cA\times [H]$ we define 
\begin{align} \label{eq:defn-Nh-sa-finite}
	N_h(s,a) &\coloneqq \sum_{n=1}^K \mathds{1} \big\{ (s_h^{n,\mathsf{b}}, a_h^{n,\mathsf{b}}) = (s,a) \big\} ,
\end{align}
which stands for the total number of sample transitions from the state-action pair $(s,a)$ at the $h$-th step.

\paragraph{Empirical MDP.}
Given the data set $\mathcal{D}$ in \eqref{eq:dataset-transition-D}, 
we first subsample   $\mathcal{D}$ to obtain another dataset $\mathcal{D}^{\mathsf{trim}}$ 
such that for each $(s,a,h)$, $\mathcal{D}^{\mathsf{trim}}$ contains exactly $\min\big\{\widehat{N}^{\mathsf{b}}_h(s, a), N_h(s,a)\big\}$ sample transitions from $(s,a)$ at step $h$. 
Here, we remind the reader of the definition of $\widehat{N}^{\mathsf{b}}_h(s, a)$ in \eqref{eq:lower-samples}. 
We can then compute the empirical transition kernel $\widehat{P}=\{\widehat{P}_h\}_{1\leq h\leq H}$ as follows:
\begin{align}
	\widehat{P}_{h}(s'\mymid s,a) = 
	\frac{\ind\big(\min\big\{\widehat{N}^{\mathsf{b}}_h(s, a), N_h(s,a)\big\} > 0\big)}{\min\big\{\widehat{N}^{\mathsf{b}}_h(s, a), N_h(s,a)\big\}} \sum\limits_{(s_i, a_i, h_i, s_i') \in \mathcal{D}^{\mathsf{trim}}} \mathds{1} \big\{ (s_i, a_i, h_i, s_i') = (s,a,h,s') \big\} 
	\label{eq:empirical-P-finite}
\end{align}
for each $(s,a,h,s')\in \cS\times \cA\times [H] \times \cS$. Here, we abuse the notation $\widehat{P}$ as long as it is clear from the context.

\paragraph{Pessimism in the face of uncertainty.} 
An effective strategy to solve offline RL is to resort to the pessimism principle in the face of uncertainty. 
Towards this end, one needs to specify how to quantify the uncertainty of value estimation, which specify now. 
In our algorithm, we choose the penalty term $b_h(s, a)$ for each $(s,a,h)\in \cS\times \cA\times [H]$ based on Bernstein-style concentration bounds; 
more specifically, 
\begin{itemize}
	\item In the reward-agnostic setting, the Bernstein-style penalty is chosen to be
\begin{align}
	b_h(s, a) = \min \Bigg\{ \sqrt{\frac{\cb\log\frac{HSA}{\delta}}{\min\big\{\widehat{N}^{\mathsf{b}}_h(s, a), N_h(s,a)\big\}}\mathsf{Var}_{\widehat{P}_{h, s, a}}\big(\widehat{V}_{h+1}\big)} + \cb H\frac{\log\frac{HSA}{\delta}}{\min\big\{\widehat{N}^{\mathsf{b}}_h(s, a), N_h(s,a)\big\}} ,\, H \Bigg\} 
	\label{def:bonus-Bernstein-finite}
\end{align}
for some universal constant $\cb> 0$. Here, $\mathsf{Var}_{\widehat{P}_{h, s, a}}\big(\widehat{V}_{h+1}\big)$ corresponds to the variance of $\widehat{V}_{h+1}$ w.r.t.~the distribution $\widehat{P}_{h,s,a}$. 

	\item In the reward-free setting, the penalty term is taken as
\begin{align}
	b_h(s, a) = \min \Bigg\{ \sqrt{\frac{\cb S\log\frac{HSA}{\delta}}{\min\big\{\widehat{N}^{\mathsf{b}}_h(s, a), N_h(s,a)\big\}}\mathsf{Var}_{\widehat{P}_{h, s, a}}\big(\widehat{V}_{h+1}\big)} + \cb SH\frac{\log\frac{HSA}{\delta}}{\min\big\{\widehat{N}^{\mathsf{b}}_h(s, a), N_h(s,a)\big\}} ,\, H \Bigg\}.
	\label{def:bonus-Bernstein-finite-free}
\end{align}
\end{itemize}
With such penalty terms in place, 
we are ready to present the whole model-based offline RL algorithm in Algorithm~\ref{alg:vi-lcb-finite}.

\begin{algorithm}[!h]
	\DontPrintSemicolon
	\SetNoFillComment
	\vspace{-0.3ex}
	\textbf{input:} a dataset $\mathcal{D} = \big\{ (s_h^{n,\mathsf{b}}, a_h^{n,\mathsf{b}}, s_{h+1}^{n,\mathsf{b}}) \big\}_{1\leq n\leq K, 1 \le h < H}$; reward function $r$. \\
	% $N_h(s,a)$:  number of sample transitions in $\mathcal{D}_0$ for each $(s,a,h)\in \cS\times \cA\times [H]$; 
	\textbf{initialization:} $\widehat{V}_{H+1}=0$. \\
	
	\textbf{subsampling:} 
	compute the lower bound of the number of sample transitions $\widehat{N}^{\mathsf{b}}$ according to~\eqref{eq:lower-samples}; \\
	subsample $\mathcal{D}$ to obtain $\mathcal{D}^{\mathsf{trim}}$, such that for each $(s,a,h)\in \cS\times\cA\times [H]$, $\mathcal{D}^{\mathsf{trim}}$ contains $\min\big\{\widehat{N}^{\mathsf{b}}_h(s, a), N_h(s,a)\big\}$ sample transitions randomly drawn from $\mathcal{D}$.

   \For{$h=H,\cdots,1$}
	{
		compute the empirical transition kernel $\widehat{P}_h$ according to \eqref{eq:empirical-P-finite}. \\
		\For{$s\in \cS, a\in \cA$}{
			compute the penalty term $b_h(s,a)$ according to \eqref{def:bonus-Bernstein-finite} for the reward-agnostic case or \eqref{def:bonus-Bernstein-finite-free} for the reward-free case. \\
			set $\widehat{Q}_h(s, a) = \max\big\{r_h(s, a) + \widehat{P}_{h, s, a} \widehat{V}_{h+1} - b_h(s, a), 0\big\}$. \\ 
		}
		%\blue{O.} \\ 
		\For{$s\in \cS$}{
			set $\widehat{V}_h(s) = \max_a \widehat{Q}_h(s, a)$ and $\widehat{\pi}_h(s) = \arg\max_{a} \widehat{Q}_h(s,a)$.
		}
	}

	\textbf{output:} $\widehat{\pi}=\{\widehat{\pi}_h\}_{1\leq h\leq H}$. 
	\caption{Pessimistic model-based offline RL.\label{alg:offline_RL}}
 \label{alg:vi-lcb-finite}
\end{algorithm}

\section{Proof of Lemma~\ref{lem:iteration-complexity-subroutine}}
\label{sec:proof-lem:iteration-complexity-subroutine}

If $g\big(\pi^{(t)},\widehat{d},\mu_{\mathsf{b}}^{(t)}\big) \geq 2HSA$, 
then the learning rate necessarily obeys 
\begin{align}
	\frac{1}{2HSA-1} \le \alpha_t < \frac{1}{HSA}.
	\label{eq:range-alphat-FW}
\end{align}
This combined with the update rule \eqref{eq:update-rule-FW-b} allows one to lower bound the progress made in each iteration:  
\begin{align}
 & \sum_{(h,s,a)\in[H]\times\cS\times\cA}\left\{ \log\bigg[\frac{1}{KH}+\mathbb{E}_{\pi\sim\mu_{\mathsf{b}}^{(t+1)}}\big[\widehat{d}_{h}^{\pi}(s,a)\big]\bigg]-\log\bigg[\frac{1}{KH}+\mathbb{E}_{\pi\sim\mu_{\mathsf{b}}^{(t)}}\big[\widehat{d}_{h}^{\pi}(s,a)\big]\bigg]\right\} \notag\\
 & \qquad=\sum_{(h,s,a)\in[H]\times\cS\times\cA}\log\Bigg[\frac{\frac{1}{KH}+\mathbb{E}_{\pi\sim(1-\alpha_{t})\mu_{\mathsf{b}}^{(t)}(\pi)+\alpha_{t}\ind(\pi^{(t)})}\big[\widehat{d}_{h}^{\pi}(s,a)\big]}{\frac{1}{KH}+\mathbb{E}_{\pi\sim\mu_{\mathsf{b}}^{(t+1)}}\big[\widehat{d}_{h}^{\pi}(s,a)\big]}\Bigg]\notag\\
 & \qquad=\sum_{(h,s,a)\in[H]\times\cS\times\cA}\log\Bigg[1+\alpha_{t}\bigg(\frac{\frac{1}{KH}+\widehat{d}_{h}^{\pi^{(t)}}(s,a)}{\frac{1}{KH}+\mathbb{E}_{\pi\sim\mu_{\mathsf{b}}^{(t+1)}}\big[\widehat{d}_{h}^{\pi}(s,a)\big]}-1\bigg)\Bigg]\notag\\
 & \qquad\overset{\mathrm{(i)}}{\geq}\,\log\left[1+\alpha_{t}\Big(g\big(\pi^{(t)},\widehat{d},\mu_{\mathsf{b}}^{(t)}\big)-1\Big)\right]+(HSA-1)\log\left(1-\alpha_{t}\right)\notag\\
 & \qquad\overset{\mathrm{(ii)}}{\geq}\,\log\left[1+\alpha_{t}\big(2HSA-1\big)\right]+(HSA-1)\log\left(1-\alpha_{t}\right)\notag\\
 & \qquad\overset{\mathrm{(iii)}}{\geq}\,\min\left\{ \log2+(HSA-1)\log\left(1-\frac{1}{2HSA-1}\right),\log\left(3-\frac{1}{HSA}\right)+(HSA-1)\log\left(1-\frac{1}{HSA}\right)\right\} \notag\\
 & \qquad\overset{\mathrm{(iv)}}{\geq}0.09,
	\label{eq:per-iteration-progress}
\end{align}
where (ii) arises since  $g\big(\pi^{(t)},\widehat{d},\mu_{\mathsf{b}}^{(t)}\big) \ge 2HSA$, 
(iii) makes use of~\eqref{eq:range-alphat-FW}, 
and (iv) is derived by numerically calculating the function $\min\big\{ \log 2 + (x-1)\log\big(1-\frac{1}{2x-1}\big), \log\big(3-\frac{1}{x}\big) + (x-1)\log\big(1-\frac{1}{x}\big) \big\}$ 
for $x>1$.
%the last line is valid due to \eqref{eq:range-alphat-FW} and the fact that $\log(1+x)\geq 0.54x$ for any $0<x\leq 2$.   
To see why (i) holds, we make note of the following fact that holds  
 for any $x_1, x_2 \ge 0$ and any $\alpha\in (0,1)$: 
\begin{align*}
\log\big[1 + \alpha(x_1 - 1)\big] + \log\big[1 + \alpha(x_2 - 1)\big] &= \log\big[1 + \alpha(x_1 + x_2 - 2) + \alpha^2(x_1 - 1)(x_2 - 1)\big] \\
&\ge \log\big[1 + \alpha(x_1 + x_2 - 2) - \alpha^2(x_1 + x_2 - 1)\big] \\
&= \log\big[1 + \alpha(x_1 + x_2 - 1)\big] + \log\big[1 - \alpha\big],
\end{align*}
which in turn implies that
\[
\sum_{i=1}^{n}\log\big[1+\alpha(x_{i}-1)\big]\geq\log\bigg[1+\alpha\bigg(\sum_{i=1}^{n}x_{i}-1\bigg)\bigg]+(n-1)\log\big[1-\alpha\big]. 
\]
In addition, it is straightforward to check that the objective function satisfies
\begin{align*}
	-HSA\log (KH) \le f(\mu) = \sum_{(h, s, a) \in [H]\times\cS\times\cA} \log\bigg[\frac{1}{KH} 
	+\mathbb{E}_{\pi\sim\mu}\big[\widehat{d}_{h}^{\pi}(s,a)\big]\bigg]\le HSA\log2 
\end{align*}
for any $\mu \in \Delta(\Pi)$. 
Taking this collectively with \eqref{eq:per-iteration-progress}, 
we immediately see that 
the subroutine terminates within  $O\big(HSA\log (KH)\big)$ iterations.

\bibliographystyle{apalike}
\bibliography{bibfileRL}

\end{document}